\definecolor{myblue}{RGB}{0, 0, 139}
\definecolor{mylightblue}{RGB}{100, 149, 237}
\newtheorem{lemma}{Lemma}
\newcommand{\Rd}{\mathbb{R}^d}
\newcommand{\R}{\mathbb{R}}
\newcommand{\E}{\mathbb{E}}
\begin{document}
\maketitlebox
\section{Introduction}\label{sec:intro}
Mathematical physics is a cornerstone of modern science. It provides powerful tools for theoretical studies and finds applications in a wide range of practical fields. One of its central challenges is solving partial differential equations (PDEs) \citep{bateman1932partial, evans2022partial}. They arise in the formal description of phenomena ranging from heat diffusion to quantum mechanics and typically take the form of a boundary value problem involving differential operators on some domain \citep{yakubov1999differential}. Generally, there is a system
\begin{align}\label{eq:problem}
\begin{split}
    &\quad\mathcal{R}_i[u](x)=f_i(x),~i\in[1,M_r],~x\in\Omega;\\
    &\mathcal{B}_j[u](x)=g_j(x),~j\in[M_{r+1},M],~x\in\partial\Omega,
\end{split}
\end{align}
where $f_i,g_i:\Rd\to\R$ are the scalar functions; $\mathcal{R}_i[u],\mathcal{B}_i[u]:\R^d\to\R$ are the operators actions on the mapping $u:\Rd\to\R^m$; $\Omega\subset\R^d$ and $\partial\Omega\subset\R^{d-1}$ are the domain set and its boundary, respectively. Since exact solutions are rare outside idealized cases, the community is focused on developing numerical methods. Among the most established techniques are those based on finite differences \citep{courant1967partial}, volumes \citep{patankar1983calculation}, and elements \citep{courant1994variational}. Despite their widespread use, traditional approaches suffer from several limitations. First, they require solving a system of linear equations at each iteration \citep{saad2000iterative} and are therefore computationally complex. Moreover, specifying a grid poses challenges when the geometry of $\Omega$ is non-trivial \citep{babuvska1976angle, duster2008finite}. Finally, they either lack stability or exhibit poor approximation quality \citep{strang1971analysis, arnold2002unified}. These factors prompt researchers to explore alternatives, including machine learning techniques that have emerged in this field \citep{guo2016convolutional, zhu2018bayesian, yu2018deep}. Although the concept of approximating the solution with a parametrized function $u(\theta)$ is quite old and dates back to the works of \citet{meade1994numerical, dissanayake1994neural, lagaris1998artificial}, it has only recently gained attention under the name \textit{PINN (physics-informed neural network)} \citep{raissi2019physics}. Neural networks demonstrate good performance in approximating complex dependencies, including high-dimensional problems \citep{cybenko1989approximation, cheridito2021efficient}. Moreover, they appear promising in overcoming the aforementioned bottlenecks of grid-based numerical approaches \citep{li2020fourier}. While initial results in this area were obtained using \textit{MLP}s, advanced architectures such as learned activations \citep{jagtap2020locally, jagtap2020adaptive}, memory \citep{krishnapriyan2021characterizing, cho2023hypernetwork} and attention \citep{zhao2023pinnsformer, anagnostopoulos2024residual} have led to significant improvements. Typical of AI-based solutions, \textit{PINN}s are trained through empirical risk minimization (ERM) \citep{raissi2019physics}:
\begin{align*}
    \min_{\theta\in\R^d}&\left[\mathcal{L}(\theta)=\sum_{i=1}^{M_r}\mathcal{L}_{r,i}(\theta)~+\!\sum_{j={M_r+1}}^{M}\!\mathcal{L}_{r,j}(\theta)\right],\\
    &\text{with }\mathcal{L}_{r,i}(\theta)=\frac{1}{N_{r}}\sum_{n=1}^{N_{r}}\left[\mathcal{R}_i[u(\theta)](x^n_r)-f(x_r^n)\right],\\
    &\quad\quad\mathcal{L}_{b,j}(\theta)=\frac{1}{N_{b}}\sum_{n=1}^{N_{b}}\left[\mathcal{B}_j[u(\theta)](x_b^n) -g(x_b^n)\right],
\end{align*}
where $\{x_r^n\}_{n=1}^{N_r},~\{x_r^j\}_{j=1}^{N_b}$ are the sets of samples belonging to the interior and boundary of $\Omega$, respectively; $N_r,~N_b$ are the sizes of the corresponding datasets.

Despite the successes, \textit{PINN}s bring their own challenges. Training them via solving the problem \eqref{eq:problem} is a special case of multi-task learning \citep{zhang2021survey}. Indeed, a single model is trained to approximate all the operators simultaneously. However, they may be of a different nature. Hence, there is no guarantee that $\arg\min_{\theta\in\R^d}\mathcal{L}(\theta)$ minimizes all $\mathcal{L}_{r,i}(\theta)$ and $\mathcal{L}_{b,j}(\theta)$ individually. In practice, their corresponding gradients $\nabla\mathcal{L}_{r,i}(\theta)$, $\nabla\mathcal{L}_{b,j}(\theta)$ have dissimilar magnitudes (see Figure 2 in \citep{hwang2024dual}). Consequently, some losses are ignored during optimization. As a result, the solution is well approximated only on the boundary or only inside the domain (see Figure 1 in \citep{hwang2024dual}). Moreover, the loss landscape is poor \citep{krishnapriyan2021characterizing}. This makes it impossible to guarantee the efficient performance of basic optimizers. Despite significant interest in the area, there remains no universally effective approach for training \textit{PINN}s. A scheme that performs well for one PDE may turn out to be inadequate for another (see Table 3 in \citep{hao2023pinnacle}). Selecting an appropriate optimizer often requires case-by-case tuning. Nevertheless, one can distinguish a class of approaches that are particularly successful for training neural networks for the problem \eqref{eq:problem}. A common theme is the use of coefficients $\pi=(\pi_1,\ldots,\pi_M)^\top$ to balance competing losses for $\mathcal{R}_i[u]$, $\mathcal{B}_j[u]$. Numerous approaches to weighting are known in the literature \citep{wang2021understanding, jin2021nsfnets, wang2022and, son2023enhanced, hwang2024dual}. In our work, we consider training \textit{PINN} as a saddle-point problem (SPP):
\begin{align}\label{eq:pinn_saddle}
    \min_{\theta\in\R^d}\max_{\pi\in S}\left[\mathcal{L}(\theta, \pi)=\sum_{i=1}^{M_r}\pi_i\mathcal{L}_{r,i}(\theta)~+\!\sum_{j={M_r+1}}^{M}\!\pi_j\mathcal{L}_{r,j}(\theta) - \lambda D_{\psi}(\pi||\hat{\pi}) \right],
\end{align}
where $D_{\psi}(\cdot||\hat{\pi})$ is the Bregman divergence \citep{nemirovskij1983problem}. In some cases, it offers a better description of distance than the Euclidean metric. For example, if $S$ is a probabilistic simplex, then $D_{\psi}(\pi,\hat{\pi})=\text{KL}(\pi,\hat{\pi})\coloneqq\sum_{i=1}^M\pi_i\ln\left(\nicefrac{\pi_i}{\hat{\pi}_i}\right)$. This method of measuring distances is preferable, particularly because it accounts for the relative rather than absolute change in weights. The use of $D_{\psi}(\cdot||\hat{\pi})$ to regularize \eqref{eq:pinn_saddle} prevents overemphasizing the importance of any operator. A similar methodology was considered in \citep{liu2021dual}. However, the authors provided no theoretical guarantees and examined the Euclidean case, which is unsuitable due to the complex geometry of $S$. At the same time, choosing $S=\Rd$ leads to unstable training, as the weights in such a case can grow indefinitely, making this approach impractical. Thus, the problem \eqref{eq:pinn_saddle} remains unexplored for \textit{PINN}s. In this work, we overcome both theoretical and practical challenges to investigate the feasibility of training physical neural networks as SPPs.
\section{Related Works} 

\subsection{Loss Rescaling in General Case}

Earlier, we mentioned that training a physics-informed neural network is a special case of multi-task learning, where various rescaling techniques had been developed by the time of the emergence of \textit{PINN}s. \citet{chen2018gradnorm} suggested treating the weights as trainable functions $\pi_m(\hat\theta)$. They defined a separate loss such that the norm of a single task gradient $\nabla(\pi_m(\hat{\theta})\mathcal{L}_{\_,i}(\theta))$ is close to the sum of the other gradients. A similar approach was explored in \citep{kendall2018multi}. However, using neural networks to evaluate the parameters leads to increased memory consumption. As a consequence, the community has developed a number of computationally less expensive techniques. \citet{sener2018multi} proposed solving a quadratic optimization problem on a unit simplex to determine $\{\pi_m\}_{m=1}^M$. Furthermore, approaches that calculate weights via zero- and first-order statistics have gained attention due to their combination of efficiency and quality \citep{liu2019end, yu2020gradient, heydari2019softadapt, chen2018gradnorm, wang2020gradient}. 
\subsection{Loss Rescaling in \textit{PINN}s}

The unique challenges posed by PDEs and physical constraints motivated the development of weighting techniques specifically for \textit{PINN}s. \citet{wang2021understanding} were among the first in this direction. Inspired by ideas behind \texttt{Adam} \citep{kingma2014adam}, they proposed a learning rate annealing procedure that automatically tunes $\{\pi_m\}_{m=1}^M$ by utilizing the back-propagated gradient statistics. To mitigate the high variance inherent in the stochastic nature of updates, the authors suggested computing the actual weights as a running average of their previous values. This scheme was then understood in greater depth \citep{jin2021nsfnets, maddu2022inverse, bischof2025multi}. As an orthogonal approach, in \citep{wang2022and}, loss rescaling was addressed from a neural tangent kernel perspective. Despite the advances, it may be computationally expensive. Indeed, the use of the Jacobian poses a challenge when solving nonlinear equations, as it is not constant in that case \citep{bonfanti2024challenges}. In parallel to these commonly used approaches, a number of exotic non-benchmarked techniques exist. For example, schemes based on likelihood \citep{xiang2022self, hou2023enhancing}, augmented Lagrangian \citep{son2023enhanced} and conjugate cone \citep{hwang2024dual}. 

\subsection{Nonconvex-Strongly Concave SPPs}
The theory of SPPs is constructed mostly for convex-concave objectives \citep{korpelevich1976extragradient, nemirovski2004prox, du2019linear, adolphs2019local, beznosikov2023unified}. However, the problem \eqref{eq:pinn_saddle} falls outside of this class. Indeed, complex nature of differential operators implies a poor non-convex landscape in $\theta$. On the other hand, in terms of the weights $\pi$, $\mathcal{L}(\theta,\pi)$ is a regularized linear function, and hence is guaranteed to be strongly concave regardless of the PDE being solved. Nonconvex-concave (N-C) and nonconvex-strongly concave (N-SC) SPPs remain poorly understood. Today's research focuses on modifying two-timescale gradient descent-ascent (\texttt{TT-GDA}), which has demonstrated success in training GANs \citep{heusel2017gans}. Using a double-loop scheme, \citet{nouiehed2019solving} achieved a $\varepsilon$-solution in $\tilde{\mathcal{O}}\left(\nicefrac{\kappa^4}{\varepsilon^2}\right)$ iterations, where $\kappa$ denotes the condition number of the objective in the concave component. Assuming $\max$-oracle to be available, \citet{jin2019minmax} improved this result to $\tilde{\mathcal{O}}\left(\nicefrac{\kappa^2}{\varepsilon^2}\right)$. In parallel, several triple-loop techniques for N-C problems were developed \citep{thekumparampil2019efficient, kong2021accelerated}. 
However, algorithms with nested loops are challenging to implement and tune in practice. This is supported by the observation that the mentioned papers consider simple problems (e.g. classification on \textit{MNIST}) for their experiments. At the same time, providing a theoretical analysis directly to \texttt{TT-GDA} posed a challenge. This was finally done in \citep{lin2020gradient} with a complexity of $\mathcal{O}\left(\nicefrac{\kappa^2}{\varepsilon^2}\right)$. Later, the result was generalized by \citet{xu2023unified}. They provided unified analysis of single-loop schemes for N-C problems. 

A key drawback of the mentioned methods is the Euclidean setting. This may be inappropriate for describing the geometry of $S$ in the problem \eqref{eq:pinn_saddle}, as it is typically defined as a bounded set to maintain balance during training \citep{mohri2019agnostic, mehta2024drago}. Consequently, there is interest in searching for alternatives. \citet{huang2021efficient} considered a setup that is non-Euclidean in the non-convex component and Euclidean in the strongly concave one. However, in our paper, we need the opposite. Indeed, in the problem \eqref{eq:pinn_saddle}, $\theta$ lies in $\Rd$ and is therefore suited to the Euclidean distance, while $\pi$ demands a more complicated description. Thus, this work is not suitable for our purposes, although it provides useful intuition. \citet{boroun2023projection} employed Frank-Wolfe \citep{jaggi2013revisiting} to perform both ascent and descent steps. However, exploiting non-regularized linear approximation yields sparse values of $\{\pi_m\}_{m=1}^M$, which may result in unstable convergence.

\section{Our Contribution}
We suggest that \textit{PINN}s could achieve better results when 
trained as the saddle-point problem of the form \eqref{eq:pinn_saddle}. The paper presents a comprehensive theoretical and empirical analysis of this approach. Our key contributions are:
\begin{itemize}
    \item \textbf{Theoretical foundation.} Studying nonconvex-strongly concave SPPs with non-Euclidean geometry of the strongly concave component, we propose a method based on a suitable Bregman proximal mapping. We develop a rigorous theory, providing guarantees on optimization dynamics.
    \item \textbf{Extensive Empirical Validation.} Conducting experiments on benchmark PDEs, we demonstrate that our approach improves the quality compared to existing optimizers and enhances training stability (See Table \ref{tab:comparison_contr} for some of the results). Once being tuned, the proposed algorithm achieves SOTA results on almost all benchmark PDEs. At the same time, previously, each task had its own dominant method for each type of problem: \texttt{LRA} \citep{wang2021understanding} for \textit{Poisson} and \textit{High dim}, \texttt{RAR} \citep{lu2021deepxde} for \textit{Heat}, \texttt{NTK} \citep{wang2022and} for \textit{Wave}, and \texttt{Adam} \citep{kingma2014adam} for \textit{Navier-Stokes}.
    \item \textbf{Practical Insights.} We provide in-depth guidelines for practical application of our method.
\end{itemize}
\begin{table}[htbp]
\centering
\renewcommand{\arraystretch}{1.3}
\setlength{\tabcolsep}{0pt} 
\begin{tabularx}{\linewidth}{@{} l *{5}{>{\centering\arraybackslash}X} @{}}
\toprule
\textbf{Approach} & \textbf{Poisson} & \textbf{Heat} & \textbf{Navier-Stokes} & \textbf{Wave} & \textbf{High dim} \\
\midrule
Previous best & 1.02E-1 & 2.72E-2 & 4.70E-2 & 9.79E-2 & 4.58E-4 \\
\rowcolor{myblue!20}
This paper & \textbf{4.78E-2} & \textbf{1.01E-2} & \textbf{2.24E-2} & \textbf{1.62E-2} & \textbf{1.20E-4} \\
\bottomrule
\end{tabularx}
\caption{Comparison of SOTA results with the proposed approach. \textbf{L2RE} is used as a quality metric (lower values indicate better performance).}
\label{tab:comparison_contr}
\end{table}
\vspace{-0.5cm}
\section{Setup}
\subsection{Assumptions}
Since our study is motivated by the real-world problem, we avoid introducing unnecessary limitations and address the most general case possible. First, we require the objective to be smooth with respect to the Euclidean norm. 
\begin{assumption}\label{ass:theta}
    The function $\mathcal{L}(\theta,\pi)$ is $L$-smooth, i.e. for all $(\theta_1,\pi_1), (\theta_2,\pi_2) \in \Rd\times S$ it satisfies 
    \begin{align*}
        \|\nabla \mathcal{L}(\theta_1,\pi_1) - \nabla \mathcal{L}(\theta_2,\pi_2)\|^2 \leq L^2\left(\|\theta_1-\theta_2\|^2+\|\pi_1-\pi_2\|^2\right).
    \end{align*}
\end{assumption}
This is the standard assumption that is widely spread in the optimization literature \citep{adolphs2019local, lin2020gradient}.
To enable more accurate selection of the weights $\pi$, we account for the geometry of $S$ by utilizing the Bregman divergence \citep{nemirovskij1983problem}.
\begin{definition}\label{def:Bregman}
    The Bregman divergence corresponding to the \textit{distance generating function} $\psi:S\to\R$ is defined as
    \begin{align*}
    D_{\psi}(\pi_1,\pi_2) = \psi(\pi_1) - \psi(\pi_2) -\langle \nabla \psi(\pi_2), \pi_1-\pi_2\rangle,~\forall \pi_1, \pi_2 \in S.
    \end{align*}
\end{definition}
Earlier, we mentioned the example where $D_{\psi}$ is the Kullback-Leibler divergence. This is particularly significant for the purposes of this paper, as we choose $S$ as the unit simplex. However, the theory is established in the general case. Analysis of the problem \eqref{eq:pinn_saddle} requires $D_{\psi}$ to have several basic properties. In particular, Definition \ref{def:metric} is valid only if $D_{\psi}$ is bounded from below on $S$. In the following, we present an assumption regarding the distance generating function.
\begin{assumption}\label{ass:p}
        The function $\psi$ is \textbf{1-strongly convex}, i.e. for all $\pi_1,\pi_2\in S$ it satisfies
        \begin{align*}
            \psi(\pi_1) \geqslant \psi(\pi_2) + \left\langle \nabla \psi(\pi_2), \pi_1 - \pi_2 \right\rangle + \frac{1}{2}\|\pi_2 - \pi_1\|^2.
        \end{align*}
\end{assumption}
Note that this assumption does not reduce the class of neural networks under consideration, as it is solely related to the choice of regularizer. Additionally, it holds for all commonly used divergences.

\subsection{Properties of the Objective}
The problem \eqref{eq:pinn_saddle} is a special case of nonconvex-strongly concave SPPs. In this section, we obtain several properties of the objective by leveraging its structure. Firstly, we formulate the following.
\begin{lemma}\label{eq:lemma_1}
    Consider the problem \eqref{eq:pinn_saddle} under Assumption \ref{ass:p}. Then, for every $\theta\in\Rd$ the function $\mathcal{L}(\theta,\pi)$ is \textbf{$\lambda$-strongly concave}, i.e. for all $\pi_1,\pi_2\in S$ it satisfies
    \begin{align*}
        \mathcal{L}(\theta,\pi_1)\leq\mathcal{L}(\theta,\pi_2)+\langle\nabla_{\psi}\mathcal{L}(\theta,\pi_2),\pi_1-\pi_2\rangle-\frac{\lambda}{2}\left(D_{\psi}(\pi_1,\pi_2)+D_{\psi}(\pi_2,\pi_1)\right).
    \end{align*}
\end{lemma}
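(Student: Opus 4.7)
The plan is to exploit the additive decomposition of $\mathcal{L}(\theta,\pi)$ in the $\pi$-variable. For any fixed $\theta$, the two sums in \eqref{eq:pinn_saddle} contribute only a term that is linear in $\pi$, namely $\langle v(\theta),\pi\rangle$ with $v(\theta)$ the vector of per-task losses, while the only nonlinear piece is $-\lambda D_{\psi}(\pi\|\hat\pi)$. Linear terms do not affect strong concavity and contribute identically on both sides of the desired inequality, so the argument reduces to proving the Bregman-style bound for $-\lambda D_{\psi}(\cdot\,\|\hat\pi)$.

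The first key step is the three-point identity for Bregman divergences, which follows directly from Definition~\ref{def:Bregman} by expansion and cancellation:
\[ D_{\psi}(\pi_1,\hat\pi) - D_{\psi}(\pi_2,\hat\pi) = \langle\nabla\psi(\pi_2)-\nabla\psi(\hat\pi),\,\pi_1-\pi_2\rangle + D_{\psi}(\pi_1,\pi_2). \]
Combining this with the explicit gradient $\nabla_\pi\mathcal{L}(\theta,\pi_2) = v(\theta) - \lambda[\nabla\psi(\pi_2)-\nabla\psi(\hat\pi)]$ produces, after the $\nabla\psi(\hat\pi)$ contributions cancel, the equality
\[ \mathcal{L}(\theta,\pi_1) = \mathcal{L}(\theta,\pi_2) + \langle\nabla_\pi\mathcal{L}(\theta,\pi_2),\,\pi_1-\pi_2\rangle - \lambda D_{\psi}(\pi_1,\pi_2). \]
At this stage Assumption~\ref{ass:p} enters to guarantee $D_{\psi}(\pi_1,\pi_2)\geq\tfrac{1}{2}\|\pi_1-\pi_2\|^2\geq 0$, so the displayed equality is already an upper bound of the kind sought.

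To match the symmetric form stated in the lemma, I would write the identity once with $\pi_1,\pi_2$ as above and once with them swapped, then combine the two: averaging (equivalently, summing both and rearranging) reshapes the penalty from the asymmetric $-\lambda D_{\psi}(\pi_1,\pi_2)$ into the symmetrized $-\tfrac{\lambda}{2}\bigl(D_{\psi}(\pi_1,\pi_2)+D_{\psi}(\pi_2,\pi_1)\bigr)$, yielding the claim. The symmetrization is natural here because the resulting quantity is the standard ``symmetric Bregman divergence'' that controls the monotonicity of $\nabla_\pi\mathcal{L}(\theta,\cdot)$, and it is the form later used to analyse the algorithm.

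The main obstacle I anticipate is the bookkeeping in the gradient--Bregman cancellation: the three-point identity couples $D_{\psi}(\pi_1,\hat\pi)$, $D_{\psi}(\pi_2,\hat\pi)$, and $D_{\psi}(\pi_1,\pi_2)$, while the gradient term itself injects $\nabla\psi(\pi_2)-\nabla\psi(\hat\pi)$, and one must verify that the inner-product contributions involving $\hat\pi$ cancel exactly so that the final bound depends only on $\pi_1$ and $\pi_2$. Everything else---linearity of the per-task losses in $\pi$ and strong convexity of $\psi$ via Assumption~\ref{ass:p}---plugs in cleanly once this cancellation is laid out.
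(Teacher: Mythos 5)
Your route is genuinely different from the paper's. The paper's proof is a one-line Hessian argument: it observes $\nabla^2_\pi\mathcal{L}(\theta,\pi)=-\lambda\nabla^2\psi(\pi)$ and cites the notion of relative strong concavity from the literature. You instead compute directly from the additive structure of \eqref{eq:pinn_saddle}, and that first part of your argument is correct and in fact sharper: using the three-point identity for $D_\psi$ together with $\nabla_\pi\mathcal{L}(\theta,\pi_2)=v(\theta)-\lambda\bigl(\nabla\psi(\pi_2)-\nabla\psi(\hat\pi)\bigr)$ one gets the exact equality
\begin{align*}
\mathcal{L}(\theta,\pi_1)=\mathcal{L}(\theta,\pi_2)+\langle\nabla_\pi\mathcal{L}(\theta,\pi_2),\pi_1-\pi_2\rangle-\lambda\,D_\psi(\pi_1,\pi_2),
\end{align*}
which is the clean, asymmetric relative-strong-concavity bound, with equality rather than inequality.

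The gap is in the final symmetrization step. Summing or averaging this identity with its $\pi_1\leftrightarrow\pi_2$ swap necessarily introduces \emph{both} gradients and cancels the function values, producing the monotonicity relation
\begin{align*}
\langle\nabla_\pi\mathcal{L}(\theta,\pi_1)-\nabla_\pi\mathcal{L}(\theta,\pi_2),\pi_1-\pi_2\rangle=-\lambda\bigl(D_\psi(\pi_1,\pi_2)+D_\psi(\pi_2,\pi_1)\bigr),
\end{align*}
not a function-value bound anchored solely at $\pi_2$. There is no way to average the two identities and keep $\langle\nabla_\pi\mathcal{L}(\theta,\pi_2),\pi_1-\pi_2\rangle$ while also symmetrizing the Bregman penalty, because the swapped identity carries $\nabla_\pi\mathcal{L}(\theta,\pi_1)$. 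Concretely, given your exact identity the lemma's inequality is equivalent to $D_\psi(\pi_1,\pi_2)\geq D_\psi(\pi_2,\pi_1)$, which is false for a generic asymmetric Bregman divergence such as KL. So the symmetrized function-value form cannot be reached this way; what your symmetrization actually delivers is the gradient-monotonicity inequality above, which, upon dropping a nonnegative term, is exactly the estimate the paper invokes later inside the proof of Lemma~\ref{lemma:distance}. I would keep your identity and the monotonicity consequence, and not force the final step: they suffice for everything downstream and avoid an inequality that does not hold in the stated generality.
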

See the proof in Appendix \ref{ap:A}. Thus, Lemma \ref{eq:lemma_1} in combination with Assumption \ref{ass:theta} shows that the problem \eqref{eq:pinn_saddle} is indeed a nonconvex-strongly concave SPP. Moreover, Assumption \ref{ass:p} entails strong concavity of $\mathcal{L}(\theta,\pi)$ in $\pi$. Consequently, it has a single maximum $\pi^*(\theta)$ on $S$ for every fixed value of $\theta$.
\subsection{Notation}
Since N-SC SPPs have a complex structure, it is technically difficult to prove the convergence of the method using the usual definition of a stationary point. Instead, the definition is equivalently reduced to a stationary point of a minimization problem \citep{huang2021efficient}. Let us consider
\begin{align*}
    \Phi(\theta)=\mathcal{L}(\theta,\pi^*(\theta)).
\end{align*}
Since $S$ is the bounded convex set, Danskin's theorem implies that $\Phi$ is differentiable with $\nabla\Phi(\theta)=\nabla_{\theta}\mathcal{L}(\theta,\pi^*(\theta))$ \citep{rockafellar2015convex}. The common convergence metric is the following.
\begin{definition}\label{def:metric}{\textbf{($\varepsilon$-stationary point) of $\Phi(\theta)$.}}
A point $\theta$ is a $\varepsilon$-stationary point of $\Phi$, if
\begin{align*}
    \|\nabla\Phi(\theta)\|\leq\varepsilon.
\end{align*} 
\end{definition}
In this paper, $\|\cdot\|$ is understood as a Euclidean norm.

\section{Algorithm and Analysis}
In this section, we follow the trend of investigating N-SC SPPs through modifications of \texttt{TT-GDA}. Adapting it to the problem \eqref{eq:pinn_saddle}, we present \textbf{B}regman \textbf{G}radient \textbf{D}escent \textbf{A}scent.
\begin{algorithm}{\texttt{BGDA}} \label{alg:bgda}
\begin{algorithmic}[1]
    \State {\bfseries Input:} Starting point $(\theta^0,\pi^0)\in\Rd\times S$, number of iterations $T$
    \State {\bfseries Parameters:} Stepsizes $\gamma_{\theta},\gamma_{\pi} > 0$
    \For{$t=0,\ldots, T-1$}
    \State $\theta^{t+1}=\theta^t-\gamma_{\theta}\nabla_{\theta}\mathcal{L}(\theta^t,\pi^t)$\label{line:descent} \Comment{Optimizer updates parameters}
    \State $\pi^{t+1}=\arg\min_{\pi\in S}\left\{ -\gamma_{\pi}\left\langle \nabla_{\pi}\mathcal{L}(\theta^t,\pi^t),\pi \right\rangle +D_{\psi}(\pi,\pi^t) \right\}$\label{line:ascent} \Comment{Optimizer updates weights}
    \EndFor
    \State {\bfseries Output:} $(\theta^T,\pi^T)$
\end{algorithmic}
\end{algorithm}
Due to the complex landscape of the problem to be solved, the algorithmic schemes we rely on are extremely simple. Since the parameters $\theta$ may take any value, it suffices to use the classic gradient descent step \citep{nemirovskij1983problem} to update them (Line \ref{line:descent}). However, the weights are selected from a convex bounded set described by Non-Euclidean geometry. Consequently, we utilize the Bregman proximal mapping \citep{nemirovskij1983problem} to perform the ascent step (Line \ref{line:ascent}). The subproblem in Line \ref{line:ascent} requires estimating statistics of the objective only once and therefore does not pose any significant computational difficulties compared to the basic descent step. Moreover, it often has a closed-form solution. For example, if $D_{\psi}$ is the KL-divergence \citep{nemirovskij1983problem}, then
\begin{align*}
    \pi^{t+1}=\left(\frac{\exp\{\gamma_{\pi}(\nabla_{\pi}\mathcal{L}(\theta^t,\pi^r))_i\}}{\sum_{i=1}^M\exp\{\gamma_{\pi}(\nabla_{\pi}\mathcal{L}(\theta^t,\pi^r))_i\}}\right)_{i=1}^M.
\end{align*}

In the analysis of Algorithm \ref{alg:bgda}, it is fundamental to utilize steps of varying sizes. One possible explanation is that the landscape of the objective is much better in the strongly concave component. Consequently, more confident steps can be taken to update the weights. The primary theoretical challenge in the analysis of the method is to show the convergence of the iterative scheme based on the metric given in Definition \ref{def:metric}. Indeed, for each value of the model parameters $\theta^t$ there is an optimal point $\pi^*(\theta^t)$. To address the technical difficulties, we must show that the method generates a sequence of points $\{(\theta^t,\pi^t)\}_{t=1}^T$ for which the distance between $\pi^t$ and $\pi^*(\theta^t)$ decreases when increasing $t$. Moreover, we have to account for the non-Euclidean geometry of the problem.
\begin{lemma}\label{lemma:distance}
    Consider the problem \eqref{eq:pinn_saddle} under Assumptions \ref{ass:theta}, \ref{ass:p}. Then, Algorithm \ref{alg:bgda} produces such $\{(\theta^t,\pi^t)\}_{t=1}^T$, that
    \begin{align*}
        D_{\psi}(\pi^*(\theta^{t+1}),\pi^{t+1})\leq\left(1-\frac{1}{64\kappa^2}\right)D_{\psi}(\pi^*(\theta^t),\pi^{t})+264\gamma_{\theta}^2\kappa^6\|\nabla\Phi(\theta^t)\|^2,
    \end{align*}
    where $\kappa=\nicefrac{L}{\lambda}$ is the condition number of $\mathcal{L}(\theta,\pi)$ in $\pi$.
\end{lemma}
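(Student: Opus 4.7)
The plan is to establish the one-step recursion by tracking $D_\psi(\pi^*(\theta^t), \pi^t)$ as a Lyapunov quantity and separating the effect of the ascent step on $\pi$ (which contracts this Bregman distance against a \emph{fixed} target $\pi^*(\theta^t)$) from the effect of the descent step on $\theta$ (which shifts the target to $\pi^*(\theta^{t+1})$). I would organize the proof around three building blocks: Lipschitz continuity of the inner argmax map, a mirror-ascent contraction inequality, and a Bregman quasi-triangle estimate to swap targets, combined via a carefully balanced Young's inequality.

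\textbf{Lipschitz bound and descent step.} Under Assumption \ref{ass:theta} and the strong concavity of Lemma \ref{eq:lemma_1}, a standard argument using first-order optimality of $\pi^*(\theta)$ and $L$-Lipschitzness of $\nabla_\pi \mathcal{L}$ in $\theta$ yields $\|\pi^*(\theta_1) - \pi^*(\theta_2)\| \leq \kappa\|\theta_1 - \theta_2\|$. Combined with $L$-smoothness and Danskin's identity $\nabla\Phi(\theta^t) = \nabla_\theta\mathcal{L}(\theta^t, \pi^*(\theta^t))$, the descent step obeys
\[
\|\theta^{t+1} - \theta^t\|^2 \leq 2\gamma_\theta^2 \|\nabla\Phi(\theta^t)\|^2 + 4\gamma_\theta^2 L^2 D_\psi(\pi^*(\theta^t), \pi^t),
\]
where the last inequality uses $\|\pi^t - \pi^*(\theta^t)\|^2 \leq 2 D_\psi(\pi^*(\theta^t), \pi^t)$ from Assumption \ref{ass:p}.

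\textbf{Ascent contraction and target swap.} For the Bregman proximal step in Line \ref{line:ascent}, writing the first-order optimality condition, testing against $\pi^*(\theta^t)$, and applying the three-point identity
\[
\langle \nabla\psi(\pi^{t+1}) - \nabla\psi(\pi^t), \pi^*(\theta^t) - \pi^{t+1}\rangle = D_\psi(\pi^*(\theta^t), \pi^t) - D_\psi(\pi^*(\theta^t), \pi^{t+1}) - D_\psi(\pi^{t+1}, \pi^t)
\]
yields an upper bound on $D_\psi(\pi^*(\theta^t), \pi^{t+1})$ that I would further process by splitting $\langle\nabla_\pi\mathcal{L}(\theta^t,\pi^t), \pi^*(\theta^t) - \pi^{t+1}\rangle$ into $\langle\cdot,\pi^*(\theta^t) - \pi^t\rangle + \langle\cdot,\pi^t - \pi^{t+1}\rangle$. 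Lemma \ref{eq:lemma_1} applied at $(\pi^*(\theta^t),\pi^t)$ lower-bounds the first piece by $\tfrac{\lambda}{2} D_\psi(\pi^*(\theta^t),\pi^t)$, while the second is absorbed via Young's inequality together with the slack $-D_\psi(\pi^{t+1},\pi^t) \leq -\tfrac{1}{2}\|\pi^{t+1}-\pi^t\|^2$. Tuning $\gamma_\pi = \Theta(1/(L\kappa))$ then produces
\[
D_\psi(\pi^*(\theta^t), \pi^{t+1}) \leq \bigl(1 - c/\kappa^2\bigr)\, D_\psi(\pi^*(\theta^t), \pi^t)
\]
for an absolute constant $c > 0$. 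A final swap from target $\pi^*(\theta^t)$ to $\pi^*(\theta^{t+1})$, via the three-point identity followed by Cauchy--Schwarz and Young with parameter $\alpha = \Theta(1/\kappa^2)$, costs an additive term of order $\alpha^{-1}\|\pi^*(\theta^{t+1}) - \pi^*(\theta^t)\|^2$, which the preceding Lipschitz and descent bounds control by $\Theta(\kappa^4 \gamma_\theta^2 \|\nabla\Phi(\theta^t)\|^2)$ plus a small multiple of $D_\psi(\pi^*(\theta^t),\pi^t)$ that is absorbed into the contraction factor. The announced $\kappa^6$ in the error coefficient arises by stacking $\alpha^{-1} = \Theta(\kappa^2)$ with the squared Lipschitz constant $\kappa^2$ and additional residual bookkeeping.

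\textbf{Main obstacle.} The hardest step is the Bregman quasi-triangle estimate: Assumption \ref{ass:p} only provides the lower bound $\tfrac{1}{2}\|\pi_1 - \pi_2\|^2 \leq D_\psi(\pi_1, \pi_2)$, whereas the swap requires an \emph{upper} control $D_\psi(\pi^*(\theta^{t+1}), \pi^*(\theta^t)) \leq C \|\pi^*(\theta^{t+1}) - \pi^*(\theta^t)\|^2$. This demands extra relative-smoothness of $\psi$ on $S$, which holds with an absolute constant for the KL divergence on the simplex and in the general case can be absorbed into the numerical constants of the lemma. Keeping the two Young's parameters and stepsize ratios tight enough to land at exactly $1 - 1/(64\kappa^2)$ with coefficient $264$ will be the main bookkeeping chore.
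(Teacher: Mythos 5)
Your high-level architecture matches the paper's (three-point identity, mirror-ascent contraction, target swap, Young's inequality), but you have correctly identified a genuine gap and then proposed a fix that does not work. The sticking point is exactly what you flag: after the three-point expansion you must control the cross term $\langle \nabla\psi(\pi^*(\theta^t))-\nabla\psi(\pi^{t+1}),\pi^*(\theta^{t+1})-\pi^*(\theta^t)\rangle$ and the Bregman distance $D_\psi(\pi^*(\theta^{t+1}),\pi^*(\theta^t))$. Your route first obtains the Euclidean Lipschitz bound on the argmax map and then needs $D_\psi(\pi^*(\theta^{t+1}),\pi^*(\theta^t)) \le C\|\pi^*(\theta^{t+1})-\pi^*(\theta^t)\|^2$, i.e. upper relative smoothness of $\psi$. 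The claim that this "holds with an absolute constant for the KL divergence on the simplex" is false: $\nabla^2\psi(\pi) = \mathrm{diag}(1/\pi_i)$ is unbounded near the boundary, and Assumption \ref{ass:p} only gives the one-sided bound $\frac{1}{2}\|\pi_1-\pi_2\|^2 \le D_\psi(\pi_1,\pi_2)$. No additional constant can be absorbed because the lemma is stated with explicit numerics under only Assumptions \ref{ass:theta}--\ref{ass:p}.

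The paper circumvents both difficulties by exploiting the specific regularizer structure, which your proposal does not use. Since $\nabla_\pi\mathcal{L}(\theta,\pi) = c(\theta) - \lambda(\nabla\psi(\pi)-\nabla\psi(\hat\pi))$, one has the exact identity $\nabla\psi(\pi^*(\theta^t))-\nabla\psi(\pi^{t+1}) = \frac{1}{\lambda}\bigl(\nabla_\pi\mathcal{L}(\theta^t,\pi^{t+1})-\nabla_\pi\mathcal{L}(\theta^t,\pi^*(\theta^t))\bigr)$, which lets the cross term be bounded by Assumption \ref{ass:theta} plus the strong-convexity lower bound (the direction you do have), with no smoothness of $\psi$ required. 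For $D_\psi(\pi^*(\theta^{t+1}),\pi^*(\theta^t))$, the paper never detours through the Euclidean norm: it sums the two first-order optimality inequalities at $\pi^*(\theta^t)$ and $\pi^*(\theta^{t+1})$, applies Lemma \ref{eq:lemma_1} (strong concavity of $\mathcal{L}$ in $\pi$ measured directly in $D_\psi$), and then uses $\|\pi^*(\theta^{t+1})-\pi^*(\theta^t)\|^2 \le D_\psi(\pi^*(\theta^t),\pi^*(\theta^{t+1}))+D_\psi(\pi^*(\theta^{t+1}),\pi^*(\theta^t))$ to close the loop, giving the symmetric Bregman distance $\le 4\kappa^2\|\theta^{t+1}-\theta^t\|^2$ with no quasi-triangle inequality. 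Without these two structural observations your recursion cannot be completed under the stated assumptions.
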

See the proof in Appendix \ref{ap:A}. Lemma \ref{lemma:distance} shows how the distance between the current weight iterate $\pi^t$ and the ideal response $\pi^*(\theta^t)$ evolves over time. This is a key result needed to prove convergence. Indeed, since we consider the Euclidean setting in the nonconvex variables $\theta$, the standard inexact gradient descent analysis implies
\begin{align*}
    \Phi(\theta^{t+1})-\Phi(\theta^0)\leq-\Omega(\gamma_{\theta})\left(\sum_{t=1}^{T-1}\|\nabla\Phi(\theta^{t})\|^2\right)+\mathcal{O}\left(\gamma_{\theta}L^2\right)\sum_{t=1}^{T-1}D_{\psi}(\pi^*(\theta^t),\pi^t).
\end{align*}
Thus, for a sufficiently small step $\gamma_{\theta}$, it is guaranteed to neglect the inaccuracy of finding the maximum at the ascent step. By carefully evaluating $D_{\psi}(\pi^*(\theta^t),\pi^t)$ from above and selecting appropriate $\gamma_{\theta}$, the convergence is obtained. We formulate this fact as a main theorem.
\begin{theorem}\label{th:general}
    Consider the problem \eqref{eq:pinn_saddle} under Assumptions \ref{ass:theta}, \ref{ass:p}. Then, Algorithm \ref{alg:bgda} requires
    \begin{align*}
        \mathcal{O}\left(\frac{\kappa^4L\Delta+\kappa^2L^2D_{\psi}(\pi^*(\theta^0),\pi^0)}{\varepsilon^2}\right)\text{ iterations}
    \end{align*}
    to achieve an arbitrary $\varepsilon$-solution, where $\varepsilon^2 = \frac{1}{T}\sum_{t=1}^{T-1}\|\nabla\Phi(\theta^t)\|^2$, $\Delta=\Phi(\theta^0)-\Phi(\theta^*)$. $\kappa=\nicefrac{L}{\lambda}$ is the condition number of $\mathcal{L}(\theta,\pi)$ in $\pi$.
\end{theorem}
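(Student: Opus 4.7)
The plan is to combine a standard inexact-descent inequality for the Moreau-type envelope $\Phi$ with the tracking bound of Lemma \ref{lemma:distance}, and then tune $\gamma_\theta$ so that the error incurred by not having $\pi^t=\pi^*(\theta^t)$ can be absorbed into the descent term. The excerpt already sketches the target inequality, so the only real work is to verify the smoothness constant of $\Phi$, estimate the gradient-bias in terms of $D_\psi$, and telescope carefully.

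First I would establish that $\Phi$ is $\mathcal{O}(L\kappa)$-smooth. Since $\nabla\Phi(\theta)=\nabla_\theta\mathcal{L}(\theta,\pi^*(\theta))$ by Danskin's theorem, Assumption \ref{ass:theta} together with the Lipschitz continuity of $\pi^*(\cdot)$ (a standard consequence of $\lambda$-strong concavity and $L$-smoothness in $\pi$, with constant $L/\lambda=\kappa$) yields $\|\nabla\Phi(\theta_1)-\nabla\Phi(\theta_2)\|\le L(1+\kappa)\|\theta_1-\theta_2\|$. Applying the descent lemma for $\Phi$ along the update $\theta^{t+1}=\theta^t-\gamma_\theta\nabla_\theta\mathcal{L}(\theta^t,\pi^t)$ and bounding the bias by
\begin{align*}
    \|\nabla\Phi(\theta^t)-\nabla_\theta\mathcal{L}(\theta^t,\pi^t)\|^2\le L^2\|\pi^*(\theta^t)-\pi^t\|^2\le 2L^2 D_\psi(\pi^*(\theta^t),\pi^t)
\end{align*}
(the last inequality uses Assumption \ref{ass:p}), followed by a Young-type split of the cross-term, gives the per-step inequality
\begin{align*}
    \Phi(\theta^{t+1})\le\Phi(\theta^t)-\tfrac{\gamma_\theta}{4}\|\nabla\Phi(\theta^t)\|^2+c\,\gamma_\theta L^2 D_\psi(\pi^*(\theta^t),\pi^t)
\end{align*}
provided $\gamma_\theta\lesssim 1/(L\kappa)$, which matches the expression quoted in the excerpt.

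Next I would telescope Lemma \ref{lemma:distance}: iterating the contraction $q=1-\tfrac{1}{64\kappa^2}$ yields
\begin{align*}
    \sum_{t=0}^{T-1} D_\psi(\pi^*(\theta^t),\pi^t)\le\frac{1}{1-q}\,D_\psi(\pi^*(\theta^0),\pi^0)+\frac{264\gamma_\theta^2\kappa^6}{1-q}\sum_{t=0}^{T-1}\|\nabla\Phi(\theta^t)\|^2,
\end{align*}
which is $\mathcal{O}(\kappa^2) D_\psi(\pi^*(\theta^0),\pi^0)+\mathcal{O}(\gamma_\theta^2\kappa^8)\sum_t\|\nabla\Phi(\theta^t)\|^2$. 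Substituting this into the telescoped descent inequality for $\Phi$ gives
\begin{align*}
    \tfrac{\gamma_\theta}{4}\sum_{t=0}^{T-1}\|\nabla\Phi(\theta^t)\|^2\le\Delta+\mathcal{O}(\gamma_\theta L^2\kappa^2) D_\psi(\pi^*(\theta^0),\pi^0)+\mathcal{O}(\gamma_\theta^3 L^2\kappa^8)\sum_{t=0}^{T-1}\|\nabla\Phi(\theta^t)\|^2.
\end{align*}
Choosing $\gamma_\theta=\Theta(1/(L\kappa^4))$ makes the last term on the right at most half of the left side and lets me absorb it. Dividing by $\gamma_\theta T$ and rearranging then yields
\begin{align*}
    \frac{1}{T}\sum_{t=0}^{T-1}\|\nabla\Phi(\theta^t)\|^2\le\mathcal{O}\!\left(\frac{\kappa^4 L\Delta+\kappa^2 L^2 D_\psi(\pi^*(\theta^0),\pi^0)}{T}\right),
\end{align*}
which, set equal to $\varepsilon^2$, gives the claimed iteration complexity.

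The main obstacle is the coupling between the two inequalities: Lemma \ref{lemma:distance} has a pre-factor of $\gamma_\theta^2\kappa^6$ on $\|\nabla\Phi\|^2$, so without the sharp choice $\gamma_\theta\asymp 1/(L\kappa^4)$ the feedback term $\gamma_\theta^3 L^2\kappa^8$ cannot be absorbed, and the final $\kappa$ dependence degrades. Getting the constants tight enough to produce exactly the $\kappa^4$ and $\kappa^2$ exponents in the theorem is the delicate bookkeeping step; everything else is a routine combination of strong concavity, smoothness, and telescoping.
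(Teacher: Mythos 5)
Your outline reproduces the paper's own argument: establish $\mathcal{O}(L\kappa)$-smoothness of $\Phi$ via Danskin plus the $\mathcal{O}(\kappa)$-Lipschitzness of $\pi^*(\cdot)$, run the inexact descent lemma with the bias bound $\|\nabla\Phi(\theta^t)-\nabla_\theta\mathcal{L}(\theta^t,\pi^t)\|^2\le 2L^2 D_\psi(\pi^*(\theta^t),\pi^t)$, bring in Lemma \ref{lemma:distance}, and tune $\gamma_\theta\asymp 1/(L\kappa^4)$ so the $\gamma_\theta^3 L^2\kappa^8$ feedback term is absorbed into the descent term. The only cosmetic difference is that you telescope the recursion $D_\psi^{t+1}\le q D_\psi^t + c_1\|\nabla\Phi(\theta^t)\|^2$ by summing and dividing by $1-q$, whereas the paper first unrolls it into $D_\psi^t\le \delta^t D_\psi^0 + c_1\sum_{j<t}\delta^{t-1-j}\|\nabla\Phi(\theta^j)\|^2$ and then appeals to geometric series after substitution; both yield exactly $\mathcal{O}(\kappa^2)D_\psi^0+\mathcal{O}(\gamma_\theta^2\kappa^8)\sum_t\|\nabla\Phi(\theta^t)\|^2$, so the routes are equivalent.
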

See the proof in Appendix \ref{ap:B}. Note that the derived estimate of $T$ is worse than that obtained in \citep{huang2021efficient} for the Euclidean setting. However, if $S$ is a unit simplex intersected with a euclidean ball, it can be significantly improved (see Appendix \ref{ap:C}) for the detailed discussion. The question of improveability in the general case remains open. After examining a large number of proof approaches, we believe that for \texttt{GDA}-like schemes, it is unimprovable.

\section{Numerical Experiments}\label{sec:exps}
We now present the empirical analysis of our approach. All experiments are conducted on a Linux server utilizing an NVIDIA GeForce RTX 2080 Ti with 12 GB of GPU memory and NVIDIA TESLA A100 with 80 GB of GPU memory. To ensure accurate results, only one card is employed within a single subsection. We also attach the \href{https://anonymous.4open.science/r/pinns-bgda-00D6/}{code} of the experiments to verify reproducibility.

Following the trends of machine learning, we develop an adaptive modification of Algorithm \ref{alg:bgda}. In Algorithm \ref{alg:adaptive_bgda},
\begin{algorithm}{\texttt{Adaptive BGDA}} \label{alg:adaptive_bgda}
\begin{algorithmic}[1]
    \State {\bfseries Input:} Starting point $(\theta^0,\pi^0)\in\Rd\times S$, number of iterations $T$
    \State {\bfseries Parameters:} Stepsizes $\gamma_{\theta},\gamma_{\pi} > 0$
    \For{$t=0,\ldots, T-1$}
    \State $m_{\theta}^{t+1}=\alpha_{1}m_{\theta}^t+(1-\alpha_1)\nabla_{\theta}\mathcal{L}(\theta^t,\pi^t)$\label{line:theta_smoothing}
    \State $v_{\theta}^{t+1}=\alpha_{2}v_{\theta}^t+(1-\alpha_2)\|\nabla_{\theta}\mathcal{L}(\theta^t,\pi^t)\|^2$\label{line:theta_history}
    \State $v_{\pi}^{t+1}=\beta v_{\theta}^t+(1-\beta)\|\nabla_{\pi}\mathcal{L}(\theta^t,\pi^t)\|^2$\label{line:pi_history}
    \State $\widehat{m}_{\theta}^{t+1}=\frac{m_{\theta}^{t+1}}{1-\alpha_1^t}$\label{line:cor_1}
    \State $\widehat{v}_{\theta}^{t+1}=\frac{v_{\theta}^{t+1}}{1-\alpha_2^t}$\label{line:cor_2}
    \State $\widehat{v}_{\pi}^{t+1}=\frac{v_{\pi}^{t+1}}{1-\beta^t}$\label{line:cor_3}
    \State $\theta^{t+1}=\theta^t-\gamma_{\theta}\widehat{m}_{\theta}^{t+1}$\label{line:theta_step} \Comment{Optimizer updates parameters}
    \State $\pi^{t+1}=\arg\min_{\pi\in S}\left\{ -\gamma_{\pi}\left\langle \widehat{m}_{\pi}^{t+1},\pi \right\rangle +D_{\psi}(\pi,\pi^t) \right\}$\label{line:pi_step}\Comment{Optimizer updates weights}
    \EndFor
    \State {\bfseries Output:} $(\theta^T,\pi^T)$
\end{algorithmic}
\end{algorithm}
the gradient $\nabla_{\theta}\mathcal{L}(\theta^t,\pi^t)$ is smoothed with its previous values as a running average (Line \ref{line:theta_smoothing}). In practice, this approach aids in identifying a suitable descent direction more quickly. Furthermore, we propose accumulating the gradient history to vary the step sizes (Lines \ref{line:theta_history}, \ref{line:pi_history}). This method is effective, as the gradient magnitude indicates the loss smoothness locally, which leads to more confident steps and faster convergence. A practice-driven bias correction of calculated statistics is also implemented (Lines \ref{line:cor_1}, \ref{line:cor_2}, \ref{line:cor_3}). To update model parameters and weights, Algorithm \ref{alg:adaptive_bgda} performs the descent-ascent scheme, identical to Algorithm \ref{alg:bgda}. Thus, \texttt{AdaptiveBGDA} utilizes \texttt{Adam} \citep{kingma2014adam} and \texttt{RMSProp} \citep{xu2021convergence} to perform descent and ascent steps, respectively. During the experimental study, we tested various combinations of adaptive techniques (e.g., \texttt{Adam}+\texttt{Adam}, \texttt{RMSProp}+\texttt{RMSProp}), but it was \texttt{Adam}+\texttt{RMSProp} that performed the best  (see Appendix \ref{ap:D}).

We utilize \textit{PINNacle} \citep{hao2023pinnacle} as the foundation for our code. To the best of our knowledge, it represents the most recent and comprehensive benchmark, containing a wide variety of PDEs and numerous network architectures. We do not tune competing optimizers but rely on the hyperparameters recommended in this work. 

Although we find a solution to the saddle-point problem \eqref{eq:pinn_saddle}, a neural network $u(\theta)$ relies solely on $\theta$ to make predictions. The weights $\pi$ are required only for equalizing the contribution of all losses in the optimization process. To assess quality, we use \textbf{L2RE}. It is calculated as
\begin{align*}
    \textbf{L2RE}=\sqrt{\frac{\sum_{i=1}^n(y_i-y_i')^2}{\sum_{i=1}^ny_i'^2}},
\end{align*}
where $y_i$ is the output of the model, $y_i'$ is the ground truth, and $n$ is the size of the test set. This metric is generally accepted for numerical methods in mathematical physics. Additionally, it is more sensitive to outliers than \textbf{L1RE}. Since the purpose of this paper is to demonstrate the stability of the proposed approach, we use exactly \textbf{L2RE}.

\subsection{Exploring the Conflicting Gradients}
In Section \ref{sec:intro}, it was noted that one of the main obstacles to obtaining a high-quality model is conflicting gradients. To study this issue, we measure the ratio $\chi=\nicefrac{\|\nabla\mathcal{L}_r(\theta)\|}{\|\nabla\mathcal{L}_b(\theta)\|}$ while training the vanilla \textit{PINN}. We break the iterations into several groups ($I_1=[0,10000)$, $I_2=[10000,20000)$, and $I_3=[20000,30000]$) and examine the distributions of $\chi_1$, $\chi_2$, and $\chi_3$, including their means $\overline{\chi}_1,~\overline{\chi}_2,~\overline{\chi}_3$ and variances $\sigma_1,~\sigma_2,~\sigma_3$. 

In Figure \ref{fig:dominance_adam}, one can see the dynamics of \texttt{MultiAdam} \citep{yao2023multiadam} applied to solve \textit{Poisson 2d-C} \citep{hao2023pinnacle}. It is one of the state-of-the-art optimizers for weighing losses of a \textit{PINN}. From the first epochs, $\|\nabla\mathcal{L}_r(\theta)\|$ demonstrates significant superiority over $\|\nabla\mathcal{L}_b(\theta)\|$. At this stage, we observe $\overline{\chi}_1=3542$, $\sigma_1=7922$. During the next group of iterations, these ratios increase to $\overline{\chi}_2=14996$, $\sigma_2=12025$; and after another $10000$ -- up to $\overline{\chi}_3=16600$, $\sigma_3=11597$. Thus, the value of $\chi$ is rapidly concentrates extremely far away from the desired case of equal magnitudes. This indicates that the $\nabla\mathcal{L}_r(\theta)$ decreases much slower than $\nabla\mathcal{L}_b(\theta)$. Consequently, only $\mathcal{B}$ is well approximated.

The learning process of our method is significantly more stable. Figure \ref{fig:dominance_bgda} shows similar results for the proposed \texttt{AdaptiveBGDA}. Using this scheme, we obtain $\overline{\chi}_1=7$, $\sigma_1=7$; $\overline{\chi}_2=25$, $\sigma_2=27$; $\overline{\chi}_3=45$, $\sigma_3=127$. The problem of conflicting lows still manifests, but it is now much less pronounced. The resulting improvement is statistically significant. Indeed, for $I_1$ only $\approx 31.3\%$ of the values obtained with \texttt{MultiAdam} fall within the $3\sigma_1$-interval for \texttt{AdaptiveBGDA}. At the same time, for $I_2$ and $I_3$ such values do not exist at all.

\begin{figure}[h!] 
   \begin{subfigure}{0.310\textwidth}
           \includegraphics[width=\linewidth]{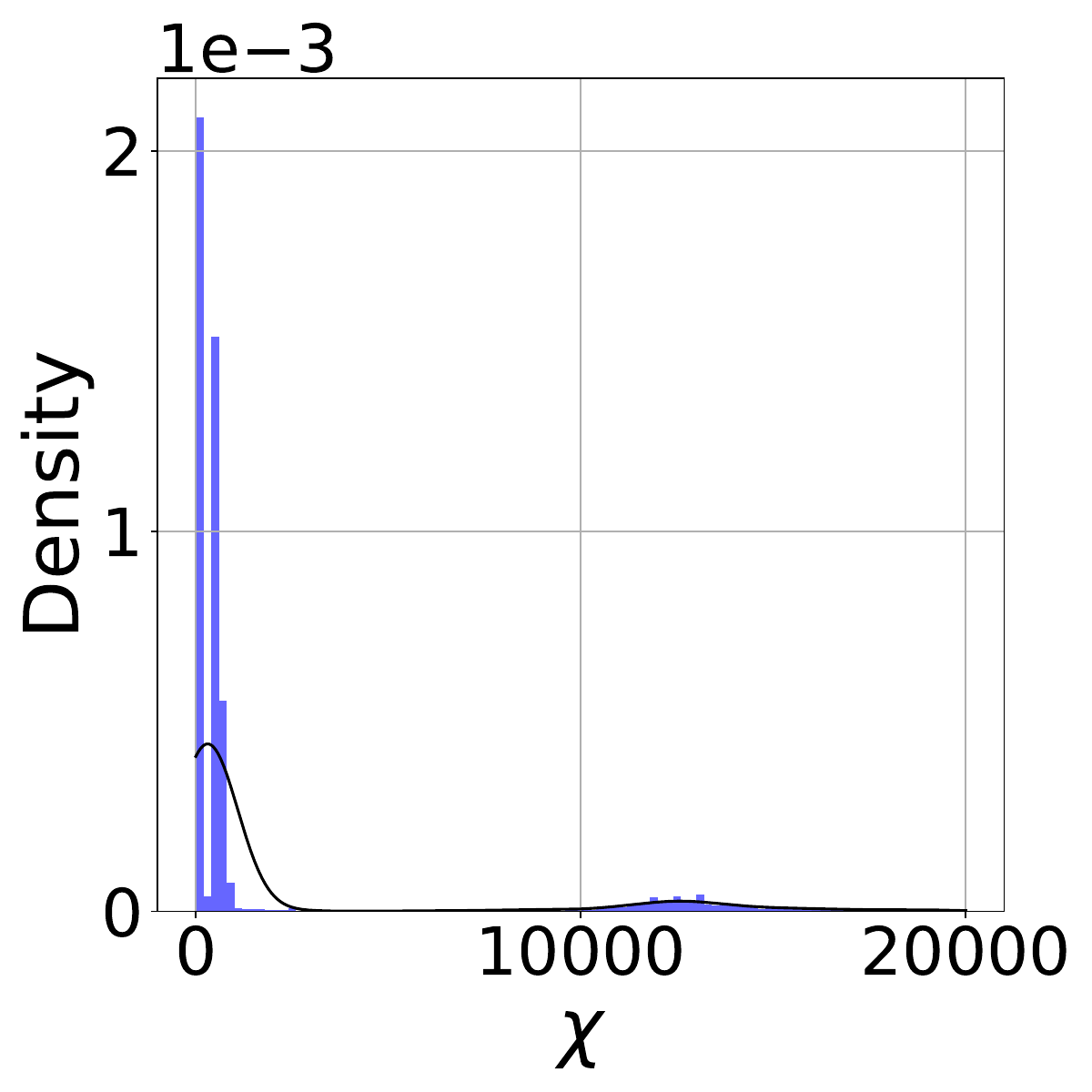}
   \end{subfigure}
   \begin{subfigure}{0.310\textwidth}
       \includegraphics[width=\linewidth]{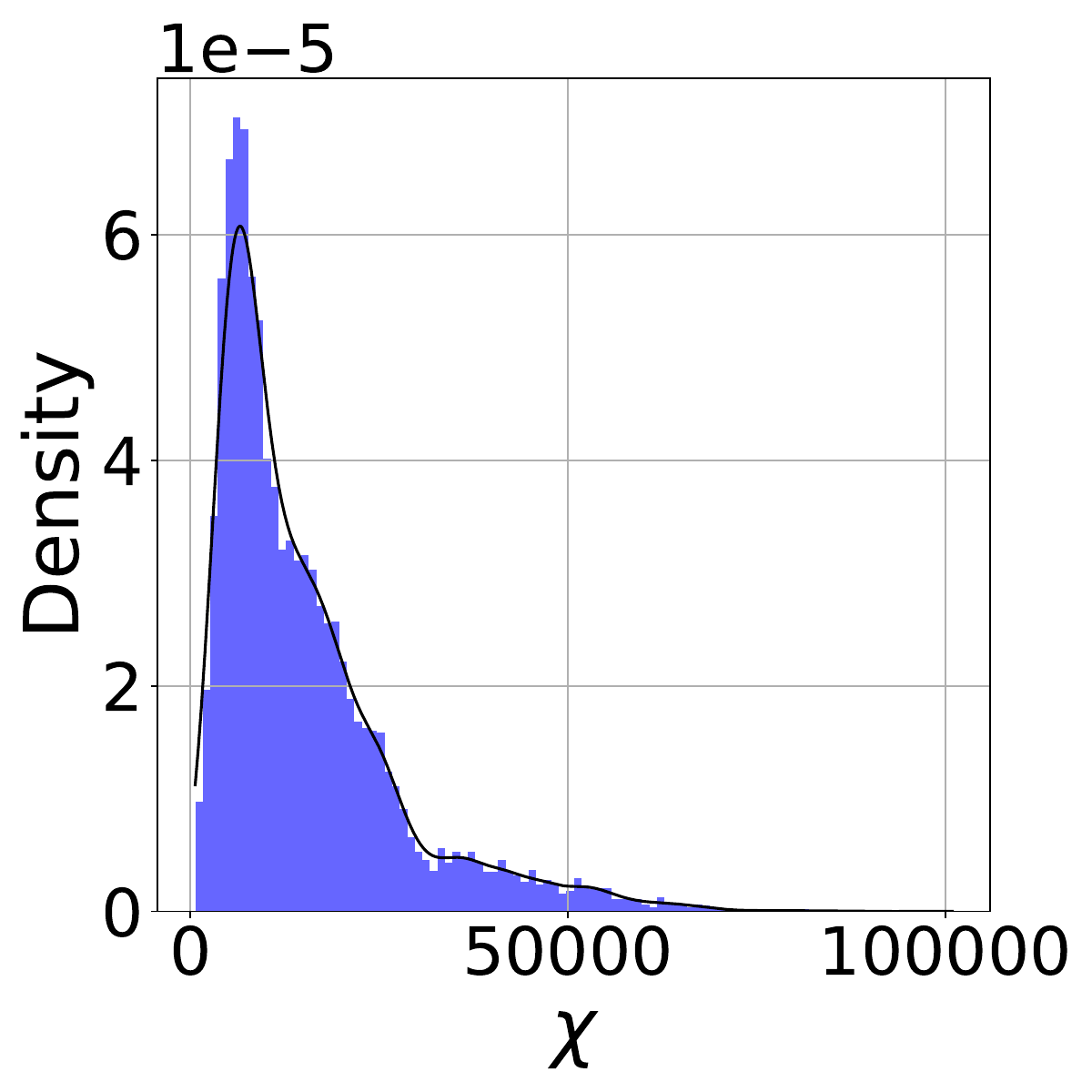}
   \end{subfigure}
   \begin{subfigure}{0.310\textwidth}
       \includegraphics[width=\linewidth]{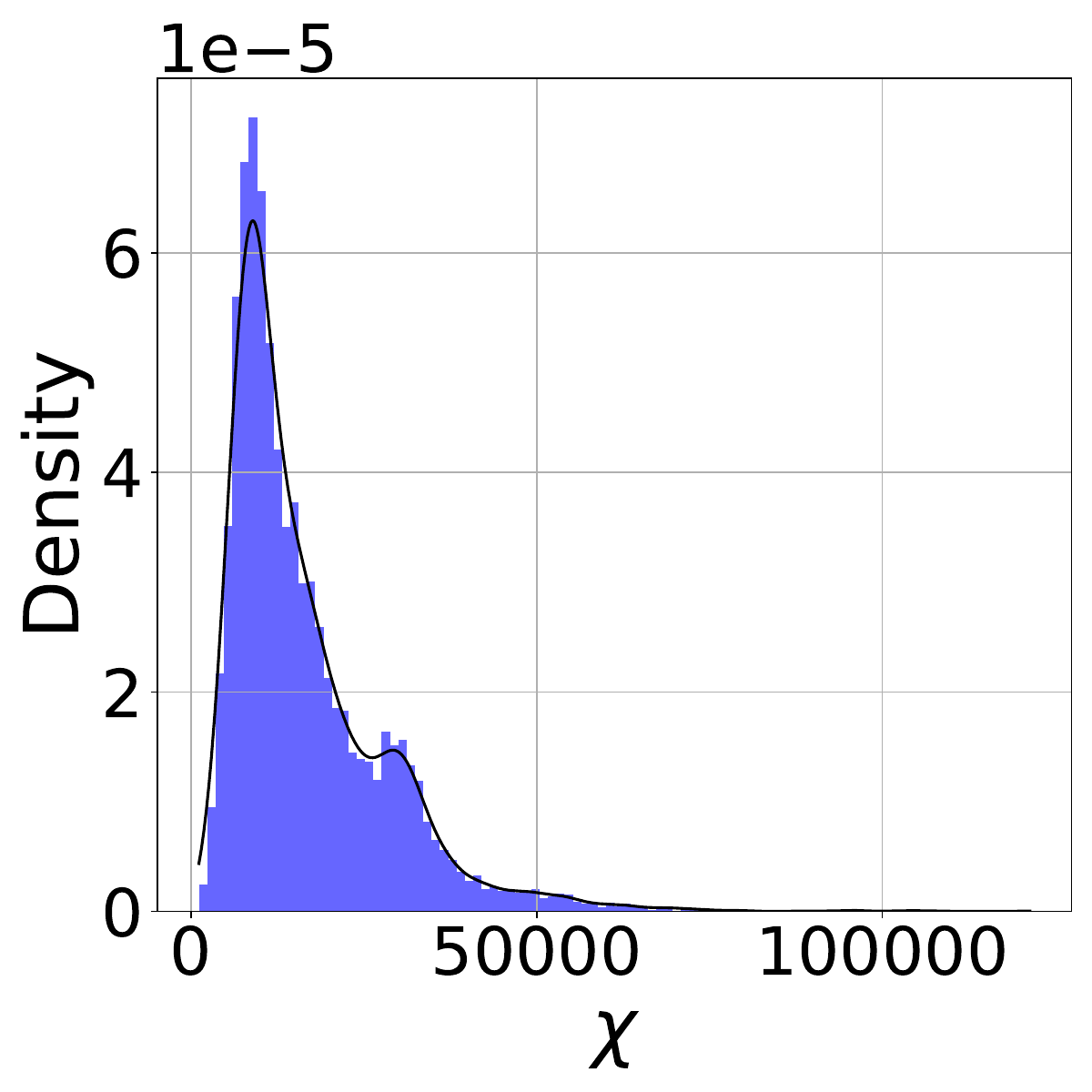}
   \end{subfigure}\\
   \begin{subfigure}{0.310\textwidth}
       \centering
       (a) $0-10000$ epochs
   \end{subfigure}
   \begin{subfigure}{0.310\textwidth}
       \centering
       (b) $10000-20000$ epochs
   \end{subfigure}
   \begin{subfigure}{0.310\textwidth}
       \centering
       (c) $20000-30000$ epochs
   \end{subfigure}
   \caption{Dynamics of $\chi=\nicefrac{\|\nabla\mathcal{L}_r(\theta)\|}{\|\nabla\mathcal{L}_b(\theta)\|}$ during optimization via \texttt{MultiAdam}. The experiment is made on \textit{Poisson 2d-C}. To observe instability, we break the training into three parts}
   \label{fig:dominance_adam}
\end{figure}

\begin{figure}[h!] 
   \begin{subfigure}{0.310\textwidth}
           \includegraphics[width=\linewidth]{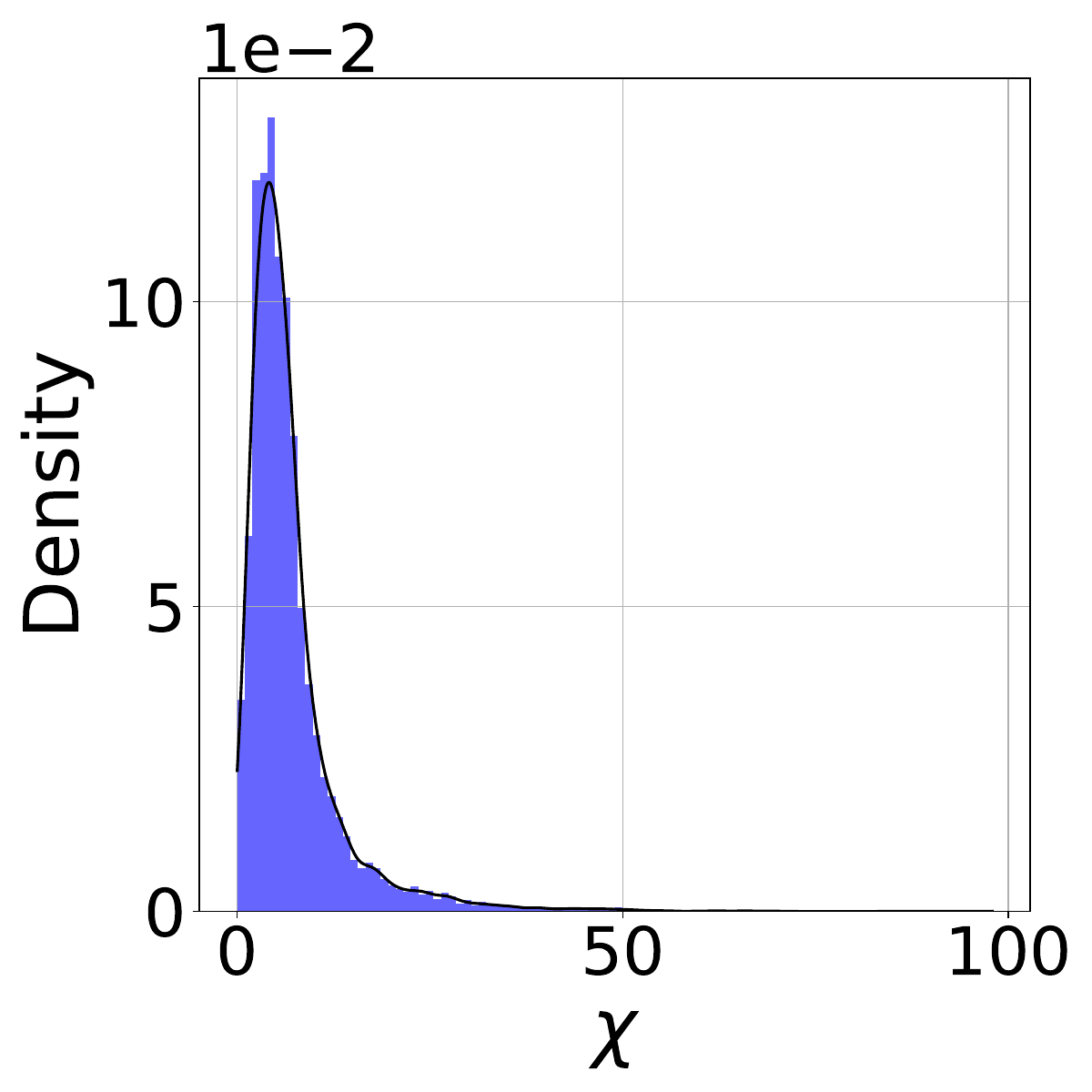}
   \end{subfigure}
   \begin{subfigure}{0.310\textwidth}
       \includegraphics[width=\linewidth]{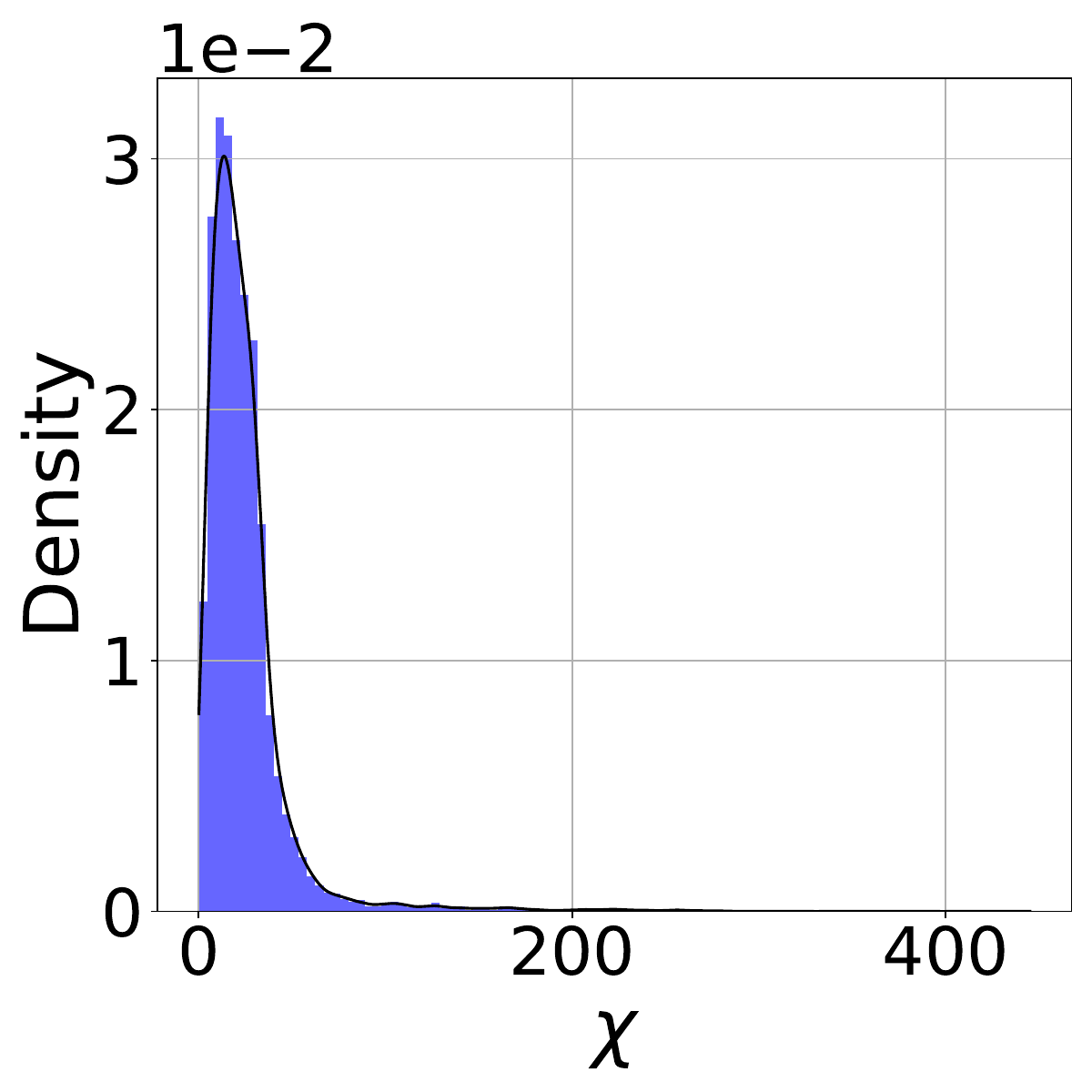}
   \end{subfigure}
   \begin{subfigure}{0.310\textwidth}
       \includegraphics[width=\linewidth]{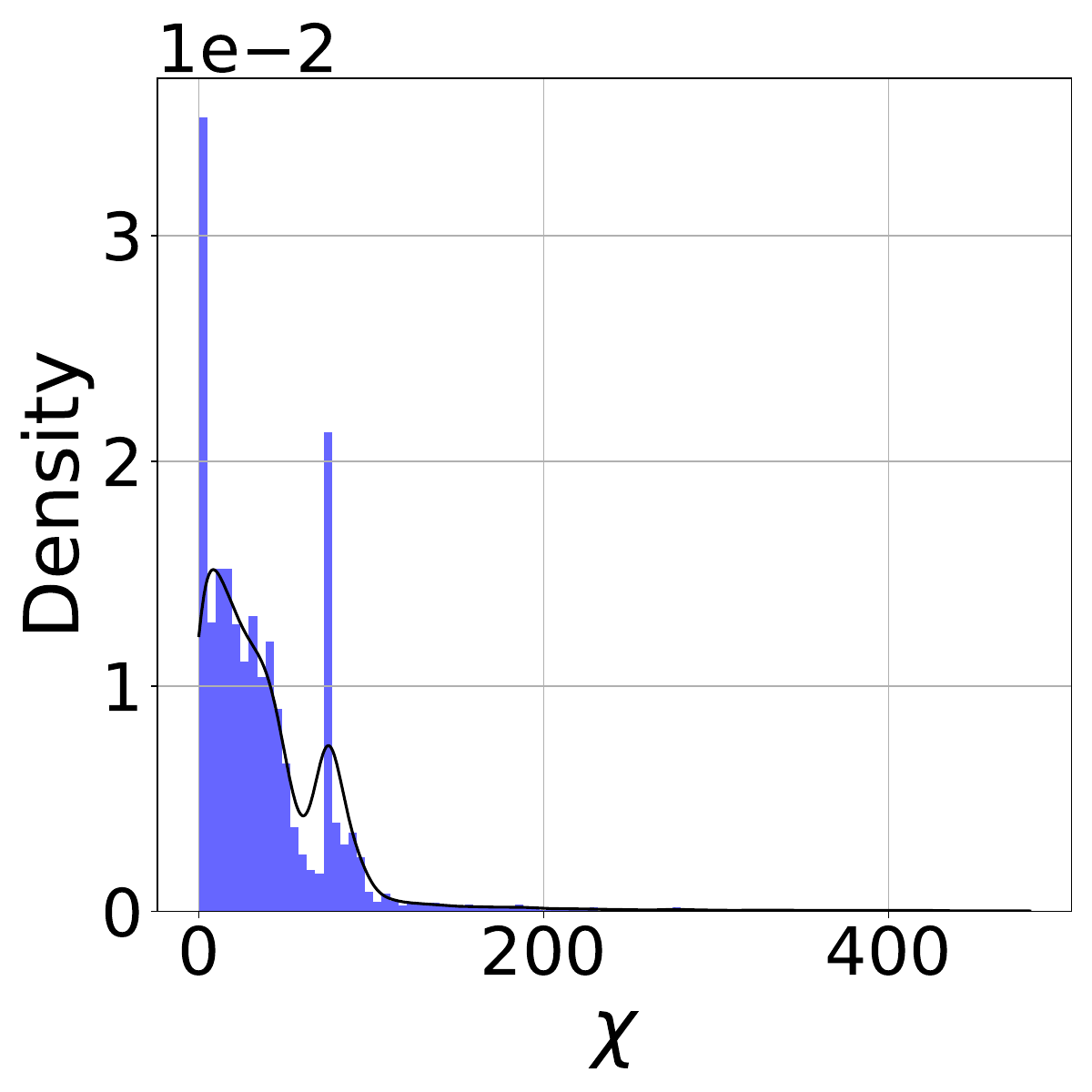}
   \end{subfigure}\\
   \begin{subfigure}{0.310\textwidth}
       \centering
       (a) $0-10000$ epochs
   \end{subfigure}
   \begin{subfigure}{0.310\textwidth}
       \centering
       (b) $10000-20000$ epochs
   \end{subfigure}
   \begin{subfigure}{0.310\textwidth}
       \centering
       (c) $20000-30000$ epochs
   \end{subfigure}
   \caption{Dynamics of $\chi=\nicefrac{\|\nabla\mathcal{L}_r(\theta)\|}{\|\nabla\mathcal{L}_b(\theta)\|}$ during optimization via \texttt{AdaptiveBGDA} \textbf{(our optimizer)}. The experiment is made on \textit{Poisson 2d-C}. To observe instability, we break the training into three parts}
   \label{fig:dominance_bgda}
\end{figure}

\subsection{Exploring Performance of \texttt{AdaptiveBGDA}}
\begin{wrapfigure}[12]{r}{0.6\textwidth}
  \centering
  \vspace{-1.4em}
  \begin{subfigure}[t]{0.48\linewidth}
    \includegraphics[width=\linewidth]{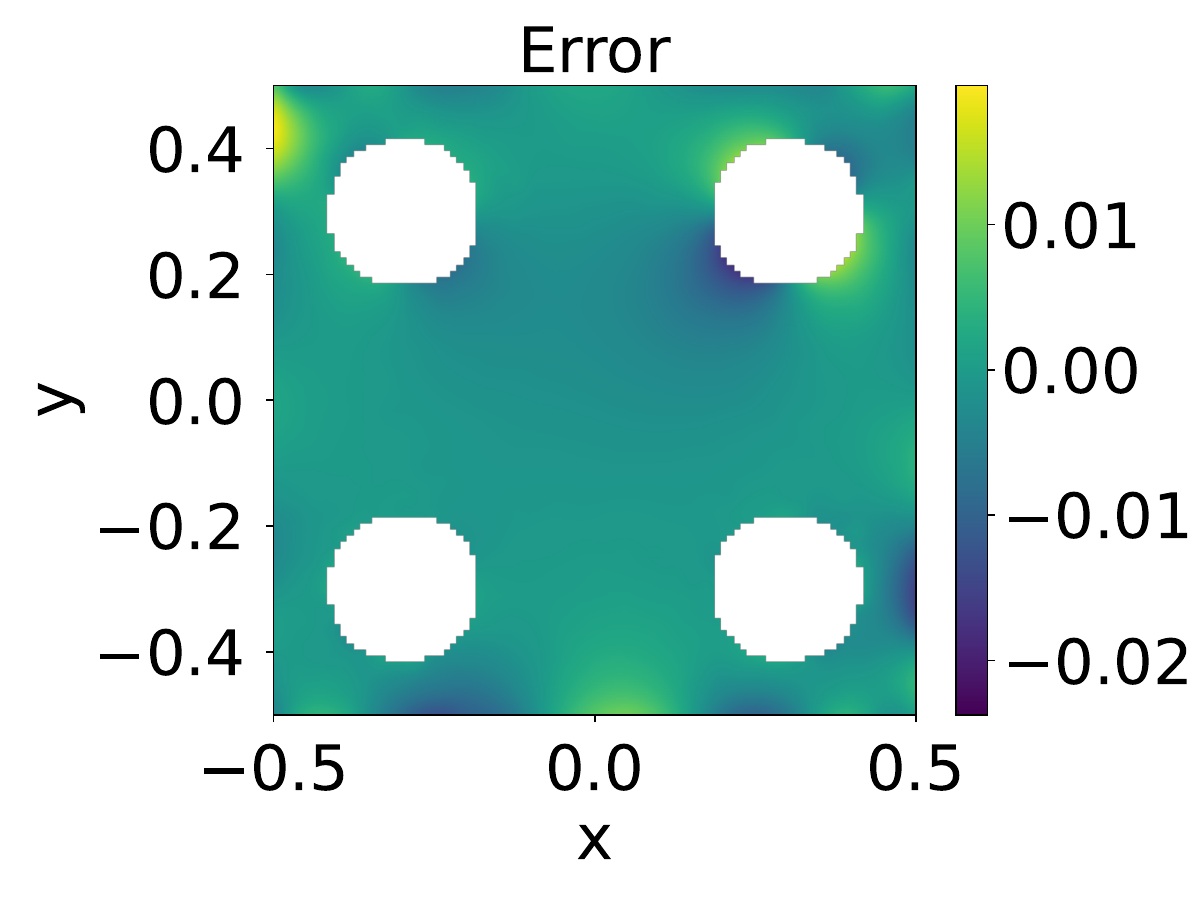}
  \end{subfigure}
  \hspace{0.02\linewidth}
  \begin{subfigure}[t]{0.48\linewidth}
    \includegraphics[width=\linewidth]{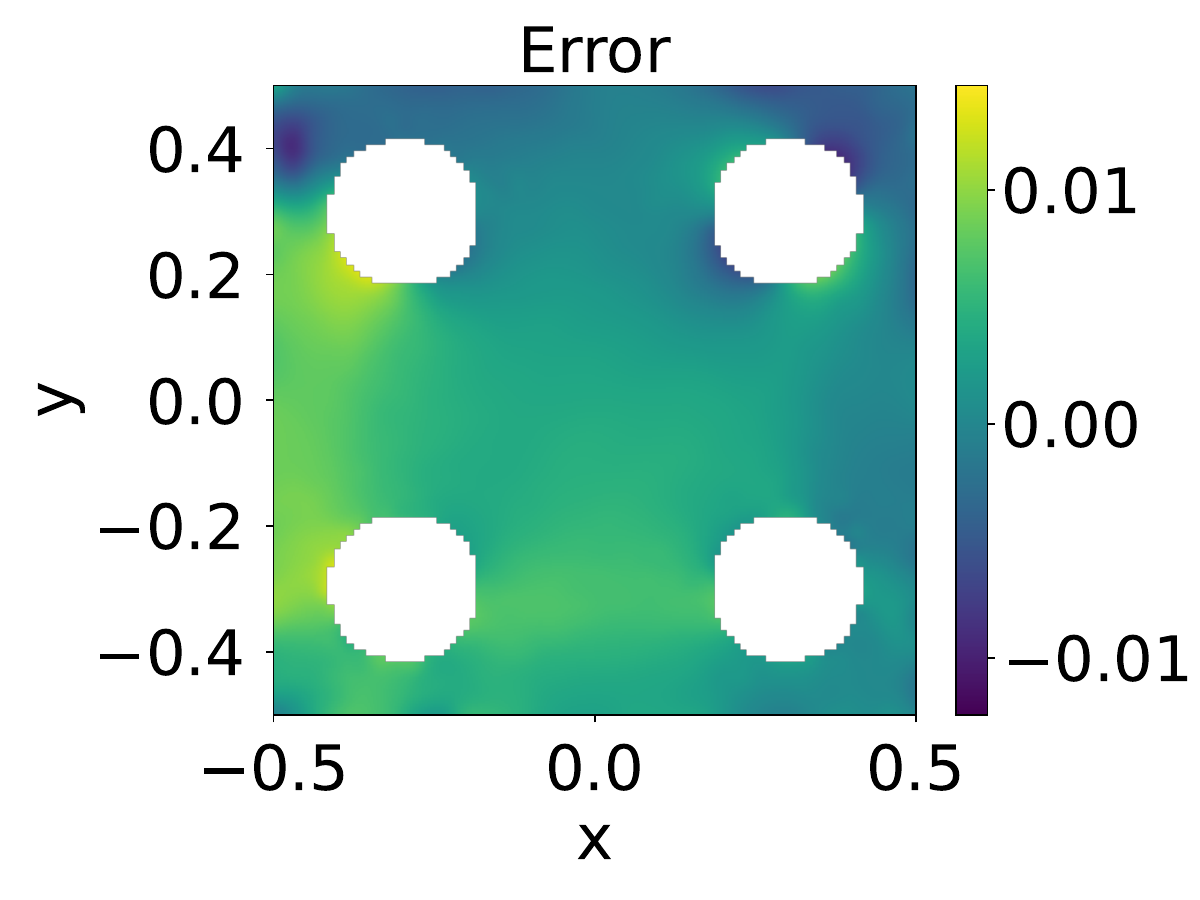}
  \end{subfigure}
  \caption{Heat maps of signed relative errors of \textit{PINN} trained to solve \textit{Poisson 2d-C}. \texttt{AdaptiveBGDA} (left) is compared with \texttt{NTK} (right).}
  \label{fig:wrap-subfigs}
\end{wrapfigure}
To demonstrate the superiority of saddle-point reformulation, we provide an extensive comparison of \texttt{AdaptiveBGDA} (Algorithm \ref{alg:adaptive_bgda}) with existing approaches. We do not adjust the hyperparameters of Algorithm \ref{alg:adaptive_bgda} for each individual PDE to show the potential to outperform the state-of-the-art result. Instead, we fix $\gamma_{\pi}=0.1$, $\gamma_{\theta}=0.008$, $\alpha_1=0.9$, $\alpha_2=0.999$, $\beta=0.999$ and use them consistently throughout the benchmark. To handle the non-convex landscape of $\mathcal{L}(\theta,\pi)$ in $\theta$, we linearly reduce $\gamma_{\theta}$ from the initial value to $0.0004$.

To evaluate the learning potential and generalization capabilities of our approach, we consider $22$ partial differential equations sourced from the \textit{PINNacle} benchmark \citep{hao2023pinnacle}. The collection of tasks covers a broad spectrum of real-world problems. In addition to different operators $\mathcal{R}_i$, $\mathcal{B}_i$, they also differ in the geometry of the $\Omega$ and $\partial\Omega$ regions. Below, we summarize the main features encountered in the selected PDEs.
\begin{itemize}
    \item \textbf{Complex geometry.} Some pieces of the region $\Omega$ are cut out. Since the domain ceases to be simple connected, the solution becomes more complicated, including in terms of numerical retrieval. Problems of this class often arise in applications. For example, the flow of a fluid through an obstacle.
    \item \textbf{Multiple domains.} The region $\Omega$ is divided into several chunks. When moving from one to another, the parameters of PDE change abruptly. The need to perform well for all domains immediately complicates the task.
    \item \textbf{Varying coefficients.} The parameters of PDE vary continuously with the coordinates. Tasks of this type have a role in many applications from heat transfer in materials to population dynamics.
    \item \textbf{Long time.} PDE needs to be solved over a large time interval. This feature is the most difficult for modern architectures and optimizers.
\end{itemize}
For comparison purposes, training of the vanilla \textit{PINN} is considered within this section. Experiments with more complex architectures can be found in Appendix \ref{ap:D}. As competitors, we consider all methods presented in \textit{PINNacle} \citep{hao2023pinnacle}: \texttt{LBFGS} \citep{byrd1995limited}, \texttt{Adam} \citep{kingma2014adam}, \texttt{MultiAdam} \citep{yao2023multiadam}, and combinations of \texttt{Adam} with \texttt{RAR} \citep{lu2021deepxde}, \texttt{LRA} \citep{wang2021understanding}, \texttt{NTK} \citep{wang2022and}.
\definecolor{best}{RGB}{0, 0, 139}
\definecolor{second}{RGB}{100, 149, 237}
\begin{table}[htbp]
\centering
\scriptsize
\renewcommand{\arraystretch}{1.2}
\setlength{\tabcolsep}{4pt}
\caption{Comparison of \texttt{AdaptiveBGDA} to the existing techniques. In all experiments, the model is trained to the performance limit. \textbf{L2RE} is used as a quality metric. We highlight the \protect\colorbox{myblue!20}{\textbf{best}} and the \protect\colorbox{mylightblue!20}{second best} results for each PDE.}
\begin{tabular}{llccccccc}
\toprule
\textbf{PDE} & & \textbf{Adam} & \textbf{LBFGS} & \textbf{LRA} & \textbf{NTK} & \textbf{RAR} & \textbf{MultiAdam} & \textbf{BGDA (this paper)} \\
\midrule

\multirow{2}{*}{Burgers}
 & 1d-C & 1.45E-2 & \cellcolor{second!20}1.33E-2 & 2.61E-2 & 1.84E-2 & 3.32E-2 & 4.85E-2 & \cellcolor{best!20}\textbf{1.30E-2} \\
 & 2d-C & \cellcolor{second!20}2.70E-1 & 4.65E-1 & \cellcolor{best!20}\textbf{2.60E-1} & 2.75E-1 & 3.45E-1 & 3.33E-1 & 4.21E-1 \\
 
\midrule
\multirow{4}{*}{Poisson}
 & 2d-C & 3.49E-2 & NaN & 1.17E-1 & \cellcolor{second!20}1.23E-2 & 6.99E-1 & 2.63E-2 & \cellcolor{best!20}\textbf{8.16E-3} \\
 & 2d-CG & 6.08E-2 & 2.96E-1 & 4.34E-2 & \cellcolor{best!20}\textbf{1.43E-2} & 6.48E-1 & 2.76E-1 & \cellcolor{second!20}1.76E-2 \\
 & 3d-CG & 3.74E-1 & 7.05E-1 & \cellcolor{second!20}1.02E-1 & 9.47E-1 & 5.76E-1 & 3.63E-1 & \cellcolor{best!20}\textbf{4.78E-2} \\
 & 2d-MS & 6.30E-1 & 1.45E+0 & 7.94E-1 & 7.48E-1 & 6.44E-1 & \cellcolor{second!20}5.90E-1 & \cellcolor{best!20}\textbf{3.48E-1} \\
 
\midrule
\multirow{4}{*}{Heat}
 & 2d-VC & 2.35E-1 & 2.32E-1 & \cellcolor{best!20}\textbf{2.12E-1} & \cellcolor{second!20}2.14E-1 & 9.66E-1 & 4.75E-1 & 2.93E-1 \\
 & 2d-MS & 6.21E-2 & \cellcolor{best!20}\textbf{1.73E-2} & 8.79E-2 & 4.40E-2 & 7.49E-2 & 2.18E-1 & \cellcolor{second!20}1.88E-2 \\
 & 2d-CG & 3.64E-2 & 8.57E-1 & 1.25E-1 & 1.16E-1 & \cellcolor{second!20}2.72E-2 & 7.12E-2 & \cellcolor{best!20}\textbf{1.01E-2} \\
 & 2d-LT & 9.99E-1 & 1.00E+0 & 9.99E-1 & 1.00E+0 & 9.99E-1 & 1.00E+0 & \cellcolor{best!20}\textbf{9.98E-1} \\
 
\midrule
\multirow{3}{*}{NS}
 & 2d-C & \cellcolor{second!20}4.70E-2 & 2.14E-1 & NaN & 1.98E-1 & 4.69E-1 & 7.27E-1 & \cellcolor{best!20}\textbf{2.24E-2} \\
 & 2d-CG & \cellcolor{second!20}1.19E-1 & NaN & 3.32E-1 & 2.93E-1 & 3.34E-1 & 4.31E-1 & \cellcolor{best!20}\textbf{7.63E-2} \\
 & 2d-LT & 9.96E-1 & \cellcolor{best!20}\textbf{9.70E-1} & 1.00E+0 & 9.99E-1 & 1.00E+0 & 1.00E+0 & \cellcolor{second!20}9.75E-1 \\

\midrule
\multirow{3}{*}{Wave}
 & 1d-C & 2.85E-1 & NaN & 3.61E-1 & \cellcolor{second!20}9.79E-2 & 5.39E-1 & 1.21E-1 & \cellcolor{best!20}\textbf{1.62E-2} \\
 & 2d-CG & 1.66E+0 & 1.33E+0 & 1.48E+0 & 2.16E+0 & 1.15E+0 & \cellcolor{second!20}1.09E+0 & \cellcolor{best!20}\textbf{7.78E-1} \\
 & 2d-MS & 1.02E+0 & 1.37E+0 & 1.02E+0 & 1.04E+0 & 1.35E+0 & \cellcolor{second!20}1.01E+0 & \cellcolor{best!20}\textbf{8.98E-1} \\
 
\midrule
\multirow{2}{*}{Chaotic}
 & GS & 1.58E-1 & NaN & \cellcolor{second!20}9.37E-2 & 2.16E-1 & 9.46E-2 & \cellcolor{second!20}9.37E-2 & \cellcolor{best!20}\textbf{9.30E-2} \\
 & KS & 9.86E-1 & NaN & \cellcolor{second!20}9.57E-1 & 9.64E-1 & 1.01E+0 & 9.61E-1 & \cellcolor{best!20}\textbf{9.53E-1} \\
 
\midrule
\multirow{2}{*}{High dim}
 & PNd & 2.58E-3 & 4.67E-4 & \cellcolor{second!20}4.58E-4 & 4.64E-3 & 3.59E-3 & 3.98E-3 & \cellcolor{best!20}\textbf{1.20E-4} \\
 & HNd & 3.61E-1 & \cellcolor{best!20}\textbf{1.19E-4} & 3.94E-1 & 3.97E-1 & 3.57E-1 & 3.02E-1 & \cellcolor{second!20}1.60E-4 \\
 
\midrule
\multirow{2}{*}{Inverse}
 & PInv & 9.42E-2 & NaN & 1.54E-1 & 1.93E-1 & \cellcolor{second!20}9.35E-2 & 1.30E-1 & \cellcolor{best!20}\textbf{8.59E-2} \\
 & HInv & 5.26E-2 & NaN & \cellcolor{second!20}5.09E-2 & 7.52E-2 & 1.52E+0 & 8.04E-2 & \cellcolor{best!20}\textbf{4.05E-2} \\

\bottomrule
\end{tabular}
\label{tab:comparison}
\end{table}

It can be seen from Table \ref{tab:comparison} that \texttt{AdaptiveBGDA} (Algorithm \ref{alg:adaptive_bgda}) is dominant in $72.7\%$ cases. The previous record of $22.7\%$ belonged to \texttt{LRA}. In $18.2\%$ cases, the quality is improved by more than double. The superiority of our method is particularly well demonstrated by the error heat maps. Such a comparison is presented in Figure \ref{fig:wrap-subfigs}. In the right part of Figure \ref{fig:wrap-subfigs}, we observe a significant region within the interior of the domain where the approximated solution exhibits a large error. The absence of such a region on the left side of Figure \ref{fig:wrap-subfigs} illustrates that we successfully address the issue of underestimating losses in the interior of the domain. Below, we analyze the performance of our approach in the conducted experiments:
\begin{itemize}
    \item Standard PDEs without special features (\textit{Burgers 1d-C}, \textit{Burgers 2d-C}, \textit{NS 2d-C}, \textit{Wave 1d-C}) have a simpler loss landscape in $\theta$. Nevertheless, \texttt{AdaptiveBGDA} gives a noticeable improvement when solving tasks from this class. 
    \item One of the strongest quality gains is observed on problems with multiple subdomains (\textit{Poisson 3d-CG}, \textit{Poisson 2d-MS}). Even without fine-tuning, our approach turns out to be good enough to adapt to them. Indeed, the saddle-point setting is introduced in order to fairly account for the contribution of operators to losses. The expected result is a better adaptation to all subdomains simultaneously.
    \item For problems with complex geometry (\textit{Poisson 2d-C}, \textit{Poisson 2d-CG}, \textit{Poisson 3d-CG}, \textit{Heat 2d-CG}, \textit{NS 2d-CG}, \textit{Wave 2d-CG}), an improvement is also observed in five out of six cases.
    \item Unexpectedly, \texttt{AdaptiveBGDA} shows quality gains in exotic settings such as \textit{Chaotic} or \textit{Inverse}.
\end{itemize}
\section{Discussion}
In this paper, we note that even advanced weighting schemes for \textit{PINN} do not achieve a fully balanced optimization process. To address this issue, we reformulate the training problem as the nonconvex-strongly concave SPP of non-Euclidean nature. In addition to theoretical analysis, we conduct a comprehensive empirical study. We observe a significant increase in model quality (Table \ref{tab:comparison}). We also note an increase in the stability of the optimization process (Figure \ref{fig:dominance_bgda}). Specifically, the losses within the domain decrease approximately as rapidly as those at the boundary. This effect is noticeable in practice (Figure \ref{fig:wrap-subfigs}).
\bibliographystyle{plainnat}
\bibliography{main}
\begin{appendixpart}
\section{Additional Experiments}\label{ap:D}
In this section, we provide additional information to accompany the work. In Section \ref{sec:exps}, we mentioned that \texttt{Adam}+\texttt{Adam} and \texttt{RMSProp}+\texttt{RMSProp} were less effective compared to \texttt{Adam}+\texttt{RMSProp}. Below we present the results of training a vanilla \textit{PINN} on \textit{Poisson 2d-C}.
\begin{table}[htbp]
\centering
\renewcommand{\arraystretch}{1.3}
\setlength{\tabcolsep}{4pt} 
\begin{tabularx}{\linewidth}{@{} l *{3}{>{\centering\arraybackslash}X} @{}}
\toprule
\textbf{Approach (Descent-Ascent)} & \texttt{Adam+RMSProp} & \texttt{Adam+Adam} & \texttt{RMSProp+RMSProp} \\
\midrule
\textbf{L2RE} & \protect\colorbox{myblue!20}{\textbf{8.16E-3}} &  \protect\colorbox{mylightblue!20}{4.45E-2} & 6.02E-1 \\
\bottomrule
\end{tabularx}
\caption{Comparison of different approaches to incorporating adaptivity in Algorithm \ref{alg:bgda}. \textbf{L2RE} is used as a quality metric. We highlight the \protect\colorbox{myblue!20}{\textbf{best}} and the \protect\colorbox{mylightblue!20}{second best} results}
\label{tab:comparison_adaptivity}
\end{table}
We hypothesize that \texttt{Adam}+\texttt{RMSProp} shows the best quality because \texttt{Adam} allows to account for the poor loss landscape in $\theta$ via gradient smoothing, while the landscape in $\pi$ is strongly convex, and steps along the current gradient are more appropriate.

In addition, we use more modern \textit{PINN} architectures provided in \textit{PINNacle} \citep{hao2023pinnacle} to validate the theoretical insights. Below we summarize their key features:
\begin{itemize}
    \item \textbf{\textit{gPINN}.} It is known that the residual $(\mathcal{R}_i[u]-f_i)(x)$ must be zero inside the domain. Consequently, its derivative must also be equal to zero. This approach proposes to modify the objective by adding $\|\nicefrac{\partial}{\partial x}(\mathcal{R}_i[u]-f_i)(x)\|^2$ as a regularization. In \citep{yu2022gradient}, it is shown that \textit{gPINN} 
    has improved quality of the approximation inside the domain $\Omega$. 
    \item \textbf{\textit{GAAF}.} This architecture relies on adaptive activation functions (both layer- and neuron-wise). \citep{jagtap2020adaptive} demonstrates the advantages of this approach over vanilla \textit{PINN}s.
    \item \textbf{\textit{LAAF}.} Considers \textit{GAAF} with slope recovery term. For the details, see \citep{jagtap2020locally}.
\end{itemize}
Below we provide the comparison of the best known \textit{L2RE}s with ones provided by our approach.
\definecolor{best}{RGB}{0, 0, 139}
\begin{table}[htbp]
\centering
\scriptsize
\renewcommand{\arraystretch}{1.2}
\setlength{\tabcolsep}{4pt}
\caption{Training modern \textit{PINN} architectures via \texttt{AdaptiveBGDA}. In all experiments, the model is trained to the performance limit. \textbf{L2RE} is used as a quality metric. We highlight the \protect\colorbox{best!20}{\textbf{best}} results for each PDE and architecture.}
\begin{tabular}{llcccccc}
\toprule
\textbf{PDE} & & \multicolumn{2}{c}{\textit{gPINN}} & \multicolumn{2}{c}{\textit{LAAF}} & \multicolumn{2}{c}{\textit{GAAF}} \\
\cmidrule(lr){3-4} \cmidrule(lr){5-6} \cmidrule(lr){7-8}
 & & Best & Ours & Best & Ours & Best & Ours \\
\midrule
\multirow{2}{*}{Burgers}
 & 1d-C & 2.16E-1 & \cellcolor{best!20}\textbf{1.36e-2} & 1.43E-2 & \cellcolor{best!20}\textbf{1.30E-2} & 5.20E-2 & \cellcolor{best!20}\textbf{1.30E-2} \\
 & 2d-C & \cellcolor{best!20}\textbf{3.27E-1} & 5.11E-1 & \cellcolor{best!20}\textbf{2.77E-1} & 4.42E-1 & \cellcolor{best!20}\textbf{2.95E-1} & 5.09E-1 \\
 
\midrule
\multirow{4}{*}{Poisson}
 & 2d-C & 6.87E-1 & \cellcolor{best!20}\textbf{5.85E-1} & 7.68E-1 & \cellcolor{best!20}\textbf{1.38E-2} & 6.04E-1 & \cellcolor{best!20}\textbf{4.37E-3} \\
 & 2d-CG & 7.92E-1 & \cellcolor{best!20}\textbf{4.45E-1} & 4.80E-1 & \cellcolor{best!20}\textbf{1.11E-2} & 8.71E-1 & \cellcolor{best!20}\textbf{2.82E-2} \\
 & 3d-CG & \cellcolor{best!20}\textbf{4.85E-1} & 5.65E-1 & 5.79E-1 & \cellcolor{best!20}\textbf{5.43E-2} & 5.02E-1 & \cellcolor{best!20}\textbf{9.22E-2} \\
 & 2d-MS & 6.16E-1 & \cellcolor{best!20}\textbf{4.55E-1} & 5.93E-1 & \cellcolor{best!20}\textbf{3.72E-1} & 9.31E-1 & \cellcolor{best!20}\textbf{4.07E-1} \\
 
\midrule
\multirow{4}{*}{Heat}
 & 2d-VC & 2.12E+0 & \cellcolor{best!20}\textbf{1.01E+0} & 6.42E-1 & \cellcolor{best!20}\textbf{2.57E-1} & 8.49E-1 & \cellcolor{best!20}\textbf{7.03E-1} \\
 & 2d-MS & 1.13E-1 & \cellcolor{best!20}\textbf{3.95E-2} & 7.40E-2 & \cellcolor{best!20}\textbf{1.85E-2} & 9.85E-1 & \cellcolor{best!20}\textbf{6.67E-2} \\
 & 2d-CG & \cellcolor{best!20}\textbf{9.38E-2} & 1.09E-1 & \cellcolor{best!20}\textbf{2.39E-2} & 4.06E-2 & 4.61E-1 & \cellcolor{best!20}\textbf{1.18E-2} \\
 & 2d-LT & 1.00E+0 & \cellcolor{best!20}\textbf{9.99E-1} & 9.99E-1 & \cellcolor{best!20}\textbf{9.98E-1} & 9.99E-1 & \cellcolor{best!20}\textbf{9.98E-1} \\
 
\midrule
\multirow{3}{*}{NS}
 & 2d-C & 7.70E-2 & \cellcolor{best!20}\textbf{6.22E-2} & \cellcolor{best!20}\textbf{3.60E-2} & 8.14E-2 & 3.79E-2 & \cellcolor{best!20}\textbf{2.55E-2} \\
 & 2d-CG & 1.54E-1 & \cellcolor{best!20}\textbf{1.11E-1} & \cellcolor{best!20}\textbf{8.42E-2} & 1.25E-1 & 1.74E-1 & \cellcolor{best!20}\textbf{1.06E-1} \\
 & 2d-LT & 9.95E-1 & \cellcolor{best!20}\textbf{9.63E-1} & \cellcolor{best!20}\textbf{9.98E-1} & 9.99E-1 & 9.99E-1 & 9.99E-1 \\
 
\midrule
\multirow{3}{*}{Wave}
 & 1d-C & 5.56E-1 & \cellcolor{best!20}\textbf{6.95E-2} & 4.54E-1 & \cellcolor{best!20}\textbf{2.52E-2} & 6.77E-1 & \cellcolor{best!20}\textbf{2.97E-2} \\
 & 2d-CG & 8.14E-1 & \cellcolor{best!20}\textbf{7.82E-1} & 8.10E-1 & \cellcolor{best!20}\textbf{7.86E-1} & 7.94E-1 & \cellcolor{best!20}\textbf{7.81E-1} \\
 & 2d-MS & 1.02E+0 & \cellcolor{best!20}\textbf{9.09E-1} & 1.06E+0 & \cellcolor{best!20}\textbf{9.99E-1} & 1.06E+0 & \cellcolor{best!20}\textbf{9.99E-1} \\
 
\midrule
\multirow{2}{*}{Chaotic}
 & GS & 2.48E-1 & \cellcolor{best!20}\textbf{9.30E-2} & \cellcolor{best!20}\textbf{9.47E-2} & 9.49E-2 & 9.46E-2 & \cellcolor{best!20}\textbf{9.32E-2} \\
 & KS & 9.94E-1 & \cellcolor{best!20}\textbf{9.68E-1} & 1.01E+0 & \cellcolor{best!20}\textbf{9.99E-1} & 1.00E+0 & \cellcolor{best!20}\textbf{9.99E-1} \\
 
\midrule
\multirow{2}{*}{High dim}
 & PNd & 5.05E-3 & \cellcolor{best!20}\textbf{1.65E-3} & 4.14E-3 & \cellcolor{best!20}\textbf{8.00E-4} & 7.75E-2 & \cellcolor{best!20}\textbf{1.57E-3} \\
 & HNd & 3.17E-1 & \cellcolor{best!20}\textbf{9.00E-4} & 5.22E-1 & \cellcolor{best!20}\textbf{3.20E-4} & 5.21E-1 & \cellcolor{best!20}\textbf{3.20E-4} \\
 
\midrule
\multirow{2}{*}{Inverse}
 & PInv & \cellcolor{best!20}\textbf{8.03E-2} & 8.45E-1 & 1.30E-1 & \cellcolor{best!20}\textbf{9.49E-2} & 2.54E-1 & \cellcolor{best!20}\textbf{1.31E-1} \\
 & HInv & 4.84E+0 & \cellcolor{best!20}\textbf{6.71E-1} & 5.59E-1 & \cellcolor{best!20}\textbf{5.16E-2} & 2.12E-1 & \cellcolor{best!20}\textbf{5.97E-2} \\
 
\bottomrule
\end{tabular}
\label{tab:comparison_arch}
\end{table}
Table \ref{tab:comparison_arch} demonstrates that our scheme dominates not only for vanilla \textit{PINN}s, but also for novel architectures. The percentage of superiority is $81.8\%$ for \textit{gPINN}, $72.7\%$ for \textit{LAAF} and $90.1\%$ for \textit{GAAF}. Moreover, there is a significant drawdown only for \textit{Burgers 2d-C}. 

\section{Strong Concavity of the Objective}
In this section, we prove Lemma \ref{eq:lemma_1}. It follows obviously from the form of the objective (see \ref{eq:pinn_saddle}) and Assumption \ref{ass:p}.
\begin{lemma}(\textbf{Lemma \ref{eq:lemma_1}).}
    Consider the problem \eqref{eq:pinn_saddle} under Assumption \ref{ass:p}. Then, for every $\theta\in\Rd$ the function $\mathcal{L}(\theta,\pi)$ is \textbf{$\lambda$-strongly concave}, i.e. for all $\pi_1,\pi_2\in S$ it satisfies
    \begin{align*}
        \mathcal{L}(\theta,\pi_1)\leq\mathcal{L}(\theta,\pi_2)+\langle\nabla_{\psi}\mathcal{L}(\theta,\pi_2),\pi_1-\pi_2\rangle-\frac{\lambda}{2}\left(D_{\psi}(\pi_1,\pi_2)+D_{\psi}(\pi_2,\pi_1)\right).
    \end{align*}
\end{lemma}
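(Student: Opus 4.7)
The plan is to exploit the explicit linear-plus-Bregman structure of the objective in the variable $\pi$. Writing $\ell_i(\theta) = \mathcal{L}_{r,i}(\theta)$ for $i \in [1,M_r]$ and $\ell_j(\theta) = \mathcal{L}_{b,j}(\theta)$ for $j \in [M_r+1,M]$, the objective decomposes as $\mathcal{L}(\theta,\pi) = \langle \ell(\theta), \pi\rangle - \lambda D_{\psi}(\pi\|\hat{\pi})$, so the only non-linear dependence on $\pi$ comes from the Bregman regularizer. Strong concavity must therefore be inherited entirely from the 1-strong convexity of $\psi$ given in Assumption~\ref{ass:p}.

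The first step I would take is to expand $\mathcal{L}(\theta,\pi_1) - \mathcal{L}(\theta,\pi_2)$ using the definition and apply the three-point identity for Bregman divergences, obtained by substituting Definition~\ref{def:Bregman} three times:
\begin{align*}
D_{\psi}(\pi_1,\hat{\pi}) - D_{\psi}(\pi_2,\hat{\pi}) = D_{\psi}(\pi_1,\pi_2) + \langle \nabla\psi(\pi_2) - \nabla\psi(\hat{\pi}),\, \pi_1 - \pi_2 \rangle.
\end{align*}
Recognising that $\nabla_\pi \mathcal{L}(\theta,\pi_2) = \ell(\theta) - \lambda\bigl(\nabla\psi(\pi_2) - \nabla\psi(\hat{\pi})\bigr)$, this rearranges into the exact identity
\begin{align*}
\mathcal{L}(\theta,\pi_1) = \mathcal{L}(\theta,\pi_2) + \langle \nabla_\pi \mathcal{L}(\theta,\pi_2),\, \pi_1-\pi_2\rangle - \lambda\, D_{\psi}(\pi_1,\pi_2),
\end{align*}
which is already a one-sided Bregman form of $\lambda$-strong concavity once Assumption~\ref{ass:p} is invoked to ensure $D_{\psi}(\pi_1,\pi_2)\geq 0$.

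To obtain the symmetrised form stated in the lemma, I would then write the same identity with $\pi_1$ and $\pi_2$ swapped and combine the two. Adding them yields the co-coercivity-type identity
\begin{align*}
\langle \nabla_\pi \mathcal{L}(\theta,\pi_1) - \nabla_\pi \mathcal{L}(\theta,\pi_2),\, \pi_1-\pi_2 \rangle = -\lambda \bigl(D_{\psi}(\pi_1,\pi_2) + D_{\psi}(\pi_2,\pi_1)\bigr),
\end{align*}
and substituting this back into the one-sided identity distributes the penalty evenly between $D_{\psi}(\pi_1,\pi_2)$ and $D_{\psi}(\pi_2,\pi_1)$ to produce the required symmetric Bregman term with coefficient $\lambda/2$.

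The hard part here is not conceptual but bookkeeping: because $D_{\psi}$ is asymmetric (e.g.\ KL in the simplex case), one must keep track of the direction of the Bregman arguments throughout the three-point identity and its swap, since confusing $D_{\psi}(\pi_1,\pi_2)$ with $D_{\psi}(\pi_2,\pi_1)$ would alter the constant. Everything else is routine algebra, and the role of Assumption~\ref{ass:p} is confined to guaranteeing non-negativity of all Bregman terms so that equalities can be relaxed into the stated inequality.
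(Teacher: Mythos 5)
Your approach is genuinely different from the paper's: rather than observing $\nabla^2_\pi\mathcal{L}(\theta,\pi) = -\lambda\nabla^2\psi(\pi)$ and citing the Hessian characterization of relative strong concavity from \citet{lu2018relatively}, you compute directly from the linear-plus-regularizer structure. The first two stages are correct and in fact more informative than the paper's argument: you obtain the exact one-sided identity
\begin{align*}
\mathcal{L}(\theta,\pi_1) = \mathcal{L}(\theta,\pi_2) + \langle\nabla_\pi\mathcal{L}(\theta,\pi_2),\pi_1-\pi_2\rangle - \lambda D_\psi(\pi_1,\pi_2),
\end{align*}
and, by swapping $\pi_1\leftrightarrow\pi_2$ and adding, the exact gradient-monotonicity identity $\langle\nabla_\pi\mathcal{L}(\theta,\pi_1)-\nabla_\pi\mathcal{L}(\theta,\pi_2),\pi_1-\pi_2\rangle = -\lambda\bigl(D_\psi(\pi_1,\pi_2)+D_\psi(\pi_2,\pi_1)\bigr)$. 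The latter is precisely what the paper invokes downstream (e.g.\ in the proof of Lemma~\ref{lemma:distance}, with a looser constant $\lambda/2$).

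The final step, however, has a real gap. There is no valid sense in which you can ``substitute the gradient identity back into the one-sided identity'' to turn $-\lambda D_\psi(\pi_1,\pi_2)$ into $-\tfrac{\lambda}{2}\bigl(D_\psi(\pi_1,\pi_2)+D_\psi(\pi_2,\pi_1)\bigr)$: your one-sided relation is an \emph{equality}, so the stated inequality would be equivalent to $D_\psi(\pi_1,\pi_2)\geq D_\psi(\pi_2,\pi_1)$, which fails in general for an asymmetric divergence such as KL. For example, with $\hat\pi=\pi_2=(\tfrac12,\tfrac12)$ and $\pi_1=(\tfrac{9}{10},\tfrac{1}{10})$ (and the linear part vanishing), your identity gives $\mathcal{L}(\theta,\pi_1)=-\lambda D_\psi(\pi_1,\pi_2)\approx-0.37\lambda$, which exceeds the claimed bound $\approx-0.44\lambda$. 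What your computation does correctly deliver, and what the downstream lemmas actually rely on, is the one-sided inequality $\mathcal{L}(\theta,\pi_1)\leq\mathcal{L}(\theta,\pi_2)+\langle\nabla_\pi\mathcal{L}(\theta,\pi_2),\pi_1-\pi_2\rangle-\lambda D_\psi(\pi_1,\pi_2)$ together with the gradient identity; the symmetrised function-value form in the lemma statement is not implied by your argument, nor by the Hessian characterization the paper cites, which likewise yields only the one-sided form.
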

\begin{proof}
    Note that $\nabla^2_{\pi}\mathcal{L}(\theta,\pi)=-\lambda\nabla^2\psi(\pi)$. The function $\mathcal{L}(\theta,\pi)$ is $\mu$-strongly concave related to $D_{\psi}$, if $\nabla^2_{\pi}\mathcal{L}(\theta,\pi)\preceq-\mu\nabla^2\psi(\pi)$ \citep{lu2018relatively}. Therefore, the objective is $\lambda$-stongly relatively concave. 
\end{proof}

\section{Proof of Lemma \ref{lemma:distance}}\label{ap:A}
We begin the presentation of the analysis with a key result guaranteeing convergence. It demonstrates that the distance between $\pi^t$ and the exact maximum of $\pi^*(\theta^t)$ has a suitable dynamics with increasing $t$.
\begin{lemma}\textbf{(Lemma \ref{lemma:distance}).}
    Consider the problem \eqref{eq:pinn_saddle} under Assumptions \ref{ass:theta}, \ref{ass:p}. Then, Algorithm \ref{alg:bgda} with tuning
    \begin{align*}
        \gamma_{\pi}=\frac{\lambda}{4L^2},\quad\gamma_{\theta}\leq\frac{1}{184\kappa^4L}
    \end{align*}
    produces such $\{(\theta^t,\pi^t)\}_{t=1}^T$, that
    \begin{align*}
        D_{\psi}(\pi^*(\theta^{t+1}),\pi^{t+1})\leq\left(1-\frac{1}{64\kappa^2}\right)D_{\psi}(\pi^*(\theta^t),\pi^{t})+264\gamma_{\theta}^2\kappa^6\|\nabla\Phi(\theta^t)\|^2,
    \end{align*}
    where $\kappa=\nicefrac{L}{\lambda}$ is the condition number of $\mathcal{L}(\theta,\pi)$ in $\pi$.
\end{lemma}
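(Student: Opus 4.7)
The plan is to track the one-step evolution of the Bregman gap $D_\psi(\pi^*(\theta^t),\pi^t)$ under Algorithm~\ref{alg:bgda}. The difficulty is that both endpoints move from iteration $t$ to $t+1$: the mirror-ascent update in Line~\ref{line:ascent} drags $\pi^t$ towards the current maximiser $\pi^*(\theta^t)$, while the descent step in Line~\ref{line:descent} simultaneously shifts the target to $\pi^*(\theta^{t+1})$. I would split the argument into three blocks: (i) a contraction towards $\pi^*(\theta^t)$, (ii) a transfer of the target from $\pi^*(\theta^t)$ to $\pi^*(\theta^{t+1})$, and (iii) a bound on the drift using $\|\nabla\Phi(\theta^t)\|$.

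For (i), I would write the optimality condition of the mirror-ascent subproblem together with the Bregman three-point identity:
\begin{align*}
\gamma_\pi\langle -\nabla_\pi\mathcal{L}(\theta^t,\pi^t),\,\pi^{t+1}-\pi^*(\theta^t)\rangle + D_\psi(\pi^*(\theta^t),\pi^{t+1}) + D_\psi(\pi^{t+1},\pi^t) \leq D_\psi(\pi^*(\theta^t),\pi^t).
\end{align*}
Combining the inner product with the $\lambda$-strong relative concavity of Lemma~\ref{eq:lemma_1} applied at $\pi^{t+1}$ and $\pi^*(\theta^t)$, using $L$-smoothness to dominate $\nabla_\pi\mathcal{L}(\theta^t,\pi^t)-\nabla_\pi\mathcal{L}(\theta^t,\pi^{t+1})$, and Assumption~\ref{ass:p} to pass between $\|\cdot\|^2$ and $D_\psi$, I expect to arrive at a clean contraction $D_\psi(\pi^*(\theta^t),\pi^{t+1})\leq (1-c/\kappa^2)\,D_\psi(\pi^*(\theta^t),\pi^t)$ for an appropriate step size $\gamma_\pi\asymp\lambda/L^2$.

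For (ii), I would apply a Young-type inequality for the Bregman divergence,
\begin{align*}
D_\psi(\pi^*(\theta^{t+1}),\pi^{t+1}) \leq (1+\alpha)\,D_\psi(\pi^*(\theta^t),\pi^{t+1}) + C(\alpha)\,\|\pi^*(\theta^{t+1})-\pi^*(\theta^t)\|^2,
\end{align*}
with $\alpha\asymp 1/\kappa^2$, and then invoke the Lipschitz estimate $\|\pi^*(\theta_1)-\pi^*(\theta_2)\|\leq\kappa\|\theta_1-\theta_2\|$, which is a standard consequence of smoothness and strong concavity via the optimality characterisation of $\pi^*$. For (iii), I would write $\|\theta^{t+1}-\theta^t\|=\gamma_\theta\|\nabla_\theta\mathcal{L}(\theta^t,\pi^t)\|$, add and subtract $\nabla_\theta\mathcal{L}(\theta^t,\pi^*(\theta^t))=\nabla\Phi(\theta^t)$, and control the residual via $L\|\pi^t-\pi^*(\theta^t)\|\leq L\sqrt{2D_\psi(\pi^*(\theta^t),\pi^t)}$ from Assumption~\ref{ass:p}.

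Assembling the three blocks gives a recursion of the shape
\begin{align*}
D_\psi(\pi^*(\theta^{t+1}),\pi^{t+1}) \leq \bigl[(1+\alpha)(1-c/\kappa^2) + C_1\gamma_\theta^2\kappa^4 L^2/\alpha\bigr]\,D_\psi(\pi^*(\theta^t),\pi^t) + (C_2\gamma_\theta^2\kappa^4/\alpha)\,\|\nabla\Phi(\theta^t)\|^2,
\end{align*}
and balancing $\alpha$ at order $1/\kappa^2$ together with a cap $\gamma_\theta=O(1/(\kappa^4 L))$ collapses the bracket into $1-1/(64\kappa^2)$ and the second coefficient into $264\gamma_\theta^2\kappa^6$. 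The main obstacle is step (ii): the Bregman divergence has no clean triangle inequality, so the transfer from $\pi^*(\theta^t)$ to $\pi^*(\theta^{t+1})$ must be derived from the three-point identity combined with a careful bound on the cross term $\langle\nabla\psi(\pi^*(\theta^t))-\nabla\psi(\pi^{t+1}),\,\pi^*(\theta^{t+1})-\pi^*(\theta^t)\rangle$, absorbed without any extra smoothness of $\psi$ beyond Assumption~\ref{ass:p}. Pinning the constants down to the tight values $1/(64\kappa^2)$ and $264\kappa^6$ is what dictates the specific step-size prescriptions that implicitly underlie the lemma.
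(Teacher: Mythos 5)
Your three-block decomposition mirrors the paper's proof exactly, and blocks (i) and (iii) go through precisely as you describe: block (i) uses the mirror-ascent optimality condition, the three-point identity, the optimality of $\pi^*(\theta^t)$, strong concavity, Young's inequality, smoothness, and $1$-strong convexity of $\psi$ to land on $D_\psi(\pi^*(\theta^t),\pi^{t+1})\leq(1-\nicefrac{1}{16\kappa^2})D_\psi(\pi^*(\theta^t),\pi^t)$ with $\gamma_\pi=\nicefrac{\lambda}{4L^2}$, and block (iii) derives $D_\psi(\pi^*(\theta^t),\pi^*(\theta^{t+1}))+D_\psi(\pi^*(\theta^{t+1}),\pi^*(\theta^t))\leq4\kappa^2\|\theta^{t+1}-\theta^t\|^2$ by combining the two monotonicity inequalities from the optimality of $\pi^*(\theta^t)$ and $\pi^*(\theta^{t+1})$ with strong concavity, then adds and subtracts $\nabla\Phi(\theta^t)$.

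The place you flag as "the main obstacle" but leave unresolved is genuinely the crux, and your outline would not close without one more idea. You propose deriving a $(1+\alpha)$-type transfer inequality from the three-point identity plus "a careful bound on the cross term $\langle\nabla\psi(\pi^*(\theta^t))-\nabla\psi(\pi^{t+1}),\pi^*(\theta^{t+1})-\pi^*(\theta^t)\rangle$, absorbed without any extra smoothness of $\psi$." But after a Young split, absorbing $\|\nabla\psi(\pi^*(\theta^t))-\nabla\psi(\pi^{t+1})\|^2$ into $D_\psi(\pi^*(\theta^t),\pi^{t+1})$ requires a Lipschitz bound on $\nabla\psi$ --- exactly the smoothness of $\psi$ you correctly note you do not have (for negative entropy on the simplex, $\nabla\psi$ blows up at the boundary). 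The paper closes this with a \emph{structural identity of the objective} \eqref{eq:pinn_saddle}: because $\mathcal{L}(\theta,\pi)$ is linear in $\pi$ minus $\lambda D_\psi(\pi\|\hat\pi)$, one has $\nabla\psi(\pi^*(\theta^t))-\nabla\psi(\pi^{t+1})=\frac{1}{\lambda}\bigl(\nabla_\pi\mathcal{L}(\theta^t,\pi^{t+1})-\nabla_\pi\mathcal{L}(\theta^t,\pi^*(\theta^t))\bigr)$. This converts the $\nabla\psi$ difference into a $\nabla_\pi\mathcal{L}$ difference controllable by Assumption \ref{ass:theta}, giving $\|\nabla\psi(\pi^*(\theta^t))-\nabla\psi(\pi^{t+1})\|\leq\kappa\|\pi^{t+1}-\pi^*(\theta^t)\|\leq\kappa\sqrt{2D_\psi(\pi^*(\theta^t),\pi^{t+1})}$, and then Young with $\alpha=\nicefrac{\lambda^3}{32L^4}$ produces precisely the $\bigl(1+\nicefrac{1}{32\kappa^2}\bigr)$ prefactor you anticipate. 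Without exploiting this specific form of $\mathcal{L}$, your block (ii) has no valid mechanism; once it is in place, the rest of your assembly of the recursion and the step-size cap $\gamma_\theta\leq\nicefrac{1}{184\kappa^4L}$ delivers the stated constants.
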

\begin{proof}
    Before proceeding to the proof, let us recall the three-point identity. It plays a key role in the analysis of Bregman methods.
    \begin{align}\label{eq:three-point}
        D_{\psi}(x,y)-D_{\psi}(x,z)-D_{\psi}(z,y)=\langle 
\nabla\psi(z)-\nabla\psi(y),x-z \rangle.
    \end{align}
    To begin, we use \eqref{eq:three-point} in the form
    \begin{align}\label{eq:start_of_lemma}
    \begin{split}
        D_{\psi}(\pi^*(\theta^{t+1}),\pi^{t+1})=&D_{\psi}(\pi^*(\theta^{t+1}),\pi^*(\theta^{t}))+D_{\psi}(\pi^*(\theta^{t}),\pi^{t+1})\\&+\langle 
\nabla\psi(\pi^*(\theta^{t}))-\nabla\psi(\pi^{t+1}),\pi^*(\theta^{t+1})-\pi^*(\theta^{t}) \rangle.
    \end{split}
    \end{align}
    Further, we write the optimality condition for Line \ref{line:ascent}:
    \begin{align*}
        \left\langle -\gamma_{\pi}\nabla_{\pi}\mathcal{L}(\theta^t,\pi^t)+[\nabla\psi(\pi^{t+1})-\nabla\psi(\pi^t)],\pi^*(\theta^t)-\pi^{t+1} \right\rangle\geq0.
    \end{align*}
    Applying \eqref{eq:three-point}, we obtain
    \begin{align*}
        -\gamma_{\pi}\left\langle\nabla_{\pi}\mathcal{L}(\theta^t,\pi^t),\pi^*(\theta^t)-\pi^{t+1}\right\rangle+D_{\psi}(\pi^*(\theta^t),\pi^t)-D_{\psi}(\pi^*(\theta^t),\pi^{t+1})-D_{\psi}(\pi^{t+1},\pi^t)\geq0.
    \end{align*}
    After re-arranging the terms, we get
    \begin{align}\label{eq:2}
        D_{\psi}(\pi^*(\theta^t),\pi^{t+1})\leq D_{\psi}(\pi^*(\theta^t),\pi^t)-D_{\psi}(\pi^{t+1},\pi^t)-\gamma_{\pi}\left\langle\nabla_{\pi}\mathcal{L}(\theta^t,\pi^t),\pi^*(\theta^t)-\pi^{t+1}\right\rangle.
    \end{align}
    Since $\pi^*(\theta^t)$ is the exact maximum of $\mathcal{L}(\theta^t,\pi)$ in $\pi$, there is another optimility condition
    \begin{align*}
        \gamma_{\pi}\left\langle \nabla_{\pi}\mathcal{L}(\theta^t,\pi^*(\theta^t)),\pi^*(\theta^t)-\pi \right\rangle\geq0.
    \end{align*}
    Substituting $\pi=\pi^{t+1}$ and summing it with \eqref{eq:2}, we derive
    \begin{align*}
        D_{\psi}(\pi^*(\theta^t),\pi^{t+1})\leq& D_{\psi}(\pi^*(\theta^t),\pi^t)-D_{\psi}(\pi^{t+1},\pi^t)\\&+\gamma_{\pi}\left\langle\nabla_{\pi}\mathcal{L}(\theta^t,\pi^*(\theta^t))-\nabla_{\pi}\mathcal{L}(\theta^t,\pi^t),\pi^*(\theta^t)-\pi^{t+1}\right\rangle\\\leq&D_{\psi}(\pi^*(\theta^t),\pi^t)-D_{\psi}(\pi^{t+1},\pi^t)\\&+\gamma_{\pi}\left\langle\nabla_{\pi}\mathcal{L}(\theta^t,\pi^*(\theta^t))-\nabla_{\pi}\mathcal{L}(\theta^t,\pi^t),\pi^*(\theta^t)-\pi^{t}\right\rangle\\&+\gamma_{\pi}\left\langle\nabla_{\pi}\mathcal{L}(\theta^t,\pi^*(\theta^t))-\nabla_{\pi}\mathcal{L}(\theta^t,\pi^t),\pi^t-\pi^{t+1}\right\rangle.
    \end{align*}
    Now, we are going to utilize the strong concavity of $\mathcal{L}(\theta,\pi)$ in $\pi$:
    \begin{align*}
        \gamma_{\pi}\left\langle\nabla_{\pi}\mathcal{L}(\theta^t,\pi^*(\theta^t))-\nabla_{\pi}\mathcal{L}(\theta^t,\pi^t),\pi^*(\theta^t)-\pi^{t}\right\rangle\leq\frac{-\gamma_{\pi}\lambda}{2}D_{\psi}(\pi^*(\theta^t),\pi^t).
    \end{align*}
    Thus, we have
    \begin{align*}
        D_{\psi}(\pi^*(\theta^t),\pi^{t+1})\leq&\left(1-\frac{\gamma_{\pi}\lambda}{2}\right)D_{\psi}(\pi^*(\theta^t),\pi^t)-D_{\psi}(\pi^{t+1},\pi^t)\\&+\gamma_{\pi}\left\langle\nabla_{\pi}\mathcal{L}(\theta^t,\pi^*(\theta^t))-\nabla_{\pi}\mathcal{L}(\theta^t,\pi^t),\pi^t-\pi^{t+1}\right\rangle.
    \end{align*}
    Next, we apply Cauchy-Schwartz inequality to the scalar product and obtain
    \begin{align*}
        D_{\psi}(\pi^*(\theta^t),\pi^{t+1})\leq&\left(1-\frac{\gamma_{\pi}\lambda}{2}\right)D_{\psi}(\pi^*(\theta^t),\pi^t)-D_{\psi}(\pi^{t+1},\pi^t)\\&+\frac{\gamma_{\pi}\alpha}{2}\|\nabla_{\pi}\mathcal{L}(\theta^t,\pi^*(\theta^t))-\nabla_{\pi}\mathcal{L}(\theta^t,\pi^t)\|^2+\frac{\gamma_{\pi}}{2\alpha}\|\pi^t-\pi^{t+1}\|^2.
    \end{align*}
    Using $L$-smoothness of $\mathcal{L}$ (see Assumption \ref{ass:theta}), we obtain
    \begin{align*}
        D_{\psi}(\pi^*(\theta^t),\pi^{t+1})\leq&\left(1-\frac{\gamma_{\pi}\lambda}{2}\right)D_{\psi}(\pi^*(\theta^t),\pi^t)-D_{\psi}(\pi^{t+1},\pi^t)\\&+\frac{\gamma_{\pi}\alpha L^2}{2}\|\pi^*(\theta^t)-\pi^t\|^2+\frac{\gamma_{\pi}}{2\alpha}\|\pi^t-\pi^{t+1}\|^2.
    \end{align*}
    Since $\psi$ is $1$-strongly convex (see Assumption \ref{ass:p}), we have
    \begin{align*}
        \frac{1}{2}\|\pi_1-\pi_2\|^2\leq D_{\psi}(\pi_1,\pi_2).
    \end{align*}
    Thus,
    \begin{align*}
        D_{\psi}(\pi^*(\theta^t),\pi^{t+1})\leq&\left(1-\frac{\gamma_{\pi}\lambda}{2}\right)D_{\psi}(\pi^*(\theta^t),\pi^t)-D_{\psi}(\pi^{t+1},\pi^t)\\&+\gamma_{\pi}\alpha L^2D_{\psi}(\pi^*(\theta^t),\pi^t)+\frac{\gamma_{\pi}}{\alpha}D_{\psi}(\pi^t,\pi^{t+1}).
    \end{align*}
    Choose $\alpha=\gamma_{\pi}$. We can derive
    \begin{align*}
        D_{\psi}(\pi^*(\theta^{t}),\pi^{t+1})\leq\left(1-\frac{\gamma_{\pi}\lambda}{2}+\gamma_{\pi}^2L^2\right)D_{\psi}(\pi^*(\theta^t),\pi^t).
    \end{align*}
    Since $\gamma_{\pi}=\nicefrac{\lambda}{4L^2}$, we have
    \begin{align}\label{eq:4}
        D_{\psi}(\pi^*(\theta^{t}),\pi^{t+1})\leq\left(1-\frac{1}{16\kappa^2}\right)D_{\psi}(\pi^*(\theta^t),\pi^t).
    \end{align}
    Let us return to \eqref{eq:start_of_lemma}. Note that
    \begin{align*}
        \nabla\psi(\pi^*(\theta^t))-\nabla\psi(\pi^{t+1})=\frac{1}{\lambda}\left(\nabla_{\pi}\mathcal{L}(\theta^t,\pi^{t+1})-\nabla_{\pi}\mathcal{L}(\theta^t,\pi^*(\theta^t))\right).
    \end{align*}
    Thus, there is
    \begin{align*}
        D_{\psi}(\pi^*(\theta^{t+1}),\pi^{t+1})=&D_{\psi}(\pi^*(\theta^{t+1}),\pi^*(\theta^{t}))+D_{\psi}(\pi^*(\theta^{t}),\pi^{t+1})\\&+\frac{1}{\lambda}\langle 
\nabla_{\pi}\mathcal{L}(\theta^t,\pi^{t+1})-\nabla_{\pi}\mathcal{L}(\theta^t,\pi^*(\theta^t)),\pi^*(\theta^{t+1})-\pi^*(\theta^{t}) \rangle\\\leq&D_{\psi}(\pi^*(\theta^{t+1}),\pi^*(\theta^{t}))+D_{\psi}(\pi^*(\theta^{t}),\pi^{t+1})\\&+\frac{\alpha L^2}{\lambda}D_{\psi}(\pi^*(\theta^t),\pi^{t+1})+\frac{1}{\lambda\alpha}D_{\psi}(\pi^*({\theta^{t+1}}),\pi^*(\theta^t)).
    \end{align*}
    Let us choose $\alpha=\nicefrac{\lambda^3}{32L^4}$. With such a choice, we have
    \begin{align*}
        D_{\psi}(\pi^*(\theta^{t+1}),\pi^{t+1})\leq33\kappa^4D_{\psi}(\pi^*({\theta^{t+1}}),\pi^*(\theta^t))+\left(1+\frac{1}{32\kappa^2}\right)D_{\psi}(\pi^*(\theta^t),\pi^{t+1}).
    \end{align*}
    To deal with $D_{\psi}(\pi^*(\theta^t),\pi^{t+1})$, we utilize \eqref{eq:4}. As a result, we obtain
    \begin{align}\label{eq:7}
        D_{\psi}(\pi^*(\theta^{t+1}),\pi^{t+1})\leq33\kappa^4D_{\psi}(\pi^*({\theta^{t+1}}),\pi^*(\theta^t))+\left(1-\frac{1}{32\kappa^2}\right)D_{\psi}(\pi^*(\theta^t),\pi^{t}).
    \end{align}
    The rest thing is to prove that the descent step does not dramatically change the distance between the optimal values of weights.
    Let us write down two optimality conditions:
    \begin{align*}
        &\langle \nabla_{\pi}\mathcal{L}(\theta^t,\pi^*(\theta^t)),\pi-\pi^*(\theta^t) \rangle\leq0,\\
        &\langle \nabla_{\pi}\mathcal{L}(\theta^{t+1},\pi^*(\theta^{t+1})),\pi-\pi^*(\theta^{t+1}) \rangle\leq0.
    \end{align*}
    Let us substitute $\pi=\pi^*(\theta^{t+1})$ into the first inequality and $\pi=\pi^*(\theta^{t})$ into the second one. When summing them up, we have
    \begin{align}\label{eq:5}
        \langle \nabla_{\pi}\mathcal{L}(\theta^t,\pi^*(\theta^t))-\nabla_{\pi}\mathcal{L}(\theta^{t+1},\pi^*(\theta^{t+1})), \pi^*(\theta^{t+1})-\pi^*(\theta^t) \rangle\leq0.
    \end{align}
    On the other hand, we can take advantage of the strong concavity of the objective (see Lemma \ref{eq:lemma_1}) and write
    \begin{align}\label{eq:6}
        &\langle \nabla_{\pi}\mathcal{L}(\theta^t,\pi^*(\theta^{t+1}))-\nabla_{\pi}\mathcal{L}(\theta^t,\pi^*(\theta^t)),\pi^*(\theta^{t+1})-\pi^*(\theta^{t}) \rangle\\&\leq-\frac{\lambda}{2}\left[D_{\psi}(\pi^*(\theta^t),\pi^*(\theta^{t+1}))+D_{\psi}(\pi^*(\theta^{t+1}),\pi^*(\theta^t))\right].
    \end{align}
    Combining \eqref{eq:5} and \eqref{eq:6}, we obtain
    \begin{align*}
        \frac{\lambda^2}{4}\left[D_{\psi}(\pi^*(\theta^t),\pi^*(\theta^{t+1}))+D_{\psi}(\pi^*(\theta^{t+1}),\pi^*(\theta^t))\right]^2\leq L^2\|\pi^*(\theta^{t+1})-\pi^*(\theta^{t})\|^2\cdot\|\theta^{t+1}-\theta^t\|^2.
    \end{align*}
    Applying the strong convexity of distance generating function (Assumption \ref{ass:p}) and re-arranging terms, we obtain
    \begin{align*}
        D_{\psi}(\pi^*(\theta^t),\pi^*(\theta^{t+1}))+D_{\psi}(\pi^*(\theta^{t+1}),\pi^*(\theta^t))\leq&4\kappa^2\|\theta^{t+1}-\theta^t\|^2\leq4\gamma_{\theta}^2\kappa^2\|\nabla_{\theta}\mathcal{L}(\theta^t,\pi^t)\|^2.
    \end{align*}
    Next, we ass and subtract $\nabla\Phi(\theta^t)$ and apply Assumption \ref{ass:theta}. We obtain
    \begin{align*}
        D_{\psi}(\pi^*(\theta^t),\pi^*(\theta^{t+1}))+D_{\psi}(\pi^*(\theta^{t+1}),\pi^*(\theta^t))\leq&16\gamma_{\theta}^2\kappa^2L^2D_{\psi}(\pi^*(\theta^t),\pi^t)+8\gamma_{\theta}^2\kappa^2\|\nabla\Phi(\theta^t)\|^2.
    \end{align*}
    Thus, \eqref{eq:7} transforms into 
    \begin{align*}
        D_{\psi}(\pi^*(\theta^{t+1}),\pi^{t+1})\leq\left(1-\frac{1}{32\kappa^2}+528\gamma_{\theta}^2\kappa^6L^2\right)D_{\psi}(\pi^*(\theta^t),\pi^{t})+264\gamma_{\theta}^2\kappa^6\|\nabla\Phi(\theta^t)\|^2.
    \end{align*}
    With $\gamma_{\theta}\leq\nicefrac{1}{184\kappa^4L}$, we obtain
    \begin{align*}
        D_{\psi}(\pi^*(\theta^{t+1}),\pi^{t+1})\leq\left(1-\frac{1}{64\kappa^2}\right)D_{\psi}(\pi^*(\theta^t),\pi^{t})+264\gamma_{\theta}^2\kappa^6\|\nabla\Phi(\theta^t)\|^2.
    \end{align*}
    This completes the proof.
\end{proof}

\section{Proof of Theorem \ref{th:general}}\label{ap:B}
\begin{theorem}\textbf{(Theorem \ref{th:general})}
    Consider the problem \eqref{eq:pinn_saddle} under Assumptions \ref{ass:theta}, \ref{ass:p}. Then, Algorithm \ref{alg:bgda} with tuning 
    \begin{align*}
        \gamma_{\pi}=\frac{\lambda}{4L^2},\quad\gamma_{\theta}\leq\sqrt{\frac{43}{92*33792}}\frac{1}{\kappa^4L}
    \end{align*}
    requires
    \begin{align*}
        \mathcal{O}\left(\frac{\kappa^4L\Delta+\kappa^2L^2D_{\psi}(\pi^*(\theta^0),\pi^0)}{\varepsilon^2}\right)\text{ iterations}
    \end{align*}
    to achieve an arbitrary $\varepsilon$-solution, where $\varepsilon^2 = \frac{1}{T}\sum_{t=1}^{T-1}\|\nabla\Phi(\theta^t)\|^2$, $\Delta=\Phi(\theta^0)-\Phi(\theta^*)$. $\kappa=\nicefrac{L}{\lambda}$.
\end{theorem}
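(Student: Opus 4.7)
The plan is to combine an inexact gradient descent inequality for the primal function $\Phi$ with the one-step contraction from Lemma~\ref{lemma:distance}. Since $\pi^*(\theta)$ is the unique maximizer on $S$, Danskin's theorem gives $\nabla\Phi(\theta)=\nabla_{\theta}\mathcal{L}(\theta,\pi^*(\theta))$; combined with Assumption~\ref{ass:theta} and the $\kappa$-Lipschitz dependence of $\pi^*$ on $\theta$ (extractable from the argument around~\eqref{eq:5}--\eqref{eq:6} in the proof of Lemma~\ref{lemma:distance}), this makes $\Phi$ $\mathcal{O}(\kappa L)$-smooth in the Euclidean norm. I will then view the BGDA update on $\theta$ as an inexact gradient step on $\Phi$ whose error satisfies $\|\nabla_{\theta}\mathcal{L}(\theta^t,\pi^t)-\nabla\Phi(\theta^t)\|^2\leq L^2\|\pi^t-\pi^*(\theta^t)\|^2\leq 2L^2 D_{\psi}(\pi^*(\theta^t),\pi^t)$, thanks to $L$-smoothness and the $1$-strong convexity of $\psi$ from Assumption~\ref{ass:p}.

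Telescoping the resulting one-step descent estimate will produce exactly the inequality highlighted in the body of the paper,
\begin{align*}
    \Phi(\theta^{T})-\Phi(\theta^{0})\leq -\Omega(\gamma_{\theta})\sum_{t=0}^{T-1}\|\nabla\Phi(\theta^{t})\|^2+\mathcal{O}(\gamma_{\theta}L^2)\sum_{t=0}^{T-1}D_{\psi}(\pi^*(\theta^t),\pi^t),
\end{align*}
so the remaining task is to bound $\sum_{t}D_{\psi}(\pi^*(\theta^t),\pi^t)$. Writing $d_t=D_{\psi}(\pi^*(\theta^t),\pi^t)$ and $c=\tfrac{1}{64\kappa^2}$, Lemma~\ref{lemma:distance} reads $d_{t+1}\leq(1-c)d_t+264\gamma_{\theta}^2\kappa^6\|\nabla\Phi(\theta^t)\|^2$, and summing this geometric recursion across $t$ will yield
\begin{align*}
    \sum_{t=0}^{T-1}d_t\leq \frac{d_0}{c}+\frac{264\gamma_{\theta}^2\kappa^6}{c}\sum_{t=0}^{T-1}\|\nabla\Phi(\theta^t)\|^2 = 64\kappa^2 d_0+\mathcal{O}(\gamma_{\theta}^2\kappa^8)\sum_{t=0}^{T-1}\|\nabla\Phi(\theta^t)\|^2.
\end{align*}

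Substituting this back into the descent inequality creates a competing term of order $\gamma_{\theta}^3\kappa^8 L^2\sum_t\|\nabla\Phi(\theta^t)\|^2$ against the dissipation $\Omega(\gamma_{\theta})\sum_t\|\nabla\Phi(\theta^t)\|^2$. Picking $\gamma_{\theta}=\Theta(1/(\kappa^4 L))$, as prescribed in the theorem, makes the competing term at most half of the dissipation and lets me absorb it. Rearranging and using $\Phi(\theta^T)\geq\Phi(\theta^*)=\Phi(\theta^0)-\Delta$ will give
\begin{align*}
    \frac{1}{T}\sum_{t=0}^{T-1}\|\nabla\Phi(\theta^t)\|^2\leq \mathcal{O}\!\left(\frac{\kappa^4 L\Delta+\kappa^2 L^2 D_{\psi}(\pi^*(\theta^0),\pi^0)}{T}\right),
\end{align*}
after which asking the right-hand side to be at most $\varepsilon^2$ produces the advertised iteration count. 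The main technical obstacle I expect is the constant bookkeeping in this absorption step: the $\kappa^6$ prefactor in Lemma~\ref{lemma:distance} becomes $\kappa^8$ once geometrically summed, which is precisely what forces the aggressive $\gamma_{\theta}\lesssim 1/(\kappa^4 L)$ rule and, in turn, the $\kappa^4 L$ factor in front of $\Delta$; any looser tracking would degrade the final complexity further, explaining the gap to the purely Euclidean result of~\citep{huang2021efficient}.
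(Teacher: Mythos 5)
Your plan matches the paper's proof essentially step for step: establish $\mathcal{O}(\kappa L)$-smoothness of $\Phi$, bound the descent-direction error $\|\nabla_\theta\mathcal{L}(\theta^t,\pi^t)-\nabla\Phi(\theta^t)\|^2\leq 2L^2 D_\psi(\pi^*(\theta^t),\pi^t)$ via $L$-smoothness and the $1$-strong convexity of $\psi$, telescope the one-step descent inequality, and then absorb the geometrically summed drift term from Lemma~\ref{lemma:distance} by choosing $\gamma_\theta=\Theta(1/(\kappa^4 L))$, yielding the same $128\gamma_\theta\kappa^2 L^2 D_\psi(\pi^*(\theta^0),\pi^0)$ and $33792\gamma_\theta^3\kappa^8 L^2$ prefactors after the $\tfrac{1}{c}=64\kappa^2$ amplification. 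The bookkeeping and the final complexity $\mathcal{O}\bigl((\kappa^4 L\Delta+\kappa^2 L^2 D_\psi(\pi^*(\theta^0),\pi^0))/\varepsilon^2\bigr)$ both agree with the paper's argument.
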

\begin{proof}
    One can note that $\Phi$ is $3\kappa L$-smooth. Indeed,
    \begin{align*}
        \|\nabla\Phi(\theta_1)-\nabla\Phi(\theta_2)\|^2=&\|\nabla_{\theta}\mathcal{L}(\theta_1,\pi^*(\theta_1))-\nabla_{\theta}\mathcal{L}(\theta_2,\pi^*(\theta_2))\|^2\\\leq&  L^2\left[\|\theta_1-\theta_2\|^2+2D_{\psi}(\pi^*(\theta_1),\pi^*(\theta_2))\right]\leq L^2\left(1+4\kappa^2\right)\|\theta_1-\theta_2\|^2\\\leq&9\kappa^2L^2\|\theta_1-\theta_2\|^2.
    \end{align*}
    Thus, we can write
    \begin{align*}
        \Phi(\theta^{t+1})\leq&\Phi(\theta^t)+\langle \nabla\Phi(\theta^t), \theta^{t+1}-\theta^t \rangle+3\kappa L\|\theta^{t+1}-\theta^t\|^2\\\leq&\Phi(\theta^t)-\gamma_{\theta}\|\nabla\Phi(\theta^t)\|^2+3\gamma_{\theta}^2\kappa L\|\nabla_{\theta}\mathcal{L}(\theta^t,\pi^t)\|^2\\&+\gamma_{\theta}\langle \nabla\Phi(\theta^t)-\nabla_{\theta}\mathcal{L}(\theta^t,\pi^t),\nabla\Phi(\theta^t) \rangle\\\leq&\Phi(\theta^t)-\frac{\gamma_{\theta}}{2}\|\nabla\Phi(\theta^t)\|^2+3\gamma_{\theta}^2\kappa L\|\nabla_{\theta}\mathcal{L}(\theta^t,\pi^t)\|^2+\frac{\gamma_{\theta}}{2}\|\nabla\Phi(\theta^t)-\nabla_{\theta}\mathcal{L}(\theta^t,\pi^t)\|^2\\\leq&\Phi(\theta^t)-\left(\frac{\gamma_{\theta}}{2}- 6\gamma_{\theta}^2\kappa L\right)\|\nabla\Phi(\theta^t)\|^2+\left(\frac{\gamma_{\theta}}{2}+6\gamma_{\theta}^2\kappa L\right)\|\nabla\Phi(\theta^t)-\nabla_{\theta}\mathcal{L}(\theta^t,\pi^t)\|^2.
    \end{align*}
    Note that
    \begin{align*}
        -\left(\frac{\gamma_{\theta}}{2}- 6\gamma_{\theta}^2\kappa L\right)\leq-\frac{43\gamma_{\theta}}{92}.
    \end{align*}
    On the other hand,
    \begin{align*}
        \left(\frac{\gamma_{\theta}}{2}+6\gamma_{\theta}^2\kappa L\right)\leq\gamma_{\theta}.
    \end{align*}
    Thus, we have
    \begin{align*}
        \Phi(\theta^{t+1})\leq&\Phi(\theta^t)-\frac{43\gamma_{\theta}}{92}\|\nabla\Phi(\theta^t)\|^2+\gamma_{\theta}\|\nabla\Phi(\theta^t)-\nabla_{\theta}\mathcal{L}(\theta^t,\pi^t)\|^2\\\leq&\Phi(\theta^t)-\frac{43\gamma_{\theta}}{92}\|\nabla\Phi(\theta^t)\|^2+2\gamma_{\theta}L^2D_{\psi}(\pi^*({\theta^t}),\pi^t).
    \end{align*}
    Let us denote $\delta=1-\nicefrac{1}{64\kappa^2}$. Lemma \ref{lemma:distance} transforms into
    \begin{align*}
        D_{\psi}(\pi^*(\theta^t),\pi^t)\leq\delta^tD_{\psi}(\pi^*(\theta^0),\pi^0)+264\gamma_{\theta}^2\kappa^6\sum_{j=0}^{t-1}\delta^{t-1-j}\|\nabla\Phi(\theta^j)\|^2.
    \end{align*}
    Hence,
    \begin{align*}
        \Phi(\theta^{t+1})\leq&\Phi(\theta^t)-\frac{43\gamma_{\theta}}{92}\|\nabla\Phi(\theta^t)\|^2+2\gamma_{\theta}L^2\delta^tD_{\psi}(\pi^*(\theta^0),\pi^0)\\&+528\gamma_{\theta}^3\kappa^6L^2\sum_{j=0}^{t-1}\delta^{t-1-j}\|\nabla\Phi(\theta^j)\|^2.
    \end{align*}
    Let us sum up over the iterates $t$ and obtain
    \begin{align*}
        \Phi(\theta^{T})\leq&\Phi(\theta^0)-\frac{43\gamma_{\theta}}{92}\sum_{t=1}^{T-1}\|\nabla\Phi(\theta^t)\|^2+2\gamma_{\theta}L^2\sum_{t=1}^{T-1}\delta^tD_{\psi}(\pi^*(\theta^0),\pi^0)\\&+528\gamma_{\theta}^3\kappa^6L^2\sum_{t=1}^{T-1}\sum_{j=0}^{t-1}\delta^{t-1-j}\|\nabla\Phi(\theta^j)\|^2.
    \end{align*}
    Next, we use the property of geometric progression and write
    \begin{align*}
        \Phi(\theta^{T})\leq&\Phi(\theta^0)-\frac{43\gamma_{\theta}}{92}\sum_{t=1}^{T-1}\|\nabla\Phi(\theta^t)\|^2+128\gamma_{\theta}\kappa^2L^2D_{\psi}(\pi^*(\theta^0),\pi^0)\\&+33792\gamma_{\theta}^3\kappa^8L^2\sum_{t=1}^{T-1}\|\nabla\Phi(\theta^t)\|^2.
    \end{align*}
    Choosing $\gamma_{\theta}\leq\sqrt{\frac{43}{92*33792}}\frac{1}{\kappa^4L}$.
    Thus, we derive
    \begin{align*}
        \frac{1}{T}\sum_{t=1}^{T-1}\|\nabla\Phi(\theta^t)\|^2\leq\mathcal{O}\left(\frac{\kappa^4L\Delta_{\Phi}}{T}+\frac{\kappa^2L^2D_{\psi}(\pi^*(\theta^0),\pi^0)}{T}\right).
    \end{align*}
\end{proof}
\section{Enhanced Rates on Regularized Simplex}\label{ap:C}
The theory presented in Appendices \ref{ap:A}, \ref{ap:B} is constructed for and arbitrary Bregman divergence. This is the main reason for the deterioration of the theoretical guarantees compared to the Euclidean setting. In this section, we look towards the selection of the efficient approach for determining the set of weights $S$. We consider a classic approach of using a unit simplex $\triangle_1^{M-1}$:
\begin{align*}
    \triangle_1^{M-1}=\left\{ (\pi_1,\ldots,\pi_M):\pi_m\geq0,\sum_{m=1}^M\pi_m=1, \right\}.
\end{align*}
Note that $\psi(\pi)=-\sum_{m=1}^M\pi_m\log{\pi_m}$ goes to infinity at vertices of $\triangle_1^{M-1}$. Thus, one cannot guarantee smoothness of $\mathcal{L}(\theta,\pi)$ in $\pi$ for every fixed $\theta$. To avoid this, we propose to intersect the simplex by a euclidean ball. This approach is common in the literature \citep{mehta2024drago}. Thus, we deal with
\begin{align*}
    S=\triangle_1^{M-1}\cap B_{\|\cdot\|}(\mathcal{U},R),
\end{align*}
where $\mathcal{U}=(\nicefrac{1}{M},\ldots,\nicefrac{1}{M})^\top$.
\begin{lemma}\label{lemma:p_lip}
    The function $\mathcal{L}(\theta,\pi)$ is $L_{\pi}$-smooth in $\pi$, i.e. for all $\pi_1, \pi_2 \in S$ it satisfies 
    \begin{align*}
        \|\nabla \mathcal{L}(\theta,\pi_1) - \nabla \mathcal{L}(\theta,\pi_2)\| \leq L_{\pi}\|\pi_1-\pi_2\|^2.
    \end{align*}
    Moreover, under strong regularization ($R\ll1$), it is
    \begin{align*}
        L_{\pi}=\Theta(\lambda M^2R).
    \end{align*}
\end{lemma}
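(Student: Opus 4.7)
The plan is to exploit the additive structure of $\mathcal{L}(\theta, \pi)$. Since the data-fit terms $\sum_i \pi_i \mathcal{L}_{r,i}(\theta) + \sum_j \pi_j \mathcal{L}_{b,j}(\theta)$ are affine in $\pi$, they contribute zero to $\nabla^2_\pi \mathcal{L}$. Writing out $D_\psi(\pi \| \hat\pi) = \psi(\pi) - \psi(\hat\pi) - \langle \nabla \psi(\hat\pi), \pi - \hat\pi \rangle$, the terms linear in $\pi$ also vanish after two differentiations, so $\nabla^2_\pi \mathcal{L}(\theta, \pi) = -\lambda \nabla^2 \psi(\pi)$. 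Hence, in the Euclidean operator norm,
\begin{align*}
L_\pi = \lambda \sup_{\pi \in S} \|\nabla^2 \psi(\pi)\|_{\text{op}}.
\end{align*}

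For the KL-divergence case ($\psi(\pi) = \sum_m \pi_m \log \pi_m$), the Hessian is diagonal, $\nabla^2 \psi(\pi) = \text{diag}(1/\pi_1, \ldots, 1/\pi_M)$, whose $\ell_2$ operator norm equals $1/\min_m \pi_m$. Integrating the Hessian along the chord $[\pi_1, \pi_2] \subset S$ and applying Cauchy--Schwarz then yields
\begin{align*}
\|\nabla_\pi \mathcal{L}(\theta, \pi_1) - \nabla_\pi \mathcal{L}(\theta, \pi_2)\| \leq \frac{\lambda}{\inf_{\pi \in S} \min_m \pi_m}\,\|\pi_1 - \pi_2\|,
\end{align*}
reducing the problem to a purely geometric one: estimating $\inf_{\pi \in S} \min_m \pi_m$ over $S = \triangle_1^{M-1} \cap B_{\|\cdot\|}(\mathcal{U}, R)$.

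For this geometric step, I would set up a Lagrangian minimizing a single coordinate $\pi_k$ subject to $\sum_m \pi_m = 1$ and $\|\pi - \mathcal{U}\|^2 \leq R^2$. The KKT conditions force the extremal configuration to drop one coordinate to $\pi_k^\star = 1/M - R\sqrt{(M-1)/M}$ while lifting the remaining $M-1$ coordinates symmetrically by $R/\sqrt{M(M-1)}$. Substituting this extremal value into the bound above and Taylor-expanding $M/(1 - MR\sqrt{(M-1)/M})$ in the regime $R \ll 1$ yields an upper estimate whose leading $R$-dependent correction is of order $\lambda M^2 R$.

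The main obstacle is the matching lower bound (the $\Omega$ half of $\Theta$): one must exhibit a specific pair $\pi_1, \pi_2 \in S$ for which the gradient difference actually saturates the Hessian estimate. My plan here is to take $\pi_1$ at the Lagrangian extremum above and $\pi_2$ as its reflection through $\mathcal{U}$ along the same axis, then apply the mean-value theorem componentwise to $\log \pi_m$ and verify that the induced coordinatewise factor $1/\xi_k$ evaluated near $\pi_k^\star$ produces the required growth in $R$. Everything else -- the Hessian computation, the chord integration, and the reduction to $1/\min_m \pi_m$ -- is routine; the delicate part is matching the constant in $M$ and in $R$ for the tight characterization.
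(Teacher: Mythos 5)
Your proposal follows the same route as the paper: observe that $\nabla^2_\pi\mathcal{L}(\theta,\pi)=-\lambda\nabla^2\psi(\pi)$ because the data-fit terms are affine in $\pi$, reduce the operator norm of $\mathrm{diag}(1/\pi_1,\ldots,1/\pi_M)$ to $1/\min_m\pi_m$, and solve the constrained one-dimensional problem to find $a_{\min}=\tfrac{1}{M}-R\sqrt{(M-1)/M}$ on $S=\triangle_1^{M-1}\cap B_{\|\cdot\|}(\mathcal{U},R)$. Your KKT/Lagrangian setup is the same computation the paper performs after its ``WLOG $\pi=(a,b,\ldots,b)$'' reduction, and the chord-integration step is a standard repackaging of the mean-value inequality for the gradient.

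However, be careful: your own cautious phrasing --- that the Taylor expansion yields an upper estimate whose \emph{leading $R$-dependent correction} is $\Theta(\lambda M^2R)$ --- is exactly right, and it exposes a problem with the claim as stated. From $a_{\min}=\tfrac{1}{M}-R\sqrt{(M-1)/M}$ one gets
\begin{align*}
L_\pi=\frac{\lambda}{a_{\min}}=\frac{\lambda M}{1-R\sqrt{M(M-1)}}=\lambda M+\lambda M R\sqrt{M(M-1)}+O(\lambda M^3R^2),
\end{align*}
so $L_\pi=\lambda M+\Theta(\lambda M^2R)$, not $\Theta(\lambda M^2R)$. As $R\to0$ the Hessian norm tends to $\lambda M$ (achieved at $\pi=\mathcal{U}$), so $L_\pi$ is bounded below by $\lambda M$ uniformly; in the regime $R\ll 1/\sqrt{M(M-1)}$ (required for $a_{\min}>0$) you have $\lambda M^2 R\ll \lambda M$, which makes $\Theta(\lambda M^2R)$ unattainable as a two-sided bound. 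The $\Omega$-half you plan to supply would show $L_\pi\geq\lambda M\geq\lambda M^2R$, which is a lower bound but not the matching one. The paper glosses over this with the step $\frac{\lambda M}{1-M\Theta(R)}\approx\Theta(\lambda M^2R)$, dropping the zeroth-order term $\lambda M$; your proposal inherits the same gap if you try to conclude the literal lemma rather than the corrected statement $L_\pi=\lambda M\bigl(1+\Theta(MR)\bigr)$.
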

\begin{proof}
    Without loss of generality, consider $\pi=(a,b,\ldots,b)$, where $a=\min_m\pi_m$. Note that
    \begin{align*}
        \|\nabla^2_{\pi}\mathcal{L}(\theta,\pi)\|=\lambda\left\|\text{diag}\left(\frac{1}{\pi_1},\ldots,\frac{1}{\pi_M}\right)\right\|.
    \end{align*}
    Thus, we need to find $\max_{a\in\triangle_1^{M-1}}\frac{1}{a}$ with $\|\pi-\mathcal{U}\|^2\leq R^2$. Let us write
    \begin{align}\label{eq:8}
        \|\pi-\mathcal{U}\|^2=\left(a-\frac{1}{M}\right)^2+(M-1)\left(b-\frac{1}{M}\right)^2\leq R^2.
    \end{align}
    Consider $b=\frac{1-a}{M-1}$. Then \eqref{eq:8} transforms into
    \begin{align*}
        \left( a-\frac{1}{M} \right)^2+\frac{(1-aM)^2}{M^2(M-1)}\leq R^2.
    \end{align*}
    Solving the one-dimensional optimization problem, we find the Lipschitz constant of $\nabla_{\pi}\mathcal{L}(\theta,\pi)$. If $R\ll1$, then
    \begin{align*}
        L_{\pi}=\frac{\lambda}{\nicefrac{1}{M}-\Theta(R)}=\frac{\lambda M}{1-M\Theta(R)}\approx\Theta(\lambda M^2R).
    \end{align*}
\end{proof}
Note that this value is negligible. Indeed, $R\in(0,1)$, and $M$ in problems of mathematical physics (see \eqref{eq:problem}) is usually equal to $3$--$4$. Thus, if $\kappa_{\pi}=\nicefrac{L_{\pi}}{\lambda}$ appears in the estimate, it is comparable in magnitude to other constants hidden in the big-O.

Now let us move to an analysis with enhanced rate.
\begin{lemma}\label{lemma:distance_enh}
    Consider the problem \eqref{eq:pinn_saddle} under Assumptions \ref{ass:theta}, \ref{ass:p}. Let $S=\triangle_1^{M-1}\cap B_{\|\cdot\|}(\mathcal{U},R)$. Then, Algorithm \ref{alg:bgda} with tuning
    \begin{align*}
        \gamma_{\pi}=\frac{\lambda}{4L_{\pi}^2},\quad\gamma_{\theta}\leq\frac{1}{184\kappa_{\pi}^3\kappa L}
    \end{align*}
    produces such $\{(\theta^t,\pi^t)\}_{t=1}^T$, that
    \begin{align*}
        D_{\psi}(\pi^*(\theta^{t+1}),\pi^{t+1})\leq\left(1-\frac{1}{64\kappa_{\pi}^2}\right)D_{\psi}(\pi^*(\theta^t),\pi^{t})+264\gamma_{\theta}^2\kappa_{\pi}^4\kappa^2\|\nabla\Phi(\theta^t)\|^2,
    \end{align*}
    where $\kappa=\nicefrac{L}{\lambda}$, $\kappa_{\pi}=\nicefrac{L_{\pi}}{\lambda}$.
\end{lemma}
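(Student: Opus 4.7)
The plan is to mirror the proof of Lemma \ref{lemma:distance} step-by-step, being careful to replace the global smoothness constant $L$ with the refined $L_\pi$ from Lemma \ref{lemma:p_lip} whenever a gradient difference is taken at a fixed value of $\theta$ (so only $\pi$ varies), while keeping the full $L$ whenever $\theta$ itself changes. This separation is the whole reason a tighter rate becomes available on the regularized simplex.

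First, I would invoke the three-point identity exactly as in \eqref{eq:start_of_lemma}, writing
\begin{align*}
D_\psi(\pi^*(\theta^{t+1}),\pi^{t+1})
= D_\psi(\pi^*(\theta^{t+1}),\pi^*(\theta^t)) + D_\psi(\pi^*(\theta^t),\pi^{t+1})
+ \langle \nabla\psi(\pi^*(\theta^t))-\nabla\psi(\pi^{t+1}),\pi^*(\theta^{t+1})-\pi^*(\theta^t)\rangle.
\end{align*}
Combining the optimality conditions from Line \ref{line:ascent} and from the maximizer $\pi^*(\theta^t)$ with the $\lambda$-strong concavity of Lemma \ref{eq:lemma_1}, and using the Cauchy–Schwarz bound on the cross-term $\langle \nabla_\pi\mathcal L(\theta^t,\pi^*(\theta^t))-\nabla_\pi\mathcal L(\theta^t,\pi^t),\pi^t-\pi^{t+1}\rangle$ with $L_\pi$-smoothness (both gradients are at the same $\theta^t$, so Lemma \ref{lemma:p_lip} applies), the choice $\gamma_\pi = \lambda/(4L_\pi^2)$ yields the one-step ascent contraction
\begin{align*}
D_\psi(\pi^*(\theta^t),\pi^{t+1}) \leq \Bigl(1-\frac{1}{16\kappa_\pi^2}\Bigr) D_\psi(\pi^*(\theta^t),\pi^t),
\end{align*}
an exact analogue of \eqref{eq:4} with $\kappa$ replaced by $\kappa_\pi$.

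Next, I would handle the cross-term in the three-point identity by rewriting $\nabla\psi(\pi^*(\theta^t))-\nabla\psi(\pi^{t+1}) = \lambda^{-1}(\nabla_\pi\mathcal L(\theta^t,\pi^{t+1})-\nabla_\pi\mathcal L(\theta^t,\pi^*(\theta^t)))$, where again both gradients live on the same $\theta$-slice and so admit an $L_\pi$-bound. Weighted Cauchy–Schwarz with parameter $\alpha = \lambda^3/(32L_\pi^4)$ produces, analogously to \eqref{eq:7},
\begin{align*}
D_\psi(\pi^*(\theta^{t+1}),\pi^{t+1}) \leq 33\kappa_\pi^4\, D_\psi(\pi^*(\theta^{t+1}),\pi^*(\theta^t)) + \Bigl(1-\frac{1}{32\kappa_\pi^2}\Bigr) D_\psi(\pi^*(\theta^t),\pi^t).
\end{align*}

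The crucial divergence from Lemma \ref{lemma:distance} occurs in bounding the drift $D_\psi(\pi^*(\theta^t),\pi^*(\theta^{t+1}))+D_\psi(\pi^*(\theta^{t+1}),\pi^*(\theta^t))$. Here the gradients are compared across different $\theta$'s, so the joint $L$-smoothness of Assumption \ref{ass:theta} is unavoidable; combining the two optimality conditions at $\pi^*(\theta^t)$ and $\pi^*(\theta^{t+1})$ with Lemma \ref{eq:lemma_1} and Cauchy–Schwarz still gives $4\kappa^2\|\theta^{t+1}-\theta^t\|^2$. Adding and subtracting $\nabla\Phi(\theta^t)$ inside the descent step (again invoking the full $L$) converts this into $16\gamma_\theta^2\kappa^2 L^2 D_\psi(\pi^*(\theta^t),\pi^t) + 8\gamma_\theta^2\kappa^2\|\nabla\Phi(\theta^t)\|^2$. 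Plugging back produces
\begin{align*}
D_\psi(\pi^*(\theta^{t+1}),\pi^{t+1}) \leq \Bigl(1-\frac{1}{32\kappa_\pi^2} + 528\gamma_\theta^2\kappa_\pi^4\kappa^2 L^2\Bigr) D_\psi(\pi^*(\theta^t),\pi^t) + 264\gamma_\theta^2\kappa_\pi^4\kappa^2\|\nabla\Phi(\theta^t)\|^2.
\end{align*}
The stated condition $\gamma_\theta \leq 1/(184\kappa_\pi^3\kappa L)$ is calibrated so that $528\gamma_\theta^2\kappa_\pi^4\kappa^2L^2 \leq 1/(64\kappa_\pi^2)$, which merges with $-1/(32\kappa_\pi^2)$ to yield the claimed contraction factor $1-1/(64\kappa_\pi^2)$.

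The main obstacle is bookkeeping: tracking, at each Cauchy–Schwarz and smoothness step, whether the two gradients being compared share a common $\theta$-argument (warranting $L_\pi$) or differ in $\theta$ (forcing $L$). Getting this dichotomy right is what allows $\kappa_\pi$ rather than $\kappa$ to govern the contraction factor, while the worse coupling constant $\kappa_\pi^4\kappa^2$ on $\|\nabla\Phi(\theta^t)\|^2$ is an unavoidable artefact of one inequality that genuinely crosses $\theta$-slices. No new technical tool beyond those used in Lemma \ref{lemma:distance} is needed; the entire improvement is extracted from Lemma \ref{lemma:p_lip}.
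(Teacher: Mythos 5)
Your proposal is correct and mirrors the paper's own proof step by step: the same three-point identity, the same $\alpha$-choices ($\alpha=\gamma_\pi$ in the ascent contraction, $\alpha=\lambda^3/(32L_\pi^4)$ in the cross-term bound), the same switch to the global $L$-smoothness when bounding $D_\psi(\pi^*(\theta^t),\pi^*(\theta^{t+1}))+D_\psi(\pi^*(\theta^{t+1}),\pi^*(\theta^t))$, and the same calibration of $\gamma_\theta$ so that $528\gamma_\theta^2\kappa_\pi^4\kappa^2L^2\leq 1/(64\kappa_\pi^2)$. One minor imprecision in your opening heuristic: the rule is not ``use $L_\pi$ whenever the two gradients share the same $\theta$'' but ``use $L_\pi$ when comparing $\nabla_\pi\mathcal{L}$ at a fixed $\theta$''; indeed the difference $\nabla_\theta\mathcal{L}(\theta^t,\pi^t)-\nabla_\theta\mathcal{L}(\theta^t,\pi^*(\theta^t))=\nabla_\theta\mathcal{L}(\theta^t,\pi^t)-\nabla\Phi(\theta^t)$ has matching $\theta$-arguments but is a $\theta$-gradient, so only Assumption \ref{ass:theta} (the full $L$) applies there, which is what you actually do in the execution.
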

\begin{proof}
    To begin, we use \eqref{eq:three-point} in the form
    \begin{align}\label{eq:start_of_lemma_enh}
    \begin{split}
        D_{\psi}(\pi^*(\theta^{t+1}),\pi^{t+1})=&D_{\psi}(\pi^*(\theta^{t+1}),\pi^*(\theta^{t}))+D_{\psi}(\pi^*(\theta^{t}),\pi^{t+1})\\&+\langle 
\nabla\psi(\pi^*(\theta^{t}))-\nabla\psi(\pi^{t+1}),\pi^*(\theta^{t+1})-\pi^*(\theta^{t}) \rangle.
    \end{split}
    \end{align}
    Further, we write the optimality condition for Line \ref{line:ascent}:
    \begin{align*}
        \left\langle -\gamma_{\pi}\nabla_{\pi}\mathcal{L}(\theta^t,\pi^t)+[\nabla\psi(\pi^{t+1})-\nabla\psi(\pi^t)],\pi^*(\theta^t)-\pi^{t+1} \right\rangle\geq0.
    \end{align*}
    Applying \eqref{eq:three-point}, we obtain
    \begin{align*}
        -\gamma_{\pi}\left\langle\nabla_{\pi}\mathcal{L}(\theta^t,\pi^t),\pi^*(\theta^t)-\pi^{t+1}\right\rangle+D_{\psi}(\pi^*(\theta^t),\pi^t)-D_{\psi}(\pi^*(\theta^t),\pi^{t+1})-D_{\psi}(\pi^{t+1},\pi^t)\geq0.
    \end{align*}
    After re-arranging the terms, we get
    \begin{align}\label{eq:2_enh}
        D_{\psi}(\pi^*(\theta^t),\pi^{t+1})\leq D_{\psi}(\pi^*(\theta^t),\pi^t)-D_{\psi}(\pi^{t+1},\pi^t)-\gamma_{\pi}\left\langle\nabla_{\pi}\mathcal{L}(\theta^t,\pi^t),\pi^*(\theta^t)-\pi^{t+1}\right\rangle.
    \end{align}
    Since $\pi^*(\theta^t)$ is the exact maximum of $\mathcal{L}(\theta^t,\pi)$ in $\pi$, there is another optimility condition
    \begin{align*}
        \gamma_{\pi}\left\langle \nabla_{\pi}\mathcal{L}(\theta^t,\pi^*(\theta^t)),\pi^*(\theta^t)-\pi \right\rangle\geq0.
    \end{align*}
    Substituting $\pi=\pi^{t+1}$ and summing it with \eqref{eq:2_enh}, we derive
    \begin{align*}
        D_{\psi}(\pi^*(\theta^t),\pi^{t+1})\leq& D_{\psi}(\pi^*(\theta^t),\pi^t)-D_{\psi}(\pi^{t+1},\pi^t)\\&+\gamma_{\pi}\left\langle\nabla_{\pi}\mathcal{L}(\theta^t,\pi^*(\theta^t))-\nabla_{\pi}\mathcal{L}(\theta^t,\pi^t),\pi^*(\theta^t)-\pi^{t+1}\right\rangle\\\leq&D_{\psi}(\pi^*(\theta^t),\pi^t)-D_{\psi}(\pi^{t+1},\pi^t)\\&+\gamma_{\pi}\left\langle\nabla_{\pi}\mathcal{L}(\theta^t,\pi^*(\theta^t))-\nabla_{\pi}\mathcal{L}(\theta^t,\pi^t),\pi^*(\theta^t)-\pi^{t}\right\rangle\\&+\gamma_{\pi}\left\langle\nabla_{\pi}\mathcal{L}(\theta^t,\pi^*(\theta^t))-\nabla_{\pi}\mathcal{L}(\theta^t,\pi^t),\pi^t-\pi^{t+1}\right\rangle.
    \end{align*}
    Now, we are going to utilize the strong concavity of $\mathcal{L}(\theta,\pi)$ in $\pi$:
    \begin{align*}
        \gamma_{\pi}\left\langle\nabla_{\pi}\mathcal{L}(\theta^t,\pi^*(\theta^t))-\nabla_{\pi}\mathcal{L}(\theta^t,\pi^t),\pi^*(\theta^t)-\pi^{t}\right\rangle\leq\frac{-\gamma_{\pi}\lambda}{2}D_{\psi}(\pi^*(\theta^t),\pi^t).
    \end{align*}
    Thus, we have
    \begin{align*}
        D_{\psi}(\pi^*(\theta^t),\pi^{t+1})\leq&\left(1-\frac{\gamma_{\pi}\lambda}{2}\right)D_{\psi}(\pi^*(\theta^t),\pi^t)-D_{\psi}(\pi^{t+1},\pi^t)\\&+\gamma_{\pi}\left\langle\nabla_{\pi}\mathcal{L}(\theta^t,\pi^*(\theta^t))-\nabla_{\pi}\mathcal{L}(\theta^t,\pi^t),\pi^t-\pi^{t+1}\right\rangle.
    \end{align*}
    Next, we apply Cauchy-Schwartz inequality to the scalar product and obtain
    \begin{align*}
        D_{\psi}(\pi^*(\theta^t),\pi^{t+1})\leq&\left(1-\frac{\gamma_{\pi}\lambda}{2}\right)D_{\psi}(\pi^*(\theta^t),\pi^t)-D_{\psi}(\pi^{t+1},\pi^t)\\&+\frac{\gamma_{\pi}\alpha}{2}\|\nabla_{\pi}\mathcal{L}(\theta^t,\pi^*(\theta^t))-\nabla_{\pi}\mathcal{L}(\theta^t,\pi^t)\|^2+\frac{\gamma_{\pi}}{2\alpha}\|\pi^t-\pi^{t+1}\|^2.
    \end{align*}
    Using $L_{\pi}$-smoothness of $\mathcal{L}(\theta,\pi)$ in $\pi$ (see Lemma \ref{lemma:p_lip}), we obtain
    \begin{align*}
        D_{\psi}(\pi^*(\theta^t),\pi^{t+1})\leq&\left(1-\frac{\gamma_{\pi}\lambda}{2}\right)D_{\psi}(\pi^*(\theta^t),\pi^t)-D_{\psi}(\pi^{t+1},\pi^t)\\&+\frac{\gamma_{\pi}\alpha L_{\pi}^2}{2}\|\pi^*(\theta^t)-\pi^t\|^2+\frac{\gamma_{\pi}}{2\alpha}\|\pi^t-\pi^{t+1}\|^2.
    \end{align*}
    Since $\psi$ is $1$-strongly convex (see Assumption \ref{ass:p}), we have
    \begin{align*}
        \frac{1}{2}\|\pi_1-\pi_2\|^2\leq D_{\psi}(\pi_1,\pi_2).
    \end{align*}
    Thus,
    \begin{align*}
        D_{\psi}(\pi^*(\theta^t),\pi^{t+1})\leq&\left(1-\frac{\gamma_{\pi}\lambda}{2}\right)D_{\psi}(\pi^*(\theta^t),\pi^t)-D_{\psi}(\pi^{t+1},\pi^t)\\&+\gamma_{\pi}\alpha L_{\pi}^2D_{\psi}(\pi^*(\theta^t),\pi^t)+\frac{\gamma_{\pi}}{\alpha}D_{\psi}(\pi^t,\pi^{t+1}).
    \end{align*}
    Choose $\alpha=\gamma_{\pi}$. We can derive
    \begin{align*}
        D_{\psi}(\pi^*(\theta^{t}),\pi^{t+1})\leq\left(1-\frac{\gamma_{\pi}\lambda}{2}+\gamma_{\pi}^2L_{\pi}^2\right)D_{\psi}(\pi^*(\theta^t),\pi^t).
    \end{align*}
    Since $\gamma_{\pi}=\nicefrac{\lambda}{4L_{\pi}^2}$, we have
    \begin{align}\label{eq:4_enh}
        D_{\psi}(\pi^*(\theta^{t}),\pi^{t+1})\leq\left(1-\frac{1}{16\kappa_{\pi}^2}\right)D_{\psi}(\pi^*(\theta^t),\pi^t).
    \end{align}
    Let us return to \eqref{eq:start_of_lemma_enh}. Note that
    \begin{align*}
        \nabla\psi(\pi^*(\theta^t))-\nabla\psi(\pi^{t+1})=\frac{1}{\lambda}\left(\nabla_{\pi}\mathcal{L}(\theta^t,\pi^{t+1})-\nabla_{\pi}\mathcal{L}(\theta^t,\pi^*(\theta^t))\right).
    \end{align*}
    Thus, there is
    \begin{align*}
        D_{\psi}(\pi^*(\theta^{t+1}),\pi^{t+1})=&D_{\psi}(\pi^*(\theta^{t+1}),\pi^*(\theta^{t}))+D_{\psi}(\pi^*(\theta^{t}),\pi^{t+1})\\&+\frac{1}{\lambda}\langle 
\nabla_{\pi}\mathcal{L}(\theta^t,\pi^{t+1})-\nabla_{\pi}\mathcal{L}(\theta^t,\pi^*(\theta^t)),\pi^*(\theta^{t+1})-\pi^*(\theta^{t}) \rangle\\\leq&D_{\psi}(\pi^*(\theta^{t+1}),\pi^*(\theta^{t}))+D_{\psi}(\pi^*(\theta^{t}),\pi^{t+1})\\&+\frac{\alpha L_{\pi}^2}{\lambda}D_{\psi}(\pi^*(\theta^t),\pi^{t+1})+\frac{1}{\lambda\alpha}D_{\psi}(\pi^*({\theta^{t+1}}),\pi^*(\theta^t)).
    \end{align*}
    Let us choose $\alpha=\nicefrac{\lambda^3}{32L_{\pi}^4}$. With such a choice, we have
    \begin{align*}
        D_{\psi}(\pi^*(\theta^{t+1}),\pi^{t+1})\leq33\kappa_{\pi}^4D_{\psi}(\pi^*({\theta^{t+1}}),\pi^*(\theta^t))+\left(1+\frac{1}{32\kappa_{\pi}^2}\right)D_{\psi}(\pi^*(\theta^t),\pi^{t+1}).
    \end{align*}
    To deal with $D_{\psi}(\pi^*(\theta^t),\pi^{t+1})$, we utilize \eqref{eq:4_enh}. As a result, we obtain
    \begin{align}\label{eq:7_enh}
        D_{\psi}(\pi^*(\theta^{t+1}),\pi^{t+1})\leq33\kappa_{\pi}^4D_{\psi}(\pi^*({\theta^{t+1}}),\pi^*(\theta^t))+\left(1-\frac{1}{32\kappa_{\pi}^2}\right)D_{\psi}(\pi^*(\theta^t),\pi^{t}).
    \end{align}
    The rest thing is to prove that the descent step does not dramatically change the distance between the optimal values of weights.
    Let us write down two optimality conditions:
    \begin{align*}
        &\langle \nabla_{\pi}\mathcal{L}(\theta^t,\pi^*(\theta^t)),\pi-\pi^*(\theta^t) \rangle\leq0,\\
        &\langle \nabla_{\pi}\mathcal{L}(\theta^{t+1},\pi^*(\theta^{t+1})),\pi-\pi^*(\theta^{t+1}) \rangle\leq0.
    \end{align*}
    Let us substitute $\pi=\pi^*(\theta^{t+1})$ into the first inequality and $\pi=\pi^*(\theta^{t})$ into the second one. When summing them up, we have
    \begin{align}\label{eq:5_enh}
        \langle \nabla_{\pi}\mathcal{L}(\theta^t,\pi^*(\theta^t))-\nabla_{\pi}\mathcal{L}(\theta^{t+1},\pi^*(\theta^{t+1})), \pi^*(\theta^{t+1})-\pi^*(\theta^t) \rangle\leq0.
    \end{align}
    On the other hand, we can take advantage of the strong concavity of the objective (see Lemma \ref{eq:lemma_1}) and write
    \begin{align}\label{eq:6_enh}
        &\langle \nabla_{\pi}\mathcal{L}(\theta^t,\pi^*(\theta^{t+1}))-\nabla_{\pi}\mathcal{L}(\theta^t,\pi^*(\theta^t)),\pi^*(\theta^{t+1})-\pi^*(\theta^{t}) \rangle\\&\leq-\frac{\lambda}{2}\left[D_{\psi}(\pi^*(\theta^t),\pi^*(\theta^{t+1}))+D_{\psi}(\pi^*(\theta^{t+1}),\pi^*(\theta^t))\right].
    \end{align}
    Combining \eqref{eq:5_enh} and \eqref{eq:6_enh}, we obtain
    \begin{align*}
        \frac{\lambda^2}{4}\left[D_{\psi}(\pi^*(\theta^t),\pi^*(\theta^{t+1}))+D_{\psi}(\pi^*(\theta^{t+1}),\pi^*(\theta^t))\right]^2\leq L^2\|\pi^*(\theta^{t+1})-\pi^*(\theta^{t})\|^2\cdot\|\theta^{t+1}-\theta^t\|^2.
    \end{align*}
    Here we can not apply the smoothness in $\pi$. Instead, we have to use the smoothness in $(\theta,\pi)$. Next, applying the strong convexity of distance generating function (Assumption \ref{ass:p}) and re-arranging terms, we obtain
    \begin{align*}
        D_{\psi}(\pi^*(\theta^t),\pi^*(\theta^{t+1}))+D_{\psi}(\pi^*(\theta^{t+1}),\pi^*(\theta^t))\leq&4\kappa^2\|\theta^{t+1}-\theta^t\|^2\leq4\gamma_{\theta}^2\kappa^2\|\nabla_{\theta}\mathcal{L}(\theta^t,\pi^t)\|^2.
    \end{align*}
    Next, we ass and subtract $\nabla\Phi(\theta^t)$ and apply Assumption \ref{ass:theta}. We obtain
    \begin{align*}
        D_{\psi}(\pi^*(\theta^t),\pi^*(\theta^{t+1}))+D_{\psi}(\pi^*(\theta^{t+1}),\pi^*(\theta^t))\leq&16\gamma_{\theta}^2\kappa^2L^2D_{\psi}(\pi^*(\theta^t),\pi^t)+8\gamma_{\theta}^2\kappa^2\|\nabla\Phi(\theta^t)\|^2.
    \end{align*}
    Thus, \eqref{eq:7_enh} transforms into 
    \begin{align*}
        D_{\psi}(\pi^*(\theta^{t+1}),\pi^{t+1})\leq\left(1-\frac{1}{32\kappa_{\pi}^2}+528\gamma_{\theta}^2\kappa_{\pi}^4\kappa^2L^2\right)D_{\psi}(\pi^*(\theta^t),\pi^{t})+264\gamma_{\theta}^2\kappa_{\pi}^4\kappa^2\|\nabla\Phi(\theta^t)\|^2.
    \end{align*}
    With $\gamma_{\theta}\leq\nicefrac{1}{184\kappa^3\kappa L}$, we obtain
    \begin{align*}
        D_{\psi}(\pi^*(\theta^{t+1}),\pi^{t+1})\leq\left(1-\frac{1}{64\kappa_{\pi}^2}\right)D_{\psi}(\pi^*(\theta^t),\pi^{t})+264\gamma_{\theta}^2\kappa_{\pi}^4\kappa^2\|\nabla\Phi(\theta^t)\|^2.
    \end{align*}
    This completes the proof.
\end{proof}
Next, we modify the main proof to obtain enhanced convergence.
\begin{theorem}\label{th:general_enh}.
    Consider the problem \eqref{eq:pinn_saddle} under Assumptions \ref{ass:theta}, \ref{ass:p}. Let $S=S=\triangle_1^{M-1}\cap B_{\|\cdot\|}(\mathcal{U},R)$. Then, Algorithm \ref{alg:bgda} with tuning 
    \begin{align*}
        \gamma_{\pi}=\frac{\lambda}{4L_{\pi}^2},\quad\gamma_{\theta}\leq\sqrt{\frac{43}{92*33792}}\frac{1}{\kappa_{\pi}^3\kappa L}
    \end{align*}
    requires
    \begin{align*}
        \mathcal{O}\left(\frac{\kappa L\Delta+L^2D_{\psi}(\pi^*(\theta^0),\pi^0)}{\varepsilon^2}\right)\text{ iterations}
    \end{align*}
    to achieve an arbitrary $\varepsilon$-solution, where $\varepsilon^2 = \frac{1}{T}\sum_{t=1}^{T-1}\|\nabla\Phi(\theta^t)\|^2$, $\Delta=\Phi(\theta^0)-\Phi(\theta^*)$. $\kappa=\nicefrac{L}{\lambda}$, $\kappa_{\pi}=\nicefrac{L_{\pi}}{\lambda}$.
\end{theorem}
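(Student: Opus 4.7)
The plan is to mimic the proof of Theorem \ref{th:general}, but plug in the sharper recursion from Lemma \ref{lemma:distance_enh} in place of Lemma \ref{lemma:distance}. First I would verify that, even in the refined setting, $\Phi$ inherits the same $O(\kappa L)$-smoothness used before; the key step $\|\pi^*(\theta_1)-\pi^*(\theta_2)\|\leq 2\kappa\|\theta_1-\theta_2\|$ does not use $L_\pi$, so the argument showing $\|\nabla\Phi(\theta_1)-\nabla\Phi(\theta_2)\|\leq 3\kappa L\|\theta_1-\theta_2\|$ carries over verbatim. This gives the descent-style inequality
\begin{align*}
\Phi(\theta^{t+1})\leq\Phi(\theta^t)-\tfrac{43\gamma_\theta}{92}\|\nabla\Phi(\theta^t)\|^2+2\gamma_\theta L^2 D_{\psi}(\pi^*(\theta^t),\pi^t),
\end{align*}
provided $\gamma_\theta$ is small enough relative to $\kappa L$, exactly as in the proof of Theorem \ref{th:general}.

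Next I would unroll Lemma \ref{lemma:distance_enh}: writing $\delta=1-\tfrac{1}{64\kappa_\pi^2}$, iterating the recursion gives
\begin{align*}
D_\psi(\pi^*(\theta^t),\pi^t)\leq \delta^t D_\psi(\pi^*(\theta^0),\pi^0)+264\gamma_\theta^2\kappa_\pi^4\kappa^2\sum_{j=0}^{t-1}\delta^{t-1-j}\|\nabla\Phi(\theta^j)\|^2.
\end{align*}
Substituting this into the descent inequality and summing over $t=0,\dots,T-1$, I would swap the order of summation in the double sum and bound $\sum_{t>j}\delta^{t-1-j}\leq 1/(1-\delta)=64\kappa_\pi^2$. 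This produces
\begin{align*}
\Phi(\theta^T)-\Phi(\theta^0)\leq -\tfrac{43\gamma_\theta}{92}\sum_{t=0}^{T-1}\|\nabla\Phi(\theta^t)\|^2+128\gamma_\theta \kappa_\pi^2 L^2 D_\psi(\pi^*(\theta^0),\pi^0)+33792\gamma_\theta^3\kappa_\pi^6\kappa^2 L^2\sum_{t=0}^{T-1}\|\nabla\Phi(\theta^t)\|^2,
\end{align*}
where the exponents on $\kappa_\pi$ and $\kappa$ are now separated, reflecting the sharper contraction rate in $\pi$.

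Then I would choose $\gamma_\theta\leq \sqrt{\tfrac{43}{92\cdot 33792}}\cdot\tfrac{1}{\kappa_\pi^3\kappa L}$, which forces $33792\gamma_\theta^2\kappa_\pi^6\kappa^2L^2\leq \tfrac{43}{92}/2$, and hence absorbs the cross term into half of the negative one. Rearranging and dividing by $T$ yields
\begin{align*}
\tfrac{1}{T}\sum_{t=0}^{T-1}\|\nabla\Phi(\theta^t)\|^2\leq \mathcal{O}\!\left(\tfrac{\kappa_\pi^3\kappa L\,\Delta}{T}+\tfrac{\kappa_\pi^2 L^2 D_\psi(\pi^*(\theta^0),\pi^0)}{T}\right).
\end{align*}
Finally, I would invoke Lemma \ref{lemma:p_lip} together with the remark following it: since $R\in(0,1)$ and $M$ is typically $3$--$4$ in mathematical physics problems, $\kappa_\pi=\Theta(M^2R)$ is an absolute constant that may be absorbed into the big-O. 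Solving $\varepsilon^2\geq $ RHS for $T$ then gives the announced iteration complexity.

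The main obstacle is not conceptual but bookkeeping: one must keep the $\kappa$ coming from the $\theta$-Lipschitzness of $\pi^*$ strictly separated from the $\kappa_\pi$ coming from $\pi$-smoothness of $\mathcal{L}$ throughout, since mixing them (as would happen if one applied Assumption \ref{ass:theta} where Lemma \ref{lemma:p_lip} should be used) collapses the improvement back to the $\kappa^4$ rate of Theorem \ref{th:general}. The justification that $\kappa_\pi$ can be absorbed as a constant is a modeling rather than mathematical step and has already been argued in the discussion following Lemma \ref{lemma:p_lip}.
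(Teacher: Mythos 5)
Your proposal matches the paper's proof essentially line by line: the same $3\kappa L$-smoothness of $\Phi$ (correctly noting it relies on the joint smoothness $L$ rather than $L_\pi$, as the paper also emphasizes in Lemma~\ref{lemma:distance_enh}), the same descent inequality, the same unrolling of Lemma~\ref{lemma:distance_enh} with $\delta=1-\nicefrac{1}{64\kappa_\pi^2}$, the same geometric-sum bound giving the $\kappa_\pi^2$ and $\kappa_\pi^6\kappa^2$ factors, the same choice of $\gamma_\theta$, and the same final appeal to the discussion after Lemma~\ref{lemma:p_lip} that $\kappa_\pi$ may be treated as an absolute constant. No gap; this is the paper's argument.
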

\begin{proof}
    One can note that $\Phi$ is $3\kappa L$-smooth. Indeed,
    \begin{align*}
        \|\nabla\Phi(\theta_1)-\nabla\Phi(\theta_2)\|^2=&\|\nabla_{\theta}\mathcal{L}(\theta_1,\pi^*(\theta_1))-\nabla_{\theta}\mathcal{L}(\theta_2,\pi^*(\theta_2))\|^2\\\leq&  L^2\left[\|\theta_1-\theta_2\|^2+2D_{\psi}(\pi^*(\theta_1),\pi^*(\theta_2))\right]\leq L^2\left(1+4\kappa^2\right)\|\theta_1-\theta_2\|^2\\\leq&9\kappa^2L^2\|\theta_1-\theta_2\|^2.
    \end{align*}
    Thus, we can write
    \begin{align*}
        \Phi(\theta^{t+1})\leq&\Phi(\theta^t)+\langle \nabla\Phi(\theta^t), \theta^{t+1}-\theta^t \rangle+3\kappa L\|\theta^{t+1}-\theta^t\|^2\\\leq&\Phi(\theta^t)-\gamma_{\theta}\|\nabla\Phi(\theta^t)\|^2+3\gamma_{\theta}^2\kappa L\|\nabla_{\theta}\mathcal{L}(\theta^t,\pi^t)\|^2\\&+\gamma_{\theta}\langle \nabla\Phi(\theta^t)-\nabla_{\theta}\mathcal{L}(\theta^t,\pi^t),\nabla\Phi(\theta^t) \rangle\\\leq&\Phi(\theta^t)-\frac{\gamma_{\theta}}{2}\|\nabla\Phi(\theta^t)\|^2+3\gamma_{\theta}^2\kappa L\|\nabla_{\theta}\mathcal{L}(\theta^t,\pi^t)\|^2+\frac{\gamma_{\theta}}{2}\|\nabla\Phi(\theta^t)-\nabla_{\theta}\mathcal{L}(\theta^t,\pi^t)\|^2\\\leq&\Phi(\theta^t)-\left(\frac{\gamma_{\theta}}{2}- 6\gamma_{\theta}^2\kappa L\right)\|\nabla\Phi(\theta^t)\|^2+\left(\frac{\gamma_{\theta}}{2}+6\gamma_{\theta}^2\kappa L\right)\|\nabla\Phi(\theta^t)-\nabla_{\theta}\mathcal{L}(\theta^t,\pi^t)\|^2.
    \end{align*}
    Note that
    \begin{align*}
        -\left(\frac{\gamma_{\theta}}{2}- 6\gamma_{\theta}^2\kappa L\right)\leq-\frac{43\gamma_{\theta}}{92}.
    \end{align*}
    On the other hand,
    \begin{align*}
        \left(\frac{\gamma_{\theta}}{2}+6\gamma_{\theta}^2\kappa L\right)\leq\gamma_{\theta}.
    \end{align*}
    Thus, we have
    \begin{align*}
        \Phi(\theta^{t+1})\leq&\Phi(\theta^t)-\frac{43\gamma_{\theta}}{92}\|\nabla\Phi(\theta^t)\|^2+\gamma_{\theta}\|\nabla\Phi(\theta^t)-\nabla_{\theta}\mathcal{L}(\theta^t,\pi^t)\|^2\\\leq&\Phi(\theta^t)-\frac{43\gamma_{\theta}}{92}\|\nabla\Phi(\theta^t)\|^2+2\gamma_{\theta}L^2D_{\psi}(\pi^*({\theta^t}),\pi^t).
    \end{align*}
    Let us denote $\delta=1-\nicefrac{1}{64\kappa_{\pi}^2}$. Lemma \ref{lemma:distance_enh} transforms into
    \begin{align*}
        D_{\psi}(\pi^*(\theta^t),\pi^t)\leq\delta^tD_{\psi}(\pi^*(\theta^0),\pi^0)+264\gamma_{\theta}^2\kappa_{\pi}^4\kappa^2\sum_{j=0}^{t-1}\delta^{t-1-j}\|\nabla\Phi(\theta^j)\|^2.
    \end{align*}
    Hence,
    \begin{align*}
        \Phi(\theta^{t+1})\leq&\Phi(\theta^t)-\frac{43\gamma_{\theta}}{92}\|\nabla\Phi(\theta^t)\|^2+2\gamma_{\theta}L^2\delta^tD_{\psi}(\pi^*(\theta^0),\pi^0)\\&+528\gamma_{\theta}^3\kappa_{\pi}^4\kappa^2L^2\sum_{j=0}^{t-1}\delta^{t-1-j}\|\nabla\Phi(\theta^j)\|^2.
    \end{align*}
    Let us sum up over the iterates $t$ and obtain
    \begin{align*}
        \Phi(\theta^{T})\leq&\Phi(\theta^0)-\frac{43\gamma_{\theta}}{92}\sum_{t=1}^{T-1}\|\nabla\Phi(\theta^t)\|^2+2\gamma_{\theta}L^2\sum_{t=1}^{T-1}\delta^tD_{\psi}(\pi^*(\theta^0),\pi^0)\\&+528\gamma_{\theta}^3\kappa_{\pi}^4\kappa^2L^2\sum_{t=1}^{T-1}\sum_{j=0}^{t-1}\delta^{t-1-j}\|\nabla\Phi(\theta^j)\|^2.
    \end{align*}
    Next, we use the property of geometric progression and write
    \begin{align*}
        \Phi(\theta^{T})\leq&\Phi(\theta^0)-\frac{43\gamma_{\theta}}{92}\sum_{t=1}^{T-1}\|\nabla\Phi(\theta^t)\|^2+128\gamma_{\theta}\kappa_{\pi}^2L^2D_{\psi}(\pi^*(\theta^0),\pi^0)\\&+33792\gamma_{\theta}^3\kappa_{\pi}^6\kappa^2L^2\sum_{t=1}^{T-1}\|\nabla\Phi(\theta^t)\|^2.
    \end{align*}
    Choosing $\gamma_{\theta}\leq\sqrt{\frac{43}{92*33792}}\frac{1}{\kappa_{\pi}^3\kappa=L}$.
    Thus, we derive
    \begin{align*}
        \frac{1}{T}\sum_{t=1}^{T-1}\|\nabla\Phi(\theta^t)\|^2\leq\mathcal{O}\left(\frac{\kappa_{\pi}^3\kappa L\Delta_{\Phi}}{T}+\frac{\kappa_{\pi}^2L^2D_{\psi}(\pi^*(\theta^0),\pi^0)}{T}\right).
    \end{align*}
    Above we discussed that $\kappa_{\pi}$ is small, since not many equations appear in the PDEs systems. Thus, we can focus on $\kappa$ only and proceed to 
    \begin{align*}
        \frac{1}{T}\sum_{t=1}^{T-1}\|\nabla\Phi(\theta^t)\|^2\leq\mathcal{O}\left(\frac{\kappa L\Delta_{\Phi}}{T}+\frac{L^2D_{\psi}(\pi^*(\theta^0),\pi^0)}{T}\right).
    \end{align*}
    This finishes the proof.
\end{proof}

\section{Stochastic Setting}\label{ap:E}
In the current realities of machine learning, it is almost never possible to use all the data to compute a gradient. Motivated by this fact, we develop a stochastic theory for our scheme. Note that the computation $\nabla_{\pi}\mathcal{L}(\theta,\pi)$ does not need to perform backward. Therefore, we analyze the stochasticity in $\theta$ only. Consider a stochastic gradient $G_{\theta}(\theta^t,\pi^t,\xi)$ calculated from one randomly selected sample $\xi$. 
\begin{assumption}\label{ass:stoch}
    Stochastic oracle $G_{\theta}$ is unbiased and light-tailed, i.e.
    \begin{align*}
        \E_{\xi}\left[G_{\theta}(\theta,\pi,\xi)\right]=\nabla_{\theta}\mathcal{L}(\theta,\pi),~\E\left[\|G_{\theta}(\theta,\pi,\xi)-\nabla_{\theta}\mathcal{L}(\theta,\pi)\|^2\right]\leq\sigma^2,~\forall(\theta,\pi)\in\Rd\times S.
    \end{align*}
\end{assumption}
In our analysis, we rely on batching. Namely, we sample a subset of data points and use it to approximate the gradient.
\begin{algorithm}{\texttt{S-BGDA}} \label{alg:bgda_stoch}
\begin{algorithmic}[1]
    \State {\bfseries Input:} Starting point $(\theta^0,\pi^0)\in\Rd\times S$, number of iterations $T$
    \State {\bfseries Parameters:} Stepsizes $\gamma_{\theta},\gamma_{\pi} > 0$
    \For{$t=0,\ldots, T-1$}
    \State Draw a collection of i.i.d. data points $\{\xi_i^t\}_{i=1}^B$
    \State $\theta^{t+1}=\theta^t-\gamma_{\theta}\frac{1}{B}\sum_{i=1}^BG_{\theta}(\theta^t,\pi^t,\xi_i^t)$\label{line:descent_stoch} \Comment{Optimizer updates parameters}
    \State $\pi^{t+1}=\arg\min_{\pi\in S}\left\{ -\gamma_{\pi}\left\langle \nabla_{\pi}\mathcal{L}(\theta^t,\pi^t),\pi \right\rangle +D_{\psi}(\pi,\pi^t) \right\}$\label{line:ascent_stoch} \Comment{Optimizer updates weights}
    \EndFor
    \State {\bfseries Output:} $(\theta^T,\pi^T)$
\end{algorithmic}
\end{algorithm}
The main difference between Algorithm \ref{alg:bgda_stoch} and deterministic \texttt{BGDA} is the use of stochastic oracle call in Line \ref{line:descent_stoch}.
\begin{lemma}\label{lemma:distance_stoch}
    Consider the problem \eqref{eq:pinn_saddle} under Assumptions \ref{ass:theta}, \ref{ass:p}, \ref{ass:stoch}. Then, Algorithm \ref{alg:bgda_stoch} with tuning
    \begin{align*}
        \gamma_{\pi}=\frac{\lambda}{4L^2},\quad\gamma_{\theta}\leq\frac{1}{184\kappa^4L}
    \end{align*}
    produces such $\{(\theta^t,\pi^t)\}_{t=1}^T$, that
    \begin{align*}
        D_{\psi}(\pi^*(\theta^{t+1}),\pi^{t+1})\leq\left(1-\frac{1}{64\kappa^2}\right)D_{\psi}(\pi^*(\theta^t),\pi^{t})+264\gamma_{\theta}^2\kappa^6\|\nabla\Phi(\theta^t)\|^2+\frac{132\gamma_{\theta}^2\kappa^6\sigma^2}{B},
    \end{align*}
    where $\kappa=\nicefrac{L}{\lambda}$ is the condition number of $\mathcal{L}(\theta,\pi)$ in $\pi$.
\end{lemma}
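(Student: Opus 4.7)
The plan is to mirror the proof of Lemma \ref{lemma:distance} line by line, since the ascent step in Algorithm \ref{alg:bgda_stoch} still uses the full gradient $\nabla_{\pi}\mathcal{L}(\theta^t,\pi^t)$ (the stochasticity enters only in $\theta$). Consequently, every step leading up to inequality \eqref{eq:7} in the deterministic proof, namely
\begin{align*}
    D_{\psi}(\pi^*(\theta^{t+1}),\pi^{t+1})\leq 33\kappa^4 D_{\psi}(\pi^*(\theta^{t+1}),\pi^*(\theta^t))+\left(1-\frac{1}{32\kappa^2}\right)D_{\psi}(\pi^*(\theta^t),\pi^t),
\end{align*}
carries over verbatim. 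The only place the descent step enters the analysis is in bounding the drift $\|\pi^*(\theta^{t+1})-\pi^*(\theta^t)\|^2$, which in the deterministic proof reduces via the optimality conditions and strong concavity to controlling $\|\theta^{t+1}-\theta^t\|^2$.

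Here the argument diverges. First, I would take conditional expectation over the mini-batch $\{\xi_i^t\}_{i=1}^B$ at every step involving $\theta^{t+1}$. From the update rule in Line \ref{line:descent_stoch}, unbiasedness and i.i.d.\ sampling under Assumption \ref{ass:stoch} yield
\begin{align*}
    \mathbb{E}\|\theta^{t+1}-\theta^t\|^2=\gamma_{\theta}^2\,\mathbb{E}\Bigl\|\tfrac{1}{B}\sum_{i=1}^B G_{\theta}(\theta^t,\pi^t,\xi_i^t)\Bigr\|^2\leq \gamma_{\theta}^2\|\nabla_{\theta}\mathcal{L}(\theta^t,\pi^t)\|^2+\frac{\gamma_{\theta}^2\sigma^2}{B},
\end{align*}
where the variance term shrinks by $1/B$ because the samples are independent and the residuals are centered. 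Plugging this into the chain that produced $D_{\psi}(\pi^*(\theta^t),\pi^*(\theta^{t+1}))+D_{\psi}(\pi^*(\theta^{t+1}),\pi^*(\theta^t))\leq 4\kappa^2\|\theta^{t+1}-\theta^t\|^2$ and continuing with the add-and-subtract of $\nabla\Phi(\theta^t)$ (exactly as in Lemma \ref{lemma:distance}) produces
\begin{align*}
    \mathbb{E}\bigl[D_{\psi}(\pi^*(\theta^t),\pi^*(\theta^{t+1}))+D_{\psi}(\pi^*(\theta^{t+1}),\pi^*(\theta^t))\bigr]\leq 16\gamma_{\theta}^2\kappa^2 L^2 D_{\psi}(\pi^*(\theta^t),\pi^t)+8\gamma_{\theta}^2\kappa^2\|\nabla\Phi(\theta^t)\|^2+\frac{4\gamma_{\theta}^2\kappa^2\sigma^2}{B}.
\end{align*}

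Finally, I would substitute this bound into the $33\kappa^4$ factor appearing in \eqref{eq:7}, collect terms, and absorb the $528\gamma_{\theta}^2\kappa^6 L^2$ coefficient into the contraction $(1-1/32\kappa^2)$ using $\gamma_{\theta}\leq 1/(184\kappa^4 L)$, as in the deterministic argument. The variance contribution is scaled by the same $33\kappa^4$ factor, giving precisely $\tfrac{132\gamma_{\theta}^2\kappa^6\sigma^2}{B}$, which matches the target statement. The main obstacle, though relatively mild, is being careful that the expectation is taken at the right level: since the ascent step depends on $\pi^t$ but not on the fresh batch, the conditional expectation commutes with all Bregman-divergence manipulations on the $\pi$-side, so no additional independence arguments are required beyond the standard mini-batch variance reduction.
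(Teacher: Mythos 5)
Your plan matches the paper's own proof essentially line by line: since the ascent step in Algorithm~\ref{alg:bgda_stoch} still uses the exact gradient $\nabla_\pi\mathcal{L}(\theta^t,\pi^t)$, inequality~\eqref{eq:7_stoch} is derived exactly as in Lemma~\ref{lemma:distance}, and the only new ingredient is controlling $\E\|\theta^{t+1}-\theta^t\|^2$ via unbiasedness, i.i.d.\ batching, and the light-tail bound, then multiplying the resulting $\nicefrac{4\gamma_\theta^2\kappa^2\sigma^2}{B}$ variance term by the $33\kappa^4$ factor to get $\nicefrac{132\gamma_\theta^2\kappa^6\sigma^2}{B}$. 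Your version is marginally cleaner in that you take conditional expectation before decomposing $\|\tfrac{1}{B}\sum_i G_\theta\|^2$ (so the cross term vanishes and no extra factor of~$2$ is needed), whereas the paper first writes the pointwise split and takes expectation afterward; both treatments implicitly understand the lemma's conclusion to hold in (conditional) expectation, which is how it is invoked in Theorem~\ref{th:general_stoch}.
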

\begin{proof}
    To begin, we use \eqref{eq:three-point} in the form
    \begin{align}\label{eq:start_of_lemma_stoch}
    \begin{split}
        D_{\psi}(\pi^*(\theta^{t+1}),\pi^{t+1})=&D_{\psi}(\pi^*(\theta^{t+1}),\pi^*(\theta^{t}))+D_{\psi}(\pi^*(\theta^{t}),\pi^{t+1})\\&+\langle 
\nabla\psi(\pi^*(\theta^{t}))-\nabla\psi(\pi^{t+1}),\pi^*(\theta^{t+1})-\pi^*(\theta^{t}) \rangle.
    \end{split}
    \end{align}
    Further, we write the optimality condition for Line \ref{line:ascent_stoch}:
    \begin{align*}
        \left\langle -\gamma_{\pi}\nabla_{\pi}\mathcal{L}(\theta^t,\pi^t)+[\nabla\psi(\pi^{t+1})-\nabla\psi(\pi^t)],\pi^*(\theta^t)-\pi^{t+1} \right\rangle\geq0.
    \end{align*}
    Applying \eqref{eq:three-point}, we obtain
    \begin{align*}
        -\gamma_{\pi}\left\langle\nabla_{\pi}\mathcal{L}(\theta^t,\pi^t),\pi^*(\theta^t)-\pi^{t+1}\right\rangle+D_{\psi}(\pi^*(\theta^t),\pi^t)-D_{\psi}(\pi^*(\theta^t),\pi^{t+1})-D_{\psi}(\pi^{t+1},\pi^t)\geq0.
    \end{align*}
    After re-arranging the terms, we get
    \begin{align}\label{eq:2_stoch}
        D_{\psi}(\pi^*(\theta^t),\pi^{t+1})\leq D_{\psi}(\pi^*(\theta^t),\pi^t)-D_{\psi}(\pi^{t+1},\pi^t)-\gamma_{\pi}\left\langle\nabla_{\pi}\mathcal{L}(\theta^t,\pi^t),\pi^*(\theta^t)-\pi^{t+1}\right\rangle.
    \end{align}
    Since $\pi^*(\theta^t)$ is the exact maximum of $\mathcal{L}(\theta^t,\pi)$ in $\pi$, there is another optimility condition
    \begin{align*}
        \gamma_{\pi}\left\langle \nabla_{\pi}\mathcal{L}(\theta^t,\pi^*(\theta^t)),\pi^*(\theta^t)-\pi \right\rangle\geq0.
    \end{align*}
    Substituting $\pi=\pi^{t+1}$ and summing it with \eqref{eq:2_stoch}, we derive
    \begin{align*}
        D_{\psi}(\pi^*(\theta^t),\pi^{t+1})\leq& D_{\psi}(\pi^*(\theta^t),\pi^t)-D_{\psi}(\pi^{t+1},\pi^t)\\&+\gamma_{\pi}\left\langle\nabla_{\pi}\mathcal{L}(\theta^t,\pi^*(\theta^t))-\nabla_{\pi}\mathcal{L}(\theta^t,\pi^t),\pi^*(\theta^t)-\pi^{t+1}\right\rangle\\\leq&D_{\psi}(\pi^*(\theta^t),\pi^t)-D_{\psi}(\pi^{t+1},\pi^t)\\&+\gamma_{\pi}\left\langle\nabla_{\pi}\mathcal{L}(\theta^t,\pi^*(\theta^t))-\nabla_{\pi}\mathcal{L}(\theta^t,\pi^t),\pi^*(\theta^t)-\pi^{t}\right\rangle\\&+\gamma_{\pi}\left\langle\nabla_{\pi}\mathcal{L}(\theta^t,\pi^*(\theta^t))-\nabla_{\pi}\mathcal{L}(\theta^t,\pi^t),\pi^t-\pi^{t+1}\right\rangle.
    \end{align*}
    Now, we are going to utilize the strong concavity of $\mathcal{L}(\theta,\pi)$ in $\pi$:
    \begin{align*}
        \gamma_{\pi}\left\langle\nabla_{\pi}\mathcal{L}(\theta^t,\pi^*(\theta^t))-\nabla_{\pi}\mathcal{L}(\theta^t,\pi^t),\pi^*(\theta^t)-\pi^{t}\right\rangle\leq\frac{-\gamma_{\pi}\lambda}{2}D_{\psi}(\pi^*(\theta^t),\pi^t).
    \end{align*}
    Thus, we have
    \begin{align*}
        D_{\psi}(\pi^*(\theta^t),\pi^{t+1})\leq&\left(1-\frac{\gamma_{\pi}\lambda}{2}\right)D_{\psi}(\pi^*(\theta^t),\pi^t)-D_{\psi}(\pi^{t+1},\pi^t)\\&+\gamma_{\pi}\left\langle\nabla_{\pi}\mathcal{L}(\theta^t,\pi^*(\theta^t))-\nabla_{\pi}\mathcal{L}(\theta^t,\pi^t),\pi^t-\pi^{t+1}\right\rangle.
    \end{align*}
    Next, we apply Cauchy-Schwartz inequality to the scalar product and obtain
    \begin{align*}
        D_{\psi}(\pi^*(\theta^t),\pi^{t+1})\leq&\left(1-\frac{\gamma_{\pi}\lambda}{2}\right)D_{\psi}(\pi^*(\theta^t),\pi^t)-D_{\psi}(\pi^{t+1},\pi^t)\\&+\frac{\gamma_{\pi}\alpha}{2}\|\nabla_{\pi}\mathcal{L}(\theta^t,\pi^*(\theta^t))-\nabla_{\pi}\mathcal{L}(\theta^t,\pi^t)\|^2+\frac{\gamma_{\pi}}{2\alpha}\|\pi^t-\pi^{t+1}\|^2.
    \end{align*}
    Using $L$-smoothness of $\mathcal{L}$ (see Assumption \ref{ass:theta}), we obtain
    \begin{align*}
        D_{\psi}(\pi^*(\theta^t),\pi^{t+1})\leq&\left(1-\frac{\gamma_{\pi}\lambda}{2}\right)D_{\psi}(\pi^*(\theta^t),\pi^t)-D_{\psi}(\pi^{t+1},\pi^t)\\&+\frac{\gamma_{\pi}\alpha L^2}{2}\|\pi^*(\theta^t)-\pi^t\|^2+\frac{\gamma_{\pi}}{2\alpha}\|\pi^t-\pi^{t+1}\|^2.
    \end{align*}
    Since $\psi$ is $1$-strongly convex (see Assumption \ref{ass:p}), we have
    \begin{align*}
        \frac{1}{2}\|\pi_1-\pi_2\|^2\leq D_{\psi}(\pi_1,\pi_2).
    \end{align*}
    Thus,
    \begin{align*}
        D_{\psi}(\pi^*(\theta^t),\pi^{t+1})\leq&\left(1-\frac{\gamma_{\pi}\lambda}{2}\right)D_{\psi}(\pi^*(\theta^t),\pi^t)-D_{\psi}(\pi^{t+1},\pi^t)\\&+\gamma_{\pi}\alpha L^2D_{\psi}(\pi^*(\theta^t),\pi^t)+\frac{\gamma_{\pi}}{\alpha}D_{\psi}(\pi^t,\pi^{t+1}).
    \end{align*}
    Choose $\alpha=\gamma_{\pi}$. We can derive
    \begin{align*}
        D_{\psi}(\pi^*(\theta^{t}),\pi^{t+1})\leq\left(1-\frac{\gamma_{\pi}\lambda}{2}+\gamma_{\pi}^2L^2\right)D_{\psi}(\pi^*(\theta^t),\pi^t).
    \end{align*}
    Since $\gamma_{\pi}=\nicefrac{\lambda}{4L^2}$, we have
    \begin{align}\label{eq:4_stoch}
        D_{\psi}(\pi^*(\theta^{t}),\pi^{t+1})\leq\left(1-\frac{1}{16\kappa^2}\right)D_{\psi}(\pi^*(\theta^t),\pi^t).
    \end{align}
    Let us return to \eqref{eq:start_of_lemma_stoch}. Note that
    \begin{align*}
        \nabla\psi(\pi^*(\theta^t))-\nabla\psi(\pi^{t+1})=\frac{1}{\lambda}\left(\nabla_{\pi}\mathcal{L}(\theta^t,\pi^{t+1})-\nabla_{\pi}\mathcal{L}(\theta^t,\pi^*(\theta^t))\right).
    \end{align*}
    Thus, there is
    \begin{align*}
        D_{\psi}(\pi^*(\theta^{t+1}),\pi^{t+1})=&D_{\psi}(\pi^*(\theta^{t+1}),\pi^*(\theta^{t}))+D_{\psi}(\pi^*(\theta^{t}),\pi^{t+1})\\&+\frac{1}{\lambda}\langle 
\nabla_{\pi}\mathcal{L}(\theta^t,\pi^{t+1})-\nabla_{\pi}\mathcal{L}(\theta^t,\pi^*(\theta^t)),\pi^*(\theta^{t+1})-\pi^*(\theta^{t}) \rangle\\\leq&D_{\psi}(\pi^*(\theta^{t+1}),\pi^*(\theta^{t}))+D_{\psi}(\pi^*(\theta^{t}),\pi^{t+1})\\&+\frac{\alpha L^2}{\lambda}D_{\psi}(\pi^*(\theta^t),\pi^{t+1})+\frac{1}{\lambda\alpha}D_{\psi}(\pi^*({\theta^{t+1}}),\pi^*(\theta^t)).
    \end{align*}
    Let us choose $\alpha=\nicefrac{\lambda^3}{32L^4}$. With such a choice, we have
    \begin{align*}
        D_{\psi}(\pi^*(\theta^{t+1}),\pi^{t+1})\leq33\kappa^4D_{\psi}(\pi^*({\theta^{t+1}}),\pi^*(\theta^t))+\left(1+\frac{1}{32\kappa^2}\right)D_{\psi}(\pi^*(\theta^t),\pi^{t+1}).
    \end{align*}
    To deal with $D_{\psi}(\pi^*(\theta^t),\pi^{t+1})$, we utilize \eqref{eq:4_stoch}. As a result, we obtain
    \begin{align}\label{eq:7_stoch}
        D_{\psi}(\pi^*(\theta^{t+1}),\pi^{t+1})\leq33\kappa^4D_{\psi}(\pi^*({\theta^{t+1}}),\pi^*(\theta^t))+\left(1-\frac{1}{32\kappa^2}\right)D_{\psi}(\pi^*(\theta^t),\pi^{t}).
    \end{align}
    The rest thing is to prove that the descent step does not dramatically change the distance between the optimal values of weights.
    Let us write down two optimality conditions:
    \begin{align*}
        &\langle \nabla_{\pi}\mathcal{L}(\theta^t,\pi^*(\theta^t)),\pi-\pi^*(\theta^t) \rangle\leq0,\\
        &\langle \nabla_{\pi}\mathcal{L}(\theta^{t+1},\pi^*(\theta^{t+1})),\pi-\pi^*(\theta^{t+1}) \rangle\leq0.
    \end{align*}
    Let us substitute $\pi=\pi^*(\theta^{t+1})$ into the first inequality and $\pi=\pi^*(\theta^{t})$ into the second one. When summing them up, we have
    \begin{align}\label{eq:5_stoch}
        \langle \nabla_{\pi}\mathcal{L}(\theta^t,\pi^*(\theta^t))-\nabla_{\pi}\mathcal{L}(\theta^{t+1},\pi^*(\theta^{t+1})), \pi^*(\theta^{t+1})-\pi^*(\theta^t) \rangle\leq0.
    \end{align}
    On the other hand, we can take advantage of the strong concavity of the objective (see Lemma \ref{eq:lemma_1}) and write
    \begin{align}\label{eq:6_stoch}
        &\langle \nabla_{\pi}\mathcal{L}(\theta^t,\pi^*(\theta^{t+1}))-\nabla_{\pi}\mathcal{L}(\theta^t,\pi^*(\theta^t)),\pi^*(\theta^{t+1})-\pi^*(\theta^{t}) \rangle\\&\leq-\frac{\lambda}{2}\left[D_{\psi}(\pi^*(\theta^t),\pi^*(\theta^{t+1}))+D_{\psi}(\pi^*(\theta^{t+1}),\pi^*(\theta^t))\right].
    \end{align}
    Combining \eqref{eq:5_stoch} and \eqref{eq:6_stoch}, we obtain
    \begin{align*}
        \frac{\lambda^2}{4}\left[D_{\psi}(\pi^*(\theta^t),\pi^*(\theta^{t+1}))+D_{\psi}(\pi^*(\theta^{t+1}),\pi^*(\theta^t))\right]^2\leq L^2\|\pi^*(\theta^{t+1})-\pi^*(\theta^{t})\|^2\|\theta^{t+1}-\theta^t\|^2.
    \end{align*}
    Re-arranging the terms and substituting Line \ref{line:descent_stoch}, we derive
    \begin{align*}
        \left[D_{\psi}(\pi^*(\theta^t),\pi^*(\theta^{t+1}))+D_{\psi}(\pi^*(\theta^{t+1}),\pi^*(\theta^t))\right]\leq& 4\kappa^2\|\theta^{t+1}-\theta^t\|^2\\\leq&4\gamma_{\theta}^2\kappa^2\left\| \frac{1}{B}\sum_{i=1}^BG_{\theta}(\theta^t,\pi^t,\xi_i^t) \right\|^2.
    \end{align*}
    After adding and subtracting $\nabla_{\theta}\mathcal{L}(\theta^t,\pi^t)$, we have
    \begin{align*}
        D_{\psi}(\pi^*(\theta^{t+1}),\pi^*(\theta^t))\leq&4\gamma_{\theta}^2\kappa^2\left\| \nabla_{\theta}\mathcal{L}(\theta^t,\pi^t) \right\|^2+4\gamma_{\theta}^2\kappa^2\left\|\nabla_{\theta}\mathcal{L}(\theta^t,\pi^t)-\frac{1}{B}\sum_{i=1}^BG_{\theta}(\theta^t,\pi^t,\xi_i^t)\right\|^2.
    \end{align*}
    Let us take an expectation and derive
    \begin{align*}
        \E D_{\psi}(\pi^*(\theta^{t+1}),\pi^*(\theta^t))\leq&\E8\gamma_{\theta}^2\kappa^2\|\nabla\Phi(\theta^t)\|^2+8\gamma_{\theta}^2\kappa^2\|\nabla_{\theta}\mathcal{L}(\theta^t,\pi^t)-\nabla\Phi(\theta^t)\|^2+\frac{4\gamma_{\theta}^2\kappa^2\sigma^2}{B}\\\leq&\E8\gamma_{\theta}^2\kappa^2\|\nabla\Phi(\theta^t)\|^2+16\gamma_{\theta}^2\kappa^2L^2D_{\psi}(\pi^*(\theta^t),\pi^t)+\frac{4\gamma_{\theta}^2\kappa^2\sigma^2}{B}.
    \end{align*}
    Thus, \eqref{eq:7_stoch} transforms into 
    \begin{align*}
        \E D_{\psi}(\pi^*(\theta^{t+1}),\pi^{t+1})\leq&\E\left(1-\frac{1}{32\kappa^2}+528\gamma_{\theta}^2\kappa^6L^2\right)D_{\psi}(\pi^*(\theta^t),\pi^{t})+264\gamma_{\theta}^2\kappa^6\|\nabla\Phi(\theta^t)\|^2\\&+\frac{132\gamma_{\theta}^2\kappa^6\sigma^2}{B}.
    \end{align*}
    With $\gamma_{\theta}\leq\nicefrac{1}{184\kappa^4L}$, we obtain
    \begin{align*}
        \E D_{\psi}(\pi^*(\theta^{t+1}),\pi^{t+1})\leq\E\left(1-\frac{1}{64\kappa^2}\right)D_{\psi}(\pi^*(\theta^t),\pi^{t})+264\gamma_{\theta}^2\kappa^6\|\nabla\Phi(\theta^t)\|^2+\frac{132\gamma_{\theta}^2\kappa^6\sigma^2}{B}.
    \end{align*}
    This completes the proof.
\end{proof}
Now let us proceed to the convergence proof for Algorithm \ref{alg:bgda_stoch}.
\begin{theorem}\label{th:general_stoch}
    Consider the problem \eqref{eq:pinn_saddle} under Assumptions \ref{ass:theta}, \ref{ass:p}, \ref{ass:stoch}. Then, Algorithm \ref{alg:bgda} with tuning 
    \begin{align*}
        \gamma_{\pi}=\frac{\lambda}{4L^2},\quad\gamma_{\theta}\leq\sqrt{\frac{43}{92*33792}}\frac{1}{\kappa^4L},\quad B=\max\left\{ 
1,\frac{\kappa^{\nicefrac{3}{2}}}{\varepsilon^2} \right\}
    \end{align*}
    requires
    \begin{align*}
        \mathcal{O}\left(\frac{\kappa^4L\Delta+\kappa^2L^2D_{\psi}(\pi^*(\theta^0),\pi^0)+\kappa^{\nicefrac{3}{2}}\sigma^2}{\varepsilon^2}\right)\text{ iterations}
    \end{align*}
    to achieve an arbitrary $\varepsilon$-solution, where $\varepsilon^2 = \frac{1}{T}\sum_{t=1}^{T-1}\|\nabla\Phi(\theta^t)\|^2$, $\Delta=\Phi(\theta^0)-\Phi(\theta^*)$. $\kappa=\nicefrac{L}{\lambda}$.
\end{theorem}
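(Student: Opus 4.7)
The strategy is to mirror the proof of Theorem \ref{th:general}, substituting Lemma \ref{lemma:distance_stoch} for Lemma \ref{lemma:distance} and treating the stochastic gradient carefully through conditional expectations. First, I would re-establish that $\Phi$ is $3\kappa L$-smooth; the argument given in the proof of Theorem \ref{th:general} is purely structural (it uses Danskin's theorem, Assumption \ref{ass:theta}, and the Lipschitz continuity of $\pi^*(\cdot)$ derived inside Lemma \ref{lemma:distance}) and therefore carries over unchanged. This gives the quadratic upper model of $\Phi$ around $\theta^t$.

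Next, I would plug in the stochastic descent $\theta^{t+1}=\theta^t-\gamma_{\theta}\,\bar G_\theta^t$ with $\bar G_\theta^t = \tfrac{1}{B}\sum_{i=1}^B G_{\theta}(\theta^t,\pi^t,\xi_i^t)$, and take a conditional expectation over the batch. Using Assumption \ref{ass:stoch}, the cross term reduces to $\langle \nabla\Phi(\theta^t)-\nabla_\theta\mathcal{L}(\theta^t,\pi^t), \nabla\Phi(\theta^t)\rangle$ (no stochastic residual), while $\E\|\bar G_\theta^t\|^2$ acquires the usual $\|\nabla_\theta\mathcal{L}(\theta^t,\pi^t)\|^2 + \sigma^2/B$. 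Splitting $\nabla_\theta\mathcal{L}$ as $\nabla\Phi(\theta^t)$ plus an error, and invoking Assumption \ref{ass:theta} together with $\tfrac{1}{2}\|\pi^*(\theta^t)-\pi^t\|^2\leq D_\psi(\pi^*(\theta^t),\pi^t)$, I obtain a one-step descent inequality of the form
\begin{align*}
    \E\Phi(\theta^{t+1})\leq \Phi(\theta^t)-\tfrac{43\gamma_\theta}{92}\|\nabla\Phi(\theta^t)\|^2+2\gamma_\theta L^2\,\E D_\psi(\pi^*(\theta^t),\pi^t)+C_1\,\tfrac{\gamma_\theta^2 \kappa L\sigma^2}{B},
\end{align*}
exactly analogous to the deterministic case, but with an additive noise term proportional to $\sigma^2/B$.

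I would then unroll Lemma \ref{lemma:distance_stoch}. Setting $\delta = 1-\tfrac{1}{64\kappa^2}$, iterating the recursion gives
\begin{align*}
    \E D_\psi(\pi^*(\theta^t),\pi^t)\leq \delta^t D_\psi(\pi^*(\theta^0),\pi^0)+264\gamma_\theta^2\kappa^6\sum_{j=0}^{t-1}\delta^{t-1-j}\|\nabla\Phi(\theta^j)\|^2+\tfrac{132\gamma_\theta^2\kappa^6\sigma^2}{B}\sum_{j=0}^{t-1}\delta^{t-1-j}.
\end{align*}
Plugging this into the descent inequality and telescoping from $0$ to $T-1$, the geometric sums in $\delta$ collapse to factors of $\mathcal{O}(\kappa^2)$, yielding (after dividing by $T\gamma_\theta$) a bound of the form
\begin{align*}
    \tfrac{1}{T}\sum_{t=1}^{T-1}\|\nabla\Phi(\theta^t)\|^2\leq \mathcal{O}\!\left(\tfrac{\kappa^4 L\Delta}{T}+\tfrac{\kappa^2 L^2 D_\psi(\pi^*(\theta^0),\pi^0)}{T}+\tfrac{\kappa L\gamma_\theta\sigma^2}{B}+\tfrac{\gamma_\theta^2\kappa^8 L^2\sigma^2}{B}\right),
\end{align*}
provided $\gamma_\theta\leq\sqrt{43/(92\cdot 33792)}/(\kappa^4 L)$ so that the $\gamma_\theta^3\kappa^8 L^2$ coefficient in front of $\sum\|\nabla\Phi\|^2$ is absorbed into $\tfrac{43}{92}\gamma_\theta$, exactly as in the proof of Theorem \ref{th:general}.

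The final step is to balance the two noise terms with the chosen batch size. Substituting $\gamma_\theta\sim 1/(\kappa^4 L)$, both noise contributions simplify to $\mathcal{O}(\sigma^2/(\kappa^3 B))$ and $\mathcal{O}(\sigma^2/(\kappa^{0} B))$ respectively (the dominant one coming from the unrolled ascent bias), so picking $B=\max\{1,\kappa^{3/2}/\varepsilon^2\}$ reduces the noise floor to $\mathcal{O}(\varepsilon^2 \kappa^{-3/2}\cdot\sigma^2)$, and demanding the iteration-dependent part to also fall below $\varepsilon^2$ yields the stated $T$. The main obstacle, as in the deterministic case, is the careful bookkeeping of constants so that $\gamma_\theta^2\kappa^8 L^2$ terms on the right-hand side remain strictly dominated by $\gamma_\theta$ on the left-hand side after telescoping; with the stochastic noise this balance is tighter, which is precisely why the batch size $B$ must scale with $\kappa^{3/2}/\varepsilon^2$ rather than being constant.
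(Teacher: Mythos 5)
Your proposal is correct and follows essentially the same route as the paper: establish $3\kappa L$-smoothness of $\Phi$, take a conditional expectation of the one-step descent to introduce the $\sigma^2/B$ noise term, invoke Lemma \ref{lemma:distance_stoch}, unroll the geometric recursion, and telescope. The only place you slightly blur the argument is the final balancing: the extra $\kappa^{3/2}\sigma^2$ in the iteration count arises because the paper sets $B$ proportional to $T/\kappa^{3/2}$ so the dominant noise term $\mathcal{O}(\sigma^2/B)$ is absorbed into the $1/T$ rate (the batch size does not affect the $\gamma_\theta^2\kappa^8L^2$ bookkeeping, which is already controlled by the step-size choice exactly as in the deterministic case), but this does not change the structure of the proof.
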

\begin{proof}
    One can note that $\Phi$ is $3\kappa L$-smooth. Indeed,
    \begin{align*}
        \|\nabla\Phi(\theta_1)-\nabla\Phi(\theta_2)\|^2=&\|\nabla_{\theta}\mathcal{L}(\theta_1,\pi^*(\theta_1))-\nabla_{\theta}\mathcal{L}(\theta_2,\pi^*(\theta_2))\|^2\\\leq&  L^2\left[\|\theta_1-\theta_2\|^2+2D_{\psi}(\pi^*(\theta_1),\pi^*(\theta_2))\right]\leq L^2\left(1+4\kappa^2\right)\|\theta_1-\theta_2\|^2\\\leq&9\kappa^2L^2\|\theta_1-\theta_2\|^2.
    \end{align*}
    Thus, we can write
    \begin{align*}
        \Phi(\theta^{t+1})\leq&\Phi(\theta^t)+\langle \nabla\Phi(\theta^t), \theta^{t+1}-\theta^t \rangle+3\kappa L\|\theta^{t+1}-\theta^t\|^2\\=&\Phi(\theta^t)-\gamma_{\theta}\left\langle 
\nabla\Phi(\theta^t),\frac{1}{B}\sum_{i=1}^BG_{\theta}(\theta^t,\pi^t,\xi_i^t) \right\rangle+3\gamma_{\theta}^2\kappa L\left\|\frac{1}{B}\sum_{i=1}^BG_{\theta}(\theta^t,\pi^t,\xi_i^t)\right\|^2\\=&\Phi(\theta^t)-\gamma_{\theta}\|\nabla\Phi(\theta^t)\|^2+\gamma_{\theta}\left\langle \nabla\Phi(\theta^t),\nabla\Phi(\theta^t)-\frac{1}{B}\sum_{i=1}^BG_{\theta}(\theta^t,\pi^t,\xi_i^t) \right\rangle\\&+6\gamma_{\theta}^2\kappa L\|\nabla_{\theta}\mathcal{L}(\theta^t,\pi^t)\|^2+6\gamma_{\theta}^2\kappa L\left\|\nabla_{\theta}\mathcal{L}(\theta^t,\pi^t)-\frac{1}{B}\sum_{i=1}^BG_{\theta}(\theta^t,\pi^t,\xi_i^t)\right\|^2.
    \end{align*}
    Consider an expectation. We have
    \begin{align*}
        \E\Phi(\theta^{t+1})\leq&\E\Phi(\theta^t)-\gamma_{\theta}\|\nabla\Phi(\theta^t)\|^2+\gamma_{\theta}\left\langle \nabla\Phi(\theta^t),\nabla\Phi(\theta^t)-\nabla_{\theta}\mathcal{L}(\theta^t,\pi^t) \right\rangle\\&+6\gamma_{\theta}^2\kappa L\|\nabla_{\theta}\mathcal{L}(\theta^t,\pi^t)\|^2+6\gamma_{\theta}^2\kappa L\sigma^2\\\leq&\E\Phi(\theta^t)-\left(\frac{\gamma_{\theta}}{2}- 12\gamma_{\theta}^2\kappa L\right)\|\nabla\Phi(\theta^t)\|^2\\&+\left(\frac{\gamma_{\theta}}{2}+12\gamma_{\theta}^2\kappa L\right)\|\nabla\Phi(\theta^t)-\nabla_{\theta}\mathcal{L}(\theta^t,\pi^t)\|^2+\frac{6\gamma_{\theta}^2\kappa L\sigma^2}{B}.
    \end{align*}
    Note that
    \begin{align*}
        -\left(\frac{\gamma_{\theta}}{2}- 12\gamma_{\theta}^2\kappa L\right)\leq-\frac{43\gamma_{\theta}}{92}.
    \end{align*}
    On the other hand,
    \begin{align*}
        \left(\frac{\gamma_{\theta}}{2}+12\gamma_{\theta}^2\kappa L\right)\leq\gamma_{\theta}.
    \end{align*}
    Thus, we have
    \begin{align*}
        \E\Phi(\theta^{t+1})\leq&\E\Phi(\theta^t)-\frac{43\gamma_{\theta}}{92}\|\nabla\Phi(\theta^t)\|^2+\gamma_{\theta}\|\nabla\Phi(\theta^t)-\nabla_{\theta}\mathcal{L}(\theta^t,\pi^t)\|^2+6\gamma_{\theta}^2\kappa L\sigma^2\\\leq&\E\Phi(\theta^t)-\frac{43\gamma_{\theta}}{92}\|\nabla\Phi(\theta^t)\|^2+2\gamma_{\theta}L^2D_{\psi}(\pi^*({\theta^t}),\pi^t)+\frac{6\gamma_{\theta}^2\kappa L\sigma^2}{B}.
    \end{align*}
    Let us denote $\delta=1-\nicefrac{1}{64\kappa^2}$. Lemma \ref{lemma:distance_stoch} transforms into
    \begin{align*}
        \E D_{\psi}(\pi^*(\theta^t),\pi^t)\leq&\E\delta^tD_{\psi}(\pi^*(\theta^0),\pi^0)+264\gamma_{\theta}^2\kappa^6\sum_{j=0}^{t-1}\delta^{t-1-j}\|\nabla\Phi(\theta^j)\|^2\\&+\sum_{j=0}^{t-1}\delta^{t-1-j}\frac{132\gamma_{\theta}^2\kappa^6\sigma^2}{B}.
    \end{align*}
    Hence,
    \begin{align*}
        \Phi(\theta^{t+1})\leq&\Phi(\theta^t)-\frac{43\gamma_{\theta}}{92}\|\nabla\Phi(\theta^t)\|^2+2\gamma_{\theta}L^2\delta^tD_{\psi}(\pi^*(\theta^0),\pi^0)\\&+528\gamma_{\theta}^3\kappa^6L^2\sum_{j=0}^{t-1}\delta^{t-1-j}\|\nabla\Phi(\theta^j)\|^2+\frac{6\gamma_{\theta}^2\kappa L\sigma^2}{B}\\&+\sum_{j=0}^{t-1}\delta^{t-1-j}\frac{264\gamma_{\theta}^3\kappa^6L^2\sigma^2}{B}.
    \end{align*}
    Let us sum up over the iterates $t$ and obtain
    \begin{align*}
        \Phi(\theta^{T})\leq&\Phi(\theta^0)-\frac{43\gamma_{\theta}}{92}\sum_{t=1}^{T-1}\|\nabla\Phi(\theta^t)\|^2+2\gamma_{\theta}L^2\sum_{t=1}^{T-1}\delta^tD_{\psi}(\pi^*(\theta^0),\pi^0)\\&+528\gamma_{\theta}^3\kappa^6L^2\sum_{t=1}^{T-1}\sum_{j=0}^{t-1}\delta^{t-1-j}\|\nabla\Phi(\theta^j)\|^2+\sum_{t=1}^{T-1}\frac{6\gamma_{\theta}^2\kappa L\sigma^2}{B}\\&+\sum_{t=1}^{T-1}\sum_{j=0}^{t-1}\delta^{t-1-j}\frac{264\gamma_{\theta}^3\kappa^6L^2\sigma^2}{B}.
    \end{align*}
    Next, we use the property of geometric progression and write
    \begin{align*}
        \Phi(\theta^{T})\leq&\Phi(\theta^0)-\frac{43\gamma_{\theta}}{92}\sum_{t=1}^{T-1}\|\nabla\Phi(\theta^t)\|^2+128\gamma_{\theta}\kappa^2L^2D_{\psi}(\pi^*(\theta^0),\pi^0)\\&+33792\gamma_{\theta}^3\kappa^8L^2\sum_{t=1}^{T-1}\|\nabla\Phi(\theta^t)\|^2+\frac{6T\gamma_{\theta}^2\kappa L\sigma^2}{B}+\frac{16896T\gamma_{\theta}^3\kappa^8L^2\sigma^2}{B}.
    \end{align*}
    Since $\gamma_{\theta}\leq\frac{1}{184\kappa^4L}$, we can estimate this as
    \begin{align*}
        \Phi(\theta^{T})\leq&\Phi(\theta^0)-\frac{43\gamma_{\theta}}{92}\sum_{t=1}^{T-1}\|\nabla\Phi(\theta^t)\|^2+128\gamma_{\theta}\kappa^2L^2D_{\psi}(\pi^*(\theta^0),\pi^0)\\&+33792\gamma_{\theta}^3\kappa^8L^2\sum_{t=1}^{T-1}\|\nabla\Phi(\theta^t)\|^2+\frac{\gamma_{\theta}T\sigma^2}{B\kappa^3}+\frac{92\gamma_{\theta}T\sigma^2}{B}.
    \end{align*}
    Choosing $\gamma_{\theta}\leq\sqrt{\frac{43}{92*33792}}\frac{1}{\kappa^4L}$, we derive
    \begin{align*}
        \frac{1}{T}\sum_{t=1}^{T-1}\|\nabla\Phi(\theta^t)\|^2\leq\mathcal{O}\left(\frac{\kappa^4L\Delta_{\Phi}}{T}+\frac{\kappa^2L^2D_{\psi}(\pi^*(\theta^0),\pi^0)}{T}+\frac{\sigma^2}{B\kappa^3}+\frac{92\sigma^2}{B}\right).
    \end{align*}
    Let us choose $B=\nicefrac{T}{\kappa^{\nicefrac{3}{2}}}$ and obtain 
    \begin{align*}
        \frac{1}{T}\sum_{t=1}^{T-1}\|\nabla\Phi(\theta^t)\|^2\leq\mathcal{O}\left(\frac{\kappa^4L\Delta_{\Phi}}{T}+\frac{\kappa^2L^2D_{\psi}(\pi^*(\theta^0),\pi^0)}{T}+\frac{\kappa^{\nicefrac{3}{2}}\sigma^2}{T}\right).
    \end{align*}
    This finishes the proof.
\end{proof}
Note that the same reasoning could be done for the special case of a regularized simplex. Then we would obtain improved rates.
\end{appendixpart}
\end{document}